\documentclass{article} 
\usepackage{iclr2027_conference,times}

\usepackage{booktabs}
\usepackage{tabularx}
\usepackage{array}
\usepackage{amssymb} 
\usepackage{amsmath}
\usepackage{booktabs}
\usepackage{graphicx}
\usepackage{bbm}
\usepackage{amssymb}
\usepackage{amsthm}
\usepackage{placeins}
\usepackage{fvextra}
\usepackage{amsmath,amsfonts,bm}

\def\eqref#1{equation~\ref{#1}}

\def\1{\bm{1}}

\DeclareMathAlphabet{\mathsfit}{\encodingdefault}{\sfdefault}{m}{sl}
\SetMathAlphabet{\mathsfit}{bold}{\encodingdefault}{\sfdefault}{bx}{n}

\usepackage{booktabs}
\usepackage{hyperref}
\usepackage{url}
\usepackage{graphicx}
\usepackage{amsthm}

\newtheorem{theorem}{Theorem}

\newtheorem{corollary}[theorem]{Corollary}

\theoremstyle{definition}

\theoremstyle{remark}

\title{The GUI Is Not the State: Diagnosing State Aliasing in GUI World Models}

\author{
\textbf{Dongsheng Liu}\textsuperscript{1,2}\thanks{Equal contribution.}\quad
\textbf{Chao Jin}\textsuperscript{2,3}\footnotemark[1]\quad
\textbf{Wenkui Yang}\textsuperscript{1}\quad
\textbf{Hejin Wang}\textsuperscript{4}\quad
\textbf{Junwei Yang}\textsuperscript{4}
\\
\textbf{ Zeren Zhang}\textsuperscript{4}\quad
\textbf{Ziwei Chen}\textsuperscript{4}\quad
\textbf{Huaibo Huang}\textsuperscript{2,3}\quad
\textbf{Jie Cao}\textsuperscript{2,3}\quad
\textbf{Ran He}\textsuperscript{1,2,3}
\\[0.7em]
\textsuperscript{1}School of Advanced Interdisciplinary Sciences,
University of Chinese Academy of Sciences
\\
\textsuperscript{2}MAIS\&NLPR, Institute of Automation,
Chinese Academy of Sciences
\\
\textsuperscript{3}School of Artificial Intelligence, University of Chinese Academy of Sciences
\\
\textsuperscript{4}Huawei Noah's Ark Lab
}

\iclrfinalcopy

\begin{document}

\maketitle
\lhead{}
\chead{}
\rhead{}

\begin{abstract}
GUI World Models (GUI-WMs) are increasingly used to predict future \emph{states} for agent planning and simulation, yet most existing formulations condition only on the current GUI \emph{observation} and \emph{action}. We identify \emph{state aliasing}, where the visible interface omits transition-relevant environment state, so
identical observable conditions can correspond to different valid futures.
To diagnose this failure mode, we introduce \textbf{StateAliasBench}, a
diagnostic benchmark that explicitly isolates such ambiguities via strict pairing. We further
propose lightweight predictive-state recovery that infers structured
state from history and augments otherwise frozen GUI-WMs through
a deterministic state interface. Family-specific specialists provide state recovery across heterogeneous state types, and multi-teacher
distillation consolidates them into a single unified estimator. Experiments show that existing GUI-WMs exhibit systematic failures under
observation-only conditioning, while predictive-state augmentation
substantially restores state-sensitive prediction across evaluated WMs,
preserves generative fidelity, and improves downstream performance of GUI agents on AndroidWorld. These results suggest that reliable GUI world
modeling should account not only for what is visible, but also for the
hidden transition state that determines what happens next.
\end{abstract}

\section{Introduction}

Multimodal Large Language Models (MLLMs) \citep{qwen3.8,opus5.5,gpt-6,kimi-k3} have substantially advanced GUI agents \citep{aguvis,os-atlas,uitars,opencua,guiowl} across web \citep{webarena, mind2web, visualwebarena}, mobile \citep{androidworld,mobileworld}, and desktop \citep{osworld,osworld-2} environments. Recent work further equips these agents with GUI World Models (GUI-WMs) \citep{luo2025vimo,koh2026gworld,zheng2026code2world,xu2026howmobileworldmodel}, which predict future interfaces under candidate actions and support planning, action selection, and simulated interaction.

Most existing GUI-WMs adopt an observation-conditioned formulation, predicting the next GUI from only the current \emph{observation} and \emph{action}, since complete environment \emph{state} is difficult to collect at scale. However, the current GUI need not contain all transition-relevant information. For example, after an agent moves a file, the resulting file-system state may no longer be visible, causing an observation-only world model to later predict the destination without the moved file. An agent can proactively recover missing information through further interaction, whereas the world model must predict from its available conditioning.

We identify this observation--state mismatch as \textbf{state aliasing} \citep{chrisman1992perceptual,subramanian2022approximate,halluclear}. Identical observable conditions can correspond to different transition-relevant states and therefore different valid futures: \emph{what is currently observed} is not always \emph{what state is currently true} (Figure~\ref{fig:state_aliasing_teaser}, left).

\begin{figure*}[t]
    \centering
    \includegraphics[width=\textwidth]{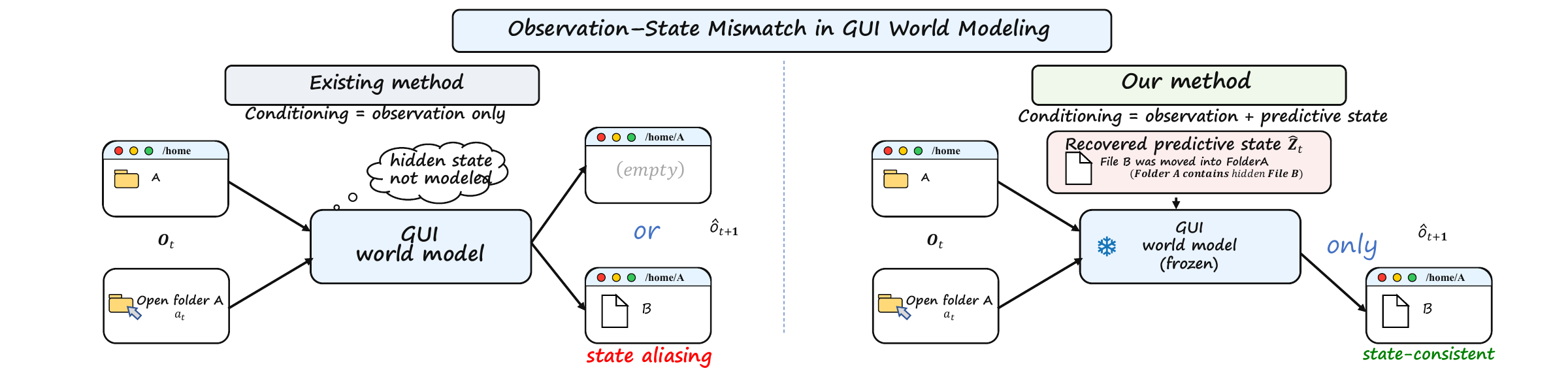}
    \caption{
    \textbf{State aliasing and predictive-state augmentation.}
    Observation-only conditioning can alias histories with different
    transition-relevant states (left). We recover predictive state from history
    and use it to condition a frozen GUI world model for state-consistent
    prediction (right).
    }
    \label{fig:state_aliasing_teaser}
\end{figure*}

To systematically diagnose this failure mode, we construct \textbf{StateAliasBench}, a controlled benchmark built around \emph{strict pairs}, since conventional trajectory-level evaluation protocols sidestep state aliasing. In contrast, StateAliasBench constructs such cases explicitly: two real interaction branches provide the world model with the same current observation and next action, while differing in transition-relevant hidden state and requiring different validated future outcomes. The benchmark covers four qualitatively different state families with 25 causal templates and 800 verified strict pairs. This design isolates a fundamental question that conventional prediction metrics may miss: whether the visible GUI provides sufficient state information to determine which future should occur.

Our experiments show that this distinction matters substantially. Representative GUI-WMs fail to distinguish strict-pair transitions under standard observation-only conditioning. The raw interaction history recovers only a limited portion of these failures, whereas providing the correct transition-relevant state restores most of the missing predictive capability, indicating that more context is not necessarily a sufficient state representation. Motivated by this diagnosis, we introduce a lightweight state-augmentation component that recovers structured predictive state from interaction history and conditions the original frozen GUI world model (Figure~\ref{fig:state_aliasing_teaser}, right). To support heterogeneous state types within a single inference-time model, we train family-specific specialists and consolidate their knowledge through multi-teacher distillation.

The resulting predictive-state augmentation substantially improves state-sensitive future prediction while preserving next-observation fidelity, and further improves two WM-assisted agents on AndroidWorld. These results indicate that state aliasing is not only a controlled diagnostic phenomenon: explicitly representing transition-relevant state can improve the practical reliability of world-model-assisted GUI agents.

In summary, our contributions are threefold:
\begin{itemize}
\item We identify \textbf{state aliasing} as a structural limitation of observation-conditioned GUI world modeling, where the current GUI can omit transition-relevant environment state and consequently induce incorrect future predictions.

\item We introduce \textbf{StateAliasBench}, a controlled strict-pair framework that exposes state ambiguities overlooked by conventional trajectory-level evaluation and directly tests whether GUI world models possess sufficient predictive state.

\item We develop a \textbf{lightweight predictive-state augmentation} mechanism for frozen GUI world models, substantially improving state-sensitive prediction and downstream WM-assisted agents without retraining the underlying world model.

\end{itemize}
\section{State Aliasing: Formulation and Diagnosis}
\label{sec:state_aliasing}

\subsection{State Aliasing and Strict-Pair Diagnosis}
\label{sec:formulation}

\textbf{Observation-conditioned GUI world modeling.}
Following the standard partially observable formulation used for GUI agents,
we consider a controlled process with latent environment state
$S_t\in\mathcal S$, action $A_t\in\mathcal A$, and model-visible GUI
observation $O_t\in\mathcal O$:
\begin{equation}
S_{t+1}\sim T(\cdot\mid S_t,A_t),
\qquad
O_t\sim\Omega(\cdot\mid S_t),
\qquad
H_t=(O_0,A_0,\ldots,A_{t-1},O_t),
\label{eq:gui_dynamics}
\end{equation}
where $H_t$ denotes the interaction history available up to time $t$.

For a reachable history $h$ and intervention $a$, define the
\textbf{controlled next-observation law}
\begin{equation}
P_h^a(B)
\triangleq
\Pr\!\left(
O_{t+1}\in B
\mid
H_t=h,\operatorname{do}(A_t=a)
\right),
\label{eq:controlled_future}
\end{equation}
for any measurable $B\subseteq\mathcal O$.
Existing observation-conditioned GUI-WMs approximate this law using only the
current GUI and action,
\begin{equation}
q_\theta(\cdot\mid O_t,A_t)
\approx
P_{H_t}^{A_t}(\cdot).
\label{eq:obs_wm}
\end{equation}
This formulation is sufficient only when the current observation preserves all
information required to distinguish the relevant future transition laws.

\textbf{State aliasing.}
We distinguish \textbf{observation equivalence} from
\textbf{predictive equivalence}:
\begin{equation}
\begin{aligned}
h\sim_{\mathrm{obs}}h'
&\iff
O_t(h)=O_t(h'),\\
h\sim_{\mathrm{pred}}h'
&\iff
P_h^a=P_{h'}^a,
\qquad
\forall a\in\mathcal A_c,
\end{aligned}
\label{eq:two_equivalences}
\end{equation}
where $\mathcal A_c$ denotes the action scope under consideration.
Observation-conditioned prediction implicitly assumes that observationally
equivalent histories are also predictively equivalent. We define
\textbf{state aliasing} as a violation of this condition:
\begin{equation}
h^A\sim_{\mathrm{obs}}h^B,
\qquad
h^A\not\sim_{\mathrm{pred}}h^B.
\label{eq:state_aliasing}
\end{equation}
Thus, two histories may expose the same current GUI while differing in
transition-relevant state, such that at least one shared action induces
different controlled future laws.

An observation-only WM must assign the same conditional prediction to the two
histories although their true controlled future laws differ. Under log loss,
the best shared prediction therefore incurs
\begin{equation}
\min_q
\sum_{b\in\{A,B\}}
\pi_b
D_{\mathrm{KL}}
\!\left(
P_{h^b}^{a}
\Vert
q
\right)
=
\operatorname{JS}_{\boldsymbol{\pi}}
\!\left(
P_{h^A}^{a},
P_{h^B}^{a}
\right)
>
0,
\label{eq:aliasing_js}
\end{equation}
for $\pi_A,\pi_B>0$, $\pi_A+\pi_B=1$, whenever
$P_{h^A}^{a}\neq P_{h^B}^{a}$. Thus, the ambiguity arises from insufficient
conditioning information rather than from a particular prediction
architecture.

\textbf{Strict-pair diagnosis.}
The complete controlled transition laws are not directly observable in real
applications. We therefore use \textbf{strict pairs} as controlled empirical
witnesses of state aliasing. For each state family $f$, let
$\widetilde Z_t^f=\zeta_f(S_t)$ denote the structured transition-relevant state
used by the benchmark, and let
\[
Y_{t+1}^f
=
\psi_f\!\left(\operatorname{Parse}(O_{t+1})\right)
\]
denote the corresponding state-dependent semantic outcome.
Ground-truth values of $\widetilde Z_t^f$ are obtained from read-only
environment evidence and are used only for construction, validation, and
supervision; they are never included in the observation-only WM input.

A \textbf{strict pair} $\mathcal P_i^f$ consists of two real environment
branches $A$ and $B$ satisfying
\begin{equation}
\underbrace{
O_{i,t}^{A}=O_{i,t}^{B},
\qquad
A_{i,t}^{A}=A_{i,t}^{B}=a_i
}_{\text{matched model-visible condition}}
\qquad
\underbrace{
\widetilde Z_{i,t}^{f,A}
\neq
\widetilde Z_{i,t}^{f,B}
}_{\text{different hidden state}}
\qquad
\underbrace{
Y_{i,t+1}^{f,A}
\neq
Y_{i,t+1}^{f,B}
}_{\text{different semantic outcome}}.
\label{eq:strict_pair}
\end{equation}
The matched observation and action remove branch information from the
observation-only prediction input; the state contrast establishes the
transition-relevant distinction under diagnosis; and the semantic contrast
verifies that the shared action exposes this distinction in the realized
environment outcome. Importantly, the final constraint is defined through
family-specific semantic validation rather than arbitrary screenshot
inequality. Strict pairs therefore provide semantically validated witnesses of
observation--state mismatch without requiring estimation of the full
transition kernel.

\subsection{StateAliasBench}
\label{sec:statealiasbench}

\textbf{State families.}
StateAliasBench instantiates the strict-pair protocol through programmatic
interaction with real Android applications. It covers four qualitatively
different forms of transition-relevant state:
\textbf{Persistent Artifact}, \textbf{Clipboard},
\textbf{Ordered Collection}, and \textbf{Temporal Deadline}.
Table~\ref{tab:state_families} summarizes their operational structured states
and semantic outcomes.

\begin{table*}[t]
\centering
\small
\caption{State families in StateAliasBench. $\bot$ denotes absence or an empty
state.}
\label{tab:state_families}
\begin{tabular}{lcll}
\toprule
\textbf{Family}
& \textbf{\# Templates}
& \textbf{Structured state $\widetilde Z_t^f$}
& \textbf{Semantic outcome $Y_{t+1}^f$} \\
\midrule

Persistent Artifact
& 9
& $m_t:\mathcal X\rightarrow\mathcal L\cup\{\bot\}$
& object existence / location \\

Clipboard
& 5
& $c_t\in\mathcal V\cup\{\bot\}$
& pasted payload \\

Ordered Collection
& 5
& $\mathbf q_t=(x_1,\ldots,x_n)\in\mathcal X^\ast$
& effective ordered sequence \\

Temporal Deadline
& 6
& $(\ell_t,\rho_t)$
& timer fire / lifecycle outcome \\

\bottomrule
\end{tabular}
\end{table*}

Detailed family definitions and causal templates are provided in
Appendix~\ref{app:statealiasbench}.

\paragraph{Benchmark construction.}
Each template constructs two histories with distinct $\widetilde Z_t^f$,
matches their complete WM-visible observation and next action at a common
prediction boundary, and independently validates the resulting semantic
outcomes in the real environment. Equality is checked after the evaluated
WM's exact input transformation, while state annotations and targets come
from read-only environment evidence. StateAliasBench contains 25 causal
templates and 800 verified strict pairs; full construction and validation
details are provided in Appendix~\ref{app:statealiasbench}.

\textbf{Generalization settings.}
We additionally evaluate cross-application transfer, long-history stress
($d\in\{4,8,16,32\}$), and semantic interference; full protocols are provided
in Appendix~\ref{app:robustness}.

We also audit train--evaluation separation and find no duplicate examples or
complete trajectories; full criteria and results are provided in
Appendix~\ref{app:statealiasbench}.
\section{Predictive State Recovery}
\label{sec:predictive_state}

We recover transition-relevant predictive state from interaction history to
augment frozen GUI world models.

\begin{figure*}[t]
    \centering
    \includegraphics[width=\textwidth]{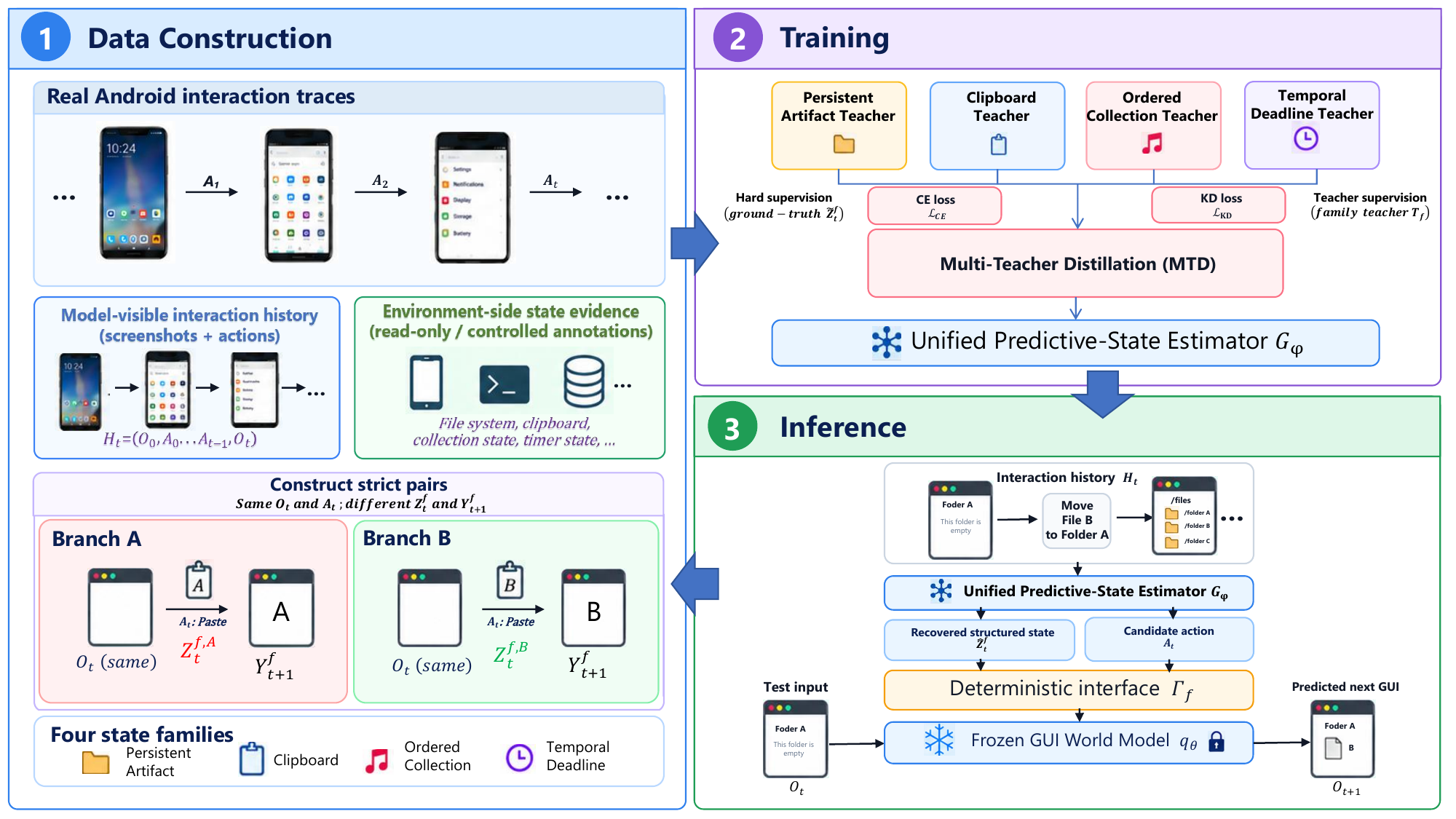}
    \caption{
    \textbf{Overview of our framework.}
    We construct strict pairs from real Android interactions, distill
    family-specific state specialists into a unified estimator $G_\phi$,
    recover $\widehat Z_t^f$ from history, and condition a frozen GUI world
    model through a deterministic interface $\Gamma_f$.
    }
    \label{fig:framework}
\end{figure*}

\subsection{Predictive Information State}
\label{sec:predictive_information_state}

Predictive equivalence from Section~\ref{sec:formulation} need not preserve the
current observation. To characterize the information missing \emph{given the
current GUI}, we define the observation-preserving predictive equivalence
relation
\begin{equation}
h\sim_{+}h'
\iff
O_t(h)=O_t(h')
\;\land\;
P_h^a=P_{h'}^a,
\qquad
\forall a\in\mathcal A_c .
\label{eq:obs_predictive_equivalence}
\end{equation}

The resulting equivalence class defines the canonical predictive representation
\begin{equation}
X_t^\star
=
[H_t]_{+}
\cong
(O_t,Z_t^\star),
\label{eq:canonical_predictive_state}
\end{equation}
where $Z_t^\star$ contains the residual predictive information beyond $O_t$,
distinguishing observationally aliased histories that require different future
transition laws.

\begin{theorem}[Predictive sufficiency and minimality]
\label{thm:predictive_sufficiency}
For every $a\in\mathcal A_c$, the canonical representation $X_t^\star$
preserves the controlled next-observation law:
\begin{equation}
p(O_{t+1}\mid H_t,\operatorname{do}(a))
=
p(O_{t+1}\mid X_t^\star,\operatorname{do}(a)).
\label{eq:predictive_sufficiency}
\end{equation}
Moreover, let $R_t=r(H_t)$ be any deterministic representation such that
$(O_t,R_t)$ is predictively sufficient over the same action scope. Then there
exists a deterministic map $f$ such that
\begin{equation}
X_t^\star
=
f(O_t,R_t).
\label{eq:predictive_minimal_map}
\end{equation}
Whenever the corresponding conditional entropies are finite,
\begin{equation}
H(Z_t^\star\mid O_t)
\le
H(R_t\mid O_t).
\label{eq:predictive_minimality}
\end{equation}
\end{theorem}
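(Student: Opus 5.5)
The argument is essentially the standard one for minimal sufficient statistics, applied to the equivalence relation $\sim_+$ of \eqref{eq:obs_predictive_equivalence}; there are three steps.

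\textbf{Step 1: sufficiency.} Since $X_t^\star=[H_t]_+$ is a deterministic function of $H_t$, the tower property gives
\[
p(O_{t+1}\mid X_t^\star=x,\operatorname{do}(a))=\mathbb E\bigl[P_{H_t}^a\mid [H_t]_+=x\bigr].
\]
By definition of $\sim_+$, every history $h$ in the class $x$ has the same law $P_h^a$ for all $a\in\mathcal A_c$. The conditional expectation therefore averages a constant, so it equals $P_{H_t}^a$. This is exactly \eqref{eq:predictive_sufficiency}. The intervention $\operatorname{do}(a)$ does not alter the distribution of $H_t$, so conditioning on functions of the history commutes with it.

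\textbf{Step 2: the map $f$.} I would define $f(o,r)=[h]_+$ for any reachable $h$ with $O_t(h)=o$ and $r(h)=r$. The key step, and the main obstacle, is showing this is well defined. Take $h,h'$ with $(O_t(h),r(h))=(O_t(h'),r(h'))$. Predictive sufficiency of $(O_t,R_t)$ means
\[
P_h^a=p(O_{t+1}\mid O_t=o,R_t=r,\operatorname{do}(a))=P_{h'}^a \quad\text{for all } a\in\mathcal A_c.
\]
Together with $O_t(h)=O_t(h')$, this gives $h\sim_+h'$, so the value of $f$ does not depend on the choice of $h$. The care needed is measure-theoretic. Sufficiency holds only almost surely, so I would state the argument on reachable histories (positive-probability histories, which is the intended setting), or else show $f$ is well defined off a null set. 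I would also check that $f$ is measurable, for instance by working with countable $\mathcal O$ or with regular conditional probabilities.

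\textbf{Step 3: entropy inequality.} Under the identification \eqref{eq:canonical_predictive_state}, $X_t^\star\cong(O_t,Z_t^\star)$, so $Z_t^\star$ is a deterministic function of $(O_t,R_t)$. Hence $H(Z_t^\star\mid O_t,R_t)=0$, and
\[
H(Z_t^\star\mid O_t)\le H(Z_t^\star,R_t\mid O_t)=H(R_t\mid O_t)+H(Z_t^\star\mid O_t,R_t)=H(R_t\mid O_t),
\]
using the chain rule and the finiteness assumption to justify the subtractions.
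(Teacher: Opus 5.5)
Your proposal is correct and follows the paper's proof essentially step for step: sufficiency holds because $P_h^a$ is constant on each $\sim_{+}$ class (your tower-property phrasing makes the paper's class-indexed kernel $\kappa_a$ explicit), minimality follows because equal $(O_t,R_t)$ forces $h\sim_{+}h'$, and the entropy bound comes from $H(Z_t^\star\mid O_t,R_t)=0$. The only difference is cosmetic: the paper writes $H(Z_t^\star\mid O_t)=I(Z_t^\star;R_t\mid O_t)\le H(R_t\mid O_t)$, whereas you use monotonicity together with the chain rule for $H(Z_t^\star,R_t\mid O_t)$, which gives the same inequality.
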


Thus, strict-pair branches with different future laws belong to distinct
$\sim_{+}$ equivalence classes; the proof is provided in
Appendix~\ref{app:predictive_state_proofs}.

At the distribution level, the amount of predictive information omitted by
the current GUI is
\begin{equation}
\begin{aligned}
\mathcal G
&\triangleq
I(H_t;O_{t+1}\mid O_t,A_t) \\
&=
H(O_{t+1}\mid O_t,A_t)
-
H(O_{t+1}\mid H_t,A_t).
\end{aligned}
\label{eq:predictive_information_gap}
\end{equation}
Thus, $\mathcal G>0$ is the distribution-level counterpart of
Eq.~\ref{eq:aliasing_js}: the history contains transition-relevant information
not determined by the current observation. Throughout this work,
\emph{predictive sufficiency} refers only to the one-step controlled transition
over $\mathcal A_c$; long-history evaluation tests recovery of more distant
evidence rather than multi-step sufficiency of $X_t^\star$.

\subsection{Learning a Structured Predictive State}
\label{sec:learning_predictive_state}

The canonical residual state $Z_t^\star$ induced by
Eq.~\ref{eq:obs_predictive_equivalence} is not directly available as a
supervision target. StateAliasBench therefore provides a family-specific
operational structured state $\widetilde Z_t^f$.

We do not assume that $\widetilde Z_t^f$ is identical to $Z_t^\star$.
Instead, we quantify the predictive information that remains missing after
supplying the structured state by
\begin{equation}
\delta_f
\triangleq
I(H_t;O_{t+1}
\mid
O_t,\widetilde Z_t^f,A_t).
\label{eq:family_residual_information}
\end{equation}
When $\delta_f=0$, $(O_t,\widetilde Z_t^f)$ is predictively sufficient over
the evaluated scope; otherwise, transition-relevant information remains
missing. This yields the practical approximation chain
\begin{equation}
Z_t^\star
\;\longrightarrow\;
\widetilde Z_t^f
\;\longrightarrow\;
\widehat Z_t^f,
\label{eq:state_approximation_chain}
\end{equation}
where $\widetilde Z_t^f$ is the supervised family-specific approximation and
$\widehat Z_t^f$ its learned recovery.

We train one specialist teacher $T_f$ for each state family and distill their
family-specific knowledge into a unified estimator $G_\phi$. Given interaction
history $H_t$ and a family/schema instruction $q_f$, the estimator predicts
\begin{equation}
\widehat Z_t^f
=
\operatorname{Decode}\,
G_\phi(H_t,q_f).
\label{eq:state_estimator}
\end{equation}
The target is the canonical JSON representation of $\widetilde Z_t^f$.
The unified estimator is used for single-model deployment, while the
family-specific specialists are retained as diagnostic baselines.

For a target JSON sequence $y=(y_1,\ldots,y_L)$, the hard-label objective is
\begin{equation}
\mathcal L_{\mathrm{CE}}
=
-\frac{1}{L}
\sum_{j=1}^{L}
\log
p_\phi(y_j\mid x,y_{<j}),
\label{eq:state_ce}
\end{equation}
where $x=(H_t,q_f)$ and only assistant-side JSON tokens contribute to the
training loss.

For an example belonging to family $f$, the corresponding specialist $T_f$
additionally provides a soft target. Let
$\widetilde p_{T_f}^{\tau,K}(v\mid x,y_{<j})$ denote the temperature-scaled
teacher distribution restricted to and renormalized over its top-$K$ tokens.
The distillation loss is
\begin{equation}
\mathcal L_{\mathrm{KD}}
=
-\frac{\tau^2}{L}
\sum_{j=1}^{L}
\sum_{v\in\mathcal K_j}
\widetilde p_{T_f}^{\tau,K}(v\mid x,y_{<j})
\log
p_\phi^\tau(v\mid x,y_{<j}).
\label{eq:kd_loss}
\end{equation}

The unified estimator combines the exact structured-state target with the
corresponding family specialist's soft distribution:
\begin{equation}
\boxed{
\mathcal L_{\mathrm{MTD}}^{(i)}
=
w_{f_i}
\!\left[
(1-\lambda)\mathcal L_{\mathrm{CE}}^{(i)}
+
\lambda\mathcal L_{\mathrm{KD}}^{(i)}
\right].
}
\label{eq:mtd_loss}
\end{equation}
We use $\lambda=0.25$, $\tau=2$, and $K=64$, with family-balancing weights
$w_f$ defined in Appendix~\ref{app:state_training_data}. Each example receives
soft supervision only from its corresponding family specialist. Additional
optimization and implementation details are provided in
Appendix~\ref{app:state_recovery}.

\subsection{State-Augmented GUI World Modeling}
\label{sec:state_augmented_wm}

Because the evaluated WMs are frozen and were not trained to consume our
structured state schema directly, the recovered state must be expressed in a
form compatible with their native prediction interfaces. We therefore use a
fixed deterministic state-conditioning interface. For family $f$, we write
\begin{equation}
C_t^f
=
\Gamma_f(\widehat Z_t^f,A_t),
\label{eq:state_interface}
\end{equation}
where $\Gamma_f$ denotes the \emph{complete} model-compatible realization of
the recovered state. It resolves the action-relevant consequence implied by
the recovered current state and candidate action, and expresses this
information through family- and WM-compatible surface constraints and
serialization. The interface introduces no additional environment
information: all branch-dependent content is deterministically derived from
the recovered state and prediction-time inputs.

For compactness, Eq.~\ref{eq:state_interface} suppresses fixed task context
and WM-specific serialization. The implemented interface is
\begin{equation}
C_t^{f,m}
=
\Gamma_{f,m}(\widehat Z_t^f,A_t,c_t),
\label{eq:implemented_state_interface}
\end{equation}
where $m$ identifies the WM-specific realization and $c_t$ denotes
prediction-time task/entity context available independently of the hidden
state value. Appendix~\ref{app:state_wm_interface} specifies the complete
interface, distinguishes its action-resolved consequence from its
model-compatible realization, and evaluates progressively richer
realizations of the same recovered state.

The implemented state-augmented WM is therefore
\begin{equation}
H_t
\xrightarrow{G_\phi}
\widehat Z_t^f
\xrightarrow{\Gamma_f(\cdot,A_t)}
C_t^f,
\qquad
q_\theta(O_{t+1}\mid O_t,A_t,C_t^f).
\label{eq:state_augmented_wm}
\end{equation}
Oracle State and Predictive State use the same complete deterministic
interface and differ only in the source of the supplied state value.

\paragraph{Predictive-risk decomposition.}
To separate information missing from the supplied representation from the
ability of a frozen WM to exploit that representation, let
$U_t=u(H_t,A_t)$ denote any deterministic conditioning variable supplied in
addition to $(O_t,A_t)$. We define
\begin{equation}
\mathcal L(\theta,U)
=
\mathbb E
\left[
-\log q_\theta(O_{t+1}\mid O_t,A_t,U_t)
\right],
\qquad
\mathcal L_H^\star
=
H(O_{t+1}\mid H_t,A_t).
\label{eq:conditioning_risk}
\end{equation}

\begin{theorem}[Predictive-risk decomposition]
\label{thm:risk_decomposition}
For any deterministic conditioning variable $U_t=u(H_t,A_t)$,
\begin{align}
\mathcal L(\theta,U)-\mathcal L_H^\star
&=
\underbrace{
I(H_t;O_{t+1}\mid O_t,U_t,A_t)
}_{\delta_{\mathrm{rep}}(U)}
\nonumber\\[-1mm]
&\quad+
\underbrace{
\mathbb E
D_{\mathrm{KL}}
\!\left(
p(O_{t+1}\mid O_t,U_t,A_t)
\Vert
q_\theta(O_{t+1}\mid O_t,U_t,A_t)
\right)
}_{\epsilon_{\mathrm{WM}}(U)} .
\label{eq:risk_decomposition}
\end{align}
\end{theorem}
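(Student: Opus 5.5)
The plan is to add and subtract the entropy of $O_{t+1}$ under the \emph{true} conditional law given the supplied conditioning, $H(O_{t+1}\mid O_t,U_t,A_t)$. This splits the excess risk into two pieces: one that depends only on the joint law of $(H_t,A_t,O_{t+1})$, and one that depends only on the model $q_\theta$. Concretely, we write
\begin{equation*}
\mathcal L(\theta,U)-\mathcal L_H^\star
=
\bigl[\mathcal L(\theta,U)-H(O_{t+1}\mid O_t,U_t,A_t)\bigr]
+
\bigl[H(O_{t+1}\mid O_t,U_t,A_t)-H(O_{t+1}\mid H_t,A_t)\bigr],
\end{equation*}
and identify the first bracket with $\epsilon_{\mathrm{WM}}(U)$ and the second with $\delta_{\mathrm{rep}}(U)$. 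Throughout, we assume the relevant entropies are finite, as in Theorem~\ref{thm:predictive_sufficiency}, and that $q_\theta$ assigns positive mass wherever the true law does, so that both sides are well defined.

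For the first bracket, we use the standard cross-entropy identity pointwise in the conditioning value $(o,u,a)$. The expected log loss of $q_\theta(\cdot\mid o,u,a)$ under the true conditional $p(\cdot\mid o,u,a)$ equals the entropy of $p(\cdot\mid o,u,a)$ plus $D_{\mathrm{KL}}(p\Vert q_\theta)$. Taking the outer expectation over $(O_t,U_t,A_t)$ via the tower property then gives exactly $\epsilon_{\mathrm{WM}}(U)$. The only point to check is that the expectation defining $\mathcal L(\theta,U)$ is over the joint law of $(O_t,A_t,U_t,O_{t+1})$. It therefore factors through the true conditional $p(O_{t+1}\mid O_t,U_t,A_t)$, regardless of how $U_t$ was produced.

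For the second bracket, the key observation is that $O_t$ is a coordinate of $H_t$ and $U_t=u(H_t,A_t)$ is deterministic. Hence $(O_t,U_t,A_t)$ is a function of $(H_t,A_t)$, and conditioning on $(H_t,A_t)$ is the same as conditioning on $(H_t,O_t,U_t,A_t)$. This gives
\[
H(O_{t+1}\mid H_t,A_t)=H(O_{t+1}\mid H_t,O_t,U_t,A_t).
\]
By the definition of conditional mutual information,
\[
H(O_{t+1}\mid O_t,U_t,A_t)-H(O_{t+1}\mid H_t,O_t,U_t,A_t)=I(H_t;O_{t+1}\mid O_t,U_t,A_t)=\delta_{\mathrm{rep}}(U).
\]
Adding the two pieces yields Eq.~\ref{eq:risk_decomposition}.

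The step I expect to be the main (though mild) obstacle is the measure-theoretic bookkeeping rather than any algebra. First, the identification $H(O_{t+1}\mid H_t,A_t)=H(O_{t+1}\mid H_t,O_t,U_t,A_t)$ requires $U_t$ to be measurable with respect to $(H_t,A_t)$, which is exactly why the theorem restricts to deterministic $U$; a randomized $U_t$ would break this equality. Second, the interpretation of $A_t$ under $\operatorname{do}(A_t=a)$ versus observational conditioning must be consistent: I will take all laws under a fixed action-selection distribution and note that, since $A_t$ is always conditioned on, the identity holds action-by-action. Finally, if $\mathcal O$ is continuous, entropies should be read as differential entropies or log-densities with respect to a common base measure. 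The add-and-subtract argument is unchanged, provided the finiteness assumptions hold so that no $\infty-\infty$ terms arise.
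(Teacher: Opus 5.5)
Your proposal is correct and follows essentially the same route as the paper: both write $\mathcal L(\theta,U)$ as $H(O_{t+1}\mid O_t,U_t,A_t)$ plus the expected conditional KL to $q_\theta$, then convert $H(O_{t+1}\mid O_t,U_t,A_t)-H(O_{t+1}\mid H_t,A_t)$ into $I(H_t;O_{t+1}\mid O_t,U_t,A_t)$, using that $(O_t,U_t)$ is a function of $(H_t,A_t)$. One small aside: your remark that a randomized $U_t$ would break $H(O_{t+1}\mid H_t,A_t)=H(O_{t+1}\mid H_t,O_t,U_t,A_t)$ is too strong, since the identity still holds whenever $O_{t+1}$ is conditionally independent of $U_t$ given $(H_t,A_t)$ (e.g., exogenous randomization); determinism is sufficient for the argument rather than necessary.
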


Here, $\delta_{\mathrm{rep}}(U)$ measures transition-relevant information still
missing after conditioning, while $\epsilon_{\mathrm{WM}}(U)$ measures the
frozen WM's residual error given that conditioning. Three limiting cases are
\begin{equation}
\delta_{\mathrm{rep}}(\varnothing)
=
\mathcal G,
\qquad
\delta_{\mathrm{rep}}(H_t)
=
0,
\qquad
\delta_{\mathrm{rep}}(Z_t^\star)
=
0.
\label{eq:rep_error_extremes}
\end{equation}
Thus, observation-only conditioning incurs the full information gap
$\mathcal G$, whereas complete history and the canonical predictive state
eliminate the representation deficit. The same decomposition applies to the
conditioning variable produced by the implemented deterministic state
interface. The proof is provided in
Appendix~\ref{app:proof_risk_decomposition}.

\begin{corollary}[Improvement over observation-only prediction]
\label{cor:strict_improvement}
If a conditioning representation $U_t$ satisfies
\begin{equation}
\delta_{\mathrm{rep}}(U)
+
\epsilon_{\mathrm{WM}}(U)
<
\mathcal G,
\label{eq:improvement_condition}
\end{equation}
then
\begin{equation}
\mathcal L(\theta,U)
<
H(O_{t+1}\mid O_t,A_t).
\label{eq:strict_improvement}
\end{equation}
\end{corollary}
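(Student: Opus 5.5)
The corollary follows directly from Theorem~\ref{thm:risk_decomposition} combined with the identity in Eq.~\ref{eq:predictive_information_gap}. I will express both sides of the target inequality in terms of $\mathcal L_H^\star = H(O_{t+1}\mid H_t,A_t)$ and then compare.

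\textbf{Step 1: rewrite the loss.} By Theorem~\ref{thm:risk_decomposition}, for the given deterministic $U_t=u(H_t,A_t)$,
\begin{equation*}
\mathcal L(\theta,U)=\mathcal L_H^\star+\delta_{\mathrm{rep}}(U)+\epsilon_{\mathrm{WM}}(U).
\end{equation*}

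\textbf{Step 2: rewrite the right-hand side.} Eq.~\ref{eq:predictive_information_gap} gives
\begin{equation*}
H(O_{t+1}\mid O_t,A_t)=\mathcal L_H^\star+\mathcal G .
\end{equation*}
This uses the fact that $O_t$ is a function of $H_t$, so $H(O_{t+1}\mid H_t,O_t,A_t)=H(O_{t+1}\mid H_t,A_t)$.

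\textbf{Step 3: compare.} Subtracting the two identities yields
\begin{equation*}
\mathcal L(\theta,U)-H(O_{t+1}\mid O_t,A_t)=\delta_{\mathrm{rep}}(U)+\epsilon_{\mathrm{WM}}(U)-\mathcal G .
\end{equation*}
This is strictly negative exactly under hypothesis Eq.~\ref{eq:improvement_condition}, which proves Eq.~\ref{eq:strict_improvement}.

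The only real obstacle is bookkeeping about finiteness. The subtraction in Step 3 is legitimate only if $\mathcal L_H^\star$ is finite. I would state this as a standing assumption, matching the finite-entropy convention already used in Theorem~\ref{thm:predictive_sufficiency}. If the observation space is continuous, entropies should be read as differential entropies or conditional log-loss quantities. The identities above still hold whenever the terms are finite.

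Note that since $\delta_{\mathrm{rep}},\epsilon_{\mathrm{WM}}\ge 0$, the hypothesis forces $\mathcal G>0$. This is consistent with the corollary's reading: improvement over observation-only prediction is possible only in the presence of state aliasing.
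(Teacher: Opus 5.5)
Your proposal is correct and follows the paper's proof essentially verbatim. The paper likewise writes $\mathcal L(\theta,U)=\mathcal L_H^\star+\delta_{\mathrm{rep}}(U)+\epsilon_{\mathrm{WM}}(U)$ via Theorem~\ref{thm:risk_decomposition}, uses the gap identity $H(O_{t+1}\mid O_t,A_t)=\mathcal L_H^\star+\mathcal G$, and then compares the two; your finiteness caveat is already covered by the paper's standing assumption at the start of the proofs appendix, and your remark that the hypothesis forces $\mathcal G>0$ is a correct side observation.
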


Hence predictive-state augmentation improves over observation-only prediction
when the supplied conditioning removes enough missing predictive information
and the frozen WM can exploit it with sufficiently small residual error.
Both Oracle State and Predictive State use the same complete deterministic
interface; the evaluated GUI-WMs remain frozen.
\section{Experiments}
\label{sec:experiments}

\subsection{Experimental Setup}
\label{sec:experimental_setup}

\noindent\textbf{Evaluation overview.}
We evaluate predictive state along three complementary axes:
whether state aliasing limits state-sensitive transition prediction,
whether recovered state remains robust under distribution shift, and whether
its benefit transfers to downstream GUI-agent performance.

\paragraph{Models and evaluation settings.}
We train the unified \textbf{Predictive State (Ours)} estimator from
Qwen3.5-2B \citep{qwen35} using the multi-teacher distillation objective in
Section~\ref{sec:learning_predictive_state}, and evaluate it on two frozen GUI
world models: Code2World-8B \citep{zheng2026code2world} and gWorld-8B
\citep{koh2026gworld}. On AndroidWorld \citep{androidworld}, Code2World-8B is
paired with GPT-5.4 and Qwen3-VL-8B-Instruct (Qwen3-VL-8B)
\citep{qwen3-vl}. We additionally evaluate cross-application transfer,
long-history stress, and semantic interference.

\paragraph{Conditioning protocols.}
For each StateAliasBench strict pair, the two branches are matched over the
complete model-visible prediction condition. We compare
\textbf{Observation}, using $(O_t,A_t)$;
\textbf{Raw History}, directly supplying the complete, untruncated interaction
history to the WM without state recovery;
\textbf{Oracle State}, using the annotated structured state; and
\textbf{Predictive State (Ours)}, using the recovered structured state.
Oracle and recovered states are processed through the same complete
deterministic state-conditioning interface described in
Section~\ref{sec:state_augmented_wm}. Oracle State relies on ground-truth
transition-relevant state unavailable at inference time and therefore serves
as a diagnostic upper bound.

\paragraph{Metrics.}
\textbf{BranchExact} measures whether an individual generated next observation
realizes the correct family-specific semantic outcome:
\begin{equation}
\operatorname{BC}_i^b
=
\mathbbm{1}
\!\left[
\widehat Y_{i,t+1}^{f,b}
=
Y_{i,t+1}^{f,b}
\right],
\qquad
\operatorname{PairCorrect}_i
=
\operatorname{BC}_i^A
\operatorname{BC}_i^B.
\label{eq:paircorrect}
\end{equation}
\textbf{PairCorrect} requires both branches of a strict pair to be correct and
is our primary state-aliasing diagnostic. It is deliberately stricter than
branch-level accuracy: predicting one plausible future cannot resolve an
aliased pair unless both state-dependent futures are distinguished correctly.
State-recovery accuracy is measured by exact canonical matching between
$\widehat Z_t^f$ and $\widetilde Z_t^f$. We additionally report DINOv2
\citep{oquab2024dinov2} and SigLIP \citep{zhai2023siglip} similarity for
next-observation fidelity, and task success rate for AndroidWorld.
Semantic scorer specifications and image-level fidelity implementation details
are provided in Appendices~\ref{app:semantic_validation}
and~\ref{app:visual_fidelity}, respectively.

\subsection{Main Results}
\label{sec:main_results}

\subsubsection{StateAliasBench}
\label{sec:statealias_results}

\paragraph{Observation-only prediction cannot resolve aliased futures.}
As shown in Table~\ref{tab:conditioning_full}, Observation obtains
\textbf{0\% PairCorrect} in every state family for both WMs despite
occasionally predicting an individual branch correctly. Raw History recovers some missing information, most notably for Clipboard,
but remains near zero on several other families when supplied directly to the
evaluated frozen WMs.

\paragraph{Recovered predictive state resolves aliased futures.}
The unified MTD estimator restores state-sensitive prediction across all four
families, reaching 79.38\% macro PairCorrect on Code2World-8B and 80.63\% on
gWorld-8B, compared with 0\% under Observation-only conditioning. It also
achieves 88.63\% exact branch-level structured-state recovery; family-specific
specialists provide a stronger learned-state reference.

\begin{table*}[t]
\centering
\scriptsize
\setlength{\tabcolsep}{4pt}
\renewcommand{\arraystretch}{0.92}

\caption{
StateAliasBench performance under different conditioning.
Oracle State is a diagnostic upper bound and excluded from ranking.
Best non-oracle results are in \textbf{bold}; second-best results are underlined.
}
\label{tab:conditioning_full}

\begin{tabular}{@{}lcccc@{\hspace{6pt}}cccc@{}}
\toprule
&
\multicolumn{4}{c}{\textbf{Code2World-8B}~\citep{zheng2026code2world}}
&
\multicolumn{4}{c}{\textbf{gWorld-8B}~\citep{koh2026gworld}}
\\
\cmidrule(lr){2-5}
\cmidrule(lr){6-9}

\textbf{Condition}
& \textbf{Pers.}
& \textbf{Clip.}
& \textbf{Ord.}
& \textbf{Temp.}
& \textbf{Pers.}
& \textbf{Clip.}
& \textbf{Ord.}
& \textbf{Temp.}
\\
\midrule

\multicolumn{9}{c}{\textit{BranchExact (\%)}} \\[1pt]
\cmidrule(lr){1-9}

Observation
& 29.00
& 0.00
& 0.00
& 0.00
& 42.00
& 0.00
& 0.00
& 15.75
\\

Raw History
& 12.00
& 58.75
& 5.25
& 0.00
& 45.75
& 59.50
& 10.50
& 1.00
\\

Oracle State
& 92.50
& 100.00
& 90.75
& 97.50
& 96.50
& 99.00
& 90.50
& 100.00
\\

Specialist State
& \textbf{88.25}
& \textbf{94.75}
& \textbf{83.75}
& \textbf{90.00}
& \textbf{92.00}
& \textbf{93.75}
& \textbf{83.25}
& \textbf{92.50}
\\

Predictive State (MTD, Ours)
& \underline{86.75}
& \underline{89.00}
& \underline{81.50}
& \underline{85.75}
& \underline{88.50}
& \underline{88.50}
& \underline{79.50}
& \underline{88.50}
\\

\midrule

\multicolumn{9}{c}{\textit{PairCorrect (\%)}} \\[1pt]
\cmidrule(lr){1-9}

Observation
& 0.00
& 0.00
& 0.00
& 0.00
& 0.00
& 0.00
& 0.00
& 0.00
\\

Raw History
& 2.00
& 51.00
& 0.00
& 0.00
& 14.00
& 46.00
& 1.50
& 0.00
\\

Oracle State
& 85.00
& 100.00
& 84.00
& 95.00
& 93.00
& 98.00
& 84.50
& 100.00
\\

Specialist State
& \underline{78.50}
& \textbf{89.50}
& \textbf{73.50}
& \textbf{84.00}
& \textbf{85.50}
& \textbf{87.50}
& \textbf{72.50}
& \textbf{88.50}
\\

Predictive State (MTD, Ours)
& \textbf{80.00}
& \underline{85.00}
& \underline{71.50}
& \underline{81.00}
& \underline{84.00}
& \underline{84.00}
& \underline{68.00}
& \underline{86.50}
\\

\bottomrule
\end{tabular}

\end{table*}

\paragraph{State augmentation also improves next-observation fidelity.}
As shown in Table~\ref{tab:fidelity_main}, \textbf{Predictive State (Ours)}
improves both DINOv2 and SigLIP similarity for Code2World-8B and gWorld-8B,
indicating that improved state-sensitive correctness does not come at the
expense of next-observation fidelity.

\begin{table}[t]
\centering
\scriptsize
\setlength{\tabcolsep}{3pt}
\renewcommand{\arraystretch}{0.95}

\caption{
Next-observation fidelity. Best results are in \textbf{bold};
second-best results are underlined.
}
\label{tab:fidelity_main}

\begin{tabular}{lcccccc}
\toprule
&
\multicolumn{3}{c}{\textbf{Code2World-8B}} &
\multicolumn{3}{c}{\textbf{gWorld-8B}} \\
\cmidrule(lr){2-4}
\cmidrule(lr){5-7}

\textbf{Metric}
& \textbf{Obs.}
& \textbf{Ours}
& $\boldsymbol{\Delta}$
& \textbf{Obs.}
& \textbf{Ours}
& $\boldsymbol{\Delta}$ \\
\midrule

DINOv2
& \underline{0.4718}
& \textbf{0.6218}
& +0.1499
& \underline{0.5534}
& \textbf{0.6685}
& +0.1151 \\

SigLIP
& \underline{0.7234}
& \textbf{0.8552}
& +0.1318
& \underline{0.8021}
& \textbf{0.8285}
& +0.0265 \\

\bottomrule
\end{tabular}
\end{table}

\subsubsection{Downstream GUI Agents}
\label{sec:downstream}

AndroidWorld is a general-purpose GUI-agent benchmark and is not designed to
isolate state aliasing as an evaluation dimension. Its task distribution
therefore only partially overlaps with the hidden-state transitions targeted
by StateAliasBench, so benefits specific to predictive state may be diluted
when aggregated over the full benchmark. We therefore additionally report
\textbf{Relative Gain}, defined as the additional task-success improvement
from predictive-state conditioning normalized by the gain obtained from
adding the WM alone.

As shown in Table~\ref{tab:androidworld_main}, predictive-state conditioning
increases task success from 58.62\% to 60.34\% ($+1.72$ pp) for GPT-5.4 and
from 44.83\% to 47.41\% ($+2.58$ pp) for Qwen3-VL-8B. These correspond to
Relative Gains of 66.4\% and 74.8\%, respectively. For Qwen3-VL-8B, the paired
task-level comparison contains three state-conditioned rescues and no
regressions; the complete downstream protocol and paired analysis are
provided in Appendix~\ref{app:androidworld}.

\begin{table}[t]
\centering
\scriptsize
\setlength{\tabcolsep}{4pt}
\renewcommand{\arraystretch}{0.95}

\caption{
AndroidWorld task success (\%). Both WM-assisted conditions use Code2World-8B.
Best results are in \textbf{bold}; second-best results are underlined.
}
\label{tab:androidworld_main}

\begin{tabular}{lcccc}
\toprule
\textbf{Agent}
& \textbf{Base}
& \textbf{+WM}
& \textbf{+WM+Ours}
& \textbf{Rel. Gain} \\
\midrule

GPT-5.4
& 56.03
& \underline{58.62}
& \textbf{60.34}
& +66.4\% \\

Qwen3-VL-8B
& 41.38
& \underline{44.83}
& \textbf{47.41}
& +74.8\% \\

\bottomrule
\end{tabular}
\end{table}

\subsection{State Recovery and Robustness}
\label{sec:additional_results}

\paragraph{Unified state recovery.}
Multi-teacher distillation consolidates heterogeneous state families into a
single estimator while reducing the cross-family imbalance observed under
direct joint training. At the structured-state level, compared with
family-balanced Joint CE, MTD raises the worst-family PairCorrect from
\textbf{69.5\%} to \textbf{82.0\%} and the macro average from
\textbf{83.63\%} to \textbf{84.88\%}. Independently deployed specialists
remain stronger on average at \textbf{87.88\%}, reflecting the trade-off
between family-specific specialization and a single unified inference-time
model.

We further test whether this difference propagates to final world-model
prediction. Under a matched Code2World-8B evaluation using the same complete
state-conditioning interface, MTD improves final-WM macro PairCorrect from
\textbf{76.75\%} to \textbf{79.38\%} and worst-family PairCorrect from
\textbf{69.50\%} to \textbf{71.50\%} over Joint CE. The improvement is not
uniform across families: MTD substantially improves Clipboard and modestly
improves Persistent Artifact and Temporal Deadline, while Joint CE remains
stronger on Ordered Collection. Detailed recovery and end-to-end comparisons
are provided in Appendix~\ref{app:mtd_ablation}.

The state-conditioned Code2World-8B request adds only 5.7\% marginal
prompt-token overhead, and the unified estimator requires 0.768\,s mean
generation-and-decoding latency per transition. Computational details are
provided in Appendix~\ref{app:state_efficiency}.

\paragraph{Robustness under distribution shift.}
We further evaluate cross-application transfer, controlled long-history stress
($d$-Stress), and semantic interference. The three distribution-shift suites
are evaluated using the same single-trajectory Exact metric and can therefore
be compared directly with one another. As shown in
Table~\ref{tab:recovery_robustness}, cross-application transfer is the most
challenging of these shifts for the unified MTD estimator, with 79.75\%
average Exact, compared with 85.25\% under long-history stress and 84.88\%
under semantic interference. The lowest cross-application Exact scores occur
for Persistent Artifact and Temporal Deadline, while Clipboard remains
comparatively stable. These results suggest that, among the distribution
shifts evaluated here, changing application context is a stronger source of
difficulty than increased history depth or semantically related interference.

\begin{table*}[!t]
\centering
\scriptsize
\setlength{\tabcolsep}{4.3pt}
\renewcommand{\arraystretch}{0.88}

\caption{
State-recovery robustness. Strict reports PairCorrect on 200 A/B pairs per
family; distribution shifts report Exact on 200 trajectories per family and
are not directly comparable to Strict. Best results are in \textbf{bold};
second-best results are underlined.
}
\label{tab:recovery_robustness}

\begin{tabular*}{\textwidth}{
@{\extracolsep{\fill}}
lcccccccc
@{}
}
\toprule
&
\multicolumn{4}{c}{\textbf{Specialists}} &
\multicolumn{4}{c}{\textbf{MTD (Ours)}} \\
\cmidrule(lr){2-5}
\cmidrule(lr){6-9}

\textbf{Family}
& \textbf{Strict}
& \textbf{Cross-app}
& \textbf{$d$-Stress}
& \textbf{Interf.}
& \textbf{Strict}
& \textbf{Cross-app}
& \textbf{$d$-Stress}
& \textbf{Interf.} \\
\midrule

Persistent
& \underline{85.5}
& \textbf{87.0}
& \textbf{87.0}
& \textbf{87.0}
& \textbf{86.0}
& \underline{75.5}
& \underline{82.5}
& \underline{82.5} \\

Clipboard
& \textbf{89.5}
& \underline{85.5}
& \textbf{89.5}
& \textbf{89.5}
& \underline{85.0}
& \textbf{88.5}
& \underline{85.0}
& \underline{88.5} \\

Ordered
& \textbf{88.0}
& \textbf{89.0}
& \textbf{87.0}
& \textbf{86.0}
& \underline{82.0}
& \underline{83.0}
& \underline{85.5}
& \underline{79.0} \\

Temporal
& \textbf{88.5}
& \textbf{83.5}
& \underline{85.0}
& \underline{87.5}
& \underline{86.5}
& \underline{72.0}
& \textbf{88.0}
& \textbf{89.5} \\

\midrule

\textbf{Avg.}
& \textbf{87.88}
& \textbf{86.25}
& \textbf{87.13}
& \textbf{87.50}
& \underline{84.88}
& \underline{79.75}
& \underline{85.25}
& \underline{84.88} \\

\bottomrule
\end{tabular*}
\end{table*}
\section{Conclusion}

We identify \emph{state aliasing} as an input-sufficiency failure in GUI world
models: identical visible conditions can require different futures when
transition-relevant state is hidden. StateAliasBench diagnoses this failure
through strict pairs, while predictive-state recovery infers structured state
from history and improves frozen GUI-WMs across state-sensitive and downstream
evaluations. Our results suggest that reliable GUI world modeling should
account not only for what is visible, but also for the state that determines
what happens next.
\subsection*{AI Use Statement}

Generative AI tools were used during manuscript preparation for language
polishing and phrasing assistance. All AI-assisted edits were reviewed and
revised by the authors, who take full responsibility for the final manuscript
and its scientific content. Generative AI tools were not treated as
authoritative sources for experimental results or citations.

\subsection*{Ethics Statement}

This work studies state aliasing and predictive-state recovery in GUI world
models using controlled Android interaction environments, model-generated
predictions, and benchmark-derived state annotations. The experiments do not
involve human-subject studies or deployment with end users. GUI agents and
world models may nevertheless introduce risks when used for automated
interaction with real systems, particularly when incorrect state assumptions
lead to unintended actions. Our evaluation focuses on diagnosing such
state-dependent failures under controlled conditions and on improving the
reliability of world-model predictions without modifying the underlying world
models. Any released artifacts will follow the licenses and usage requirements
of the underlying datasets, applications, and models.

\subsection*{Reproducibility Statement}

We provide the formal definition of state aliasing and the strict-pair
diagnostic protocol in Section~2, and the predictive-state formulation,
structured state-recovery method, multi-teacher distillation objective, and
state-conditioned world-model interface in Section~3. Section~4 specifies the
evaluated world models, conditioning protocols, metrics, robustness settings,
and downstream AndroidWorld evaluation.

Additional theoretical proofs are provided in Appendix~B.
Appendix~C details StateAliasBench construction, causal templates,
ground-truth state provenance, semantic validation, model-visible pair
matching, information access, and repeated-execution stability.
Appendix~D specifies the predictive-state training data, state schemas,
specialist teachers, multi-teacher distillation procedure, optimization
settings, ablations, and computational cost.
Appendix~E documents the deterministic state-conditioning interface and its
prediction-time information provenance.
Appendix~F provides the cross-application, long-history, semantic-interference,
and next-observation-fidelity evaluation protocols.
Appendix~G describes the downstream AndroidWorld protocol and a representative
paired rescue trajectory. These details are intended to support reproduction
of the benchmark, predictive-state recovery, world-model conditioning, and
downstream evaluations.

\bibliography{iclr2027_conference}

@String(ICCV= {IEEE/CVF Int. Conf. Comput. Vis.})

@String(ICLR= {Int. Conf. Learn. Represent.})

@String(ICML= {Int. Conf. Mach. Learn.})

@String(NIPS= {Adv. Neural Inform. Process. Syst.})

@String(AAAI= {AAAI Conf. Artif. Intell.})

@String(ACL = {Annu. Meeting Assoc. Comput. Linguist.})

@String(ICCV  = {ICCV})

@String(NIPS  = {NeurIPS})

@String(ICLR  = {ICLR})

@String(ICML  = {ICML})

@String(AAAI = {AAAI})

@String(ACL = {ACL})

@article{qwen3-vl,
  title={Qwen3-VL Technical Report},
  author={{Qwen Team}},
  journal={arXiv preprint arXiv:2511.21631},
  year={2025}
}

@misc{qwen35,
  title  = {{Qwen3.5}: Towards Native Multimodal Agents},
  author = {{Qwen Team}},
  month  = feb,
  year   = {2026},
  url    = {https://qwen.ai/blog?id=qwen3.5}
}

@article{qwen3.8,
  title   = {Qwen3.8-Omni: Towards Native Omni-Modal Agents},
  author  = {{Qwen Team}},
  journal = {arXiv preprint arXiv:2609.25611},
  year    = {2026}
}

@misc{opus5.5,
  title        = {Claude Opus 5.5},
  author       = {{Anthropic}},
  year         = {2026},
  month        = sep,
  howpublished = {Anthropic},
  note         = {Accessed: 2026-09-26}
}

@misc{gpt-6,
  title        = {{GPT-6 Astra}: A New Generation of Intelligence},
  author       = {{OpenAI}},
  year         = {2026},
  month        = sep,
  howpublished = {OpenAI},
  note         = {Accessed: 2026-09-26}
}

@article{kimi-k3,
  title   = {Kimi K3: Open Frontier Intelligence},
  author  = {{Kimi Team}},
  journal = {arXiv preprint arXiv:2607.24653},
  year    = {2026}
}

@article{halluclear,
  title={HalluClear: Diagnosing, Evaluating and Mitigating Hallucinations in GUI Agents},
  author={Jin, Chao and Yang, Wenkui and Sun, Hao and Liao, Yuqi and Jiang, Qianyi and Zhou, Kai and Cao, Jie and He, Ran and Huang, Huaibo},
  journal={arXiv preprint arXiv:2604.17284},
  year={2026}
}

@article{pomdp,
  title={Planning and acting in partially observable stochastic domains},
  author={Kaelbling, Leslie Pack and Littman, Michael L and Cassandra, Anthony R},
  journal={Artificial intelligence},
  volume={101},
  number={1-2},
  pages={99--134},
  year={1998},
  publisher={Elsevier}
}

@article{whitehead1991learning,
  title   = {Learning to Perceive and Act by Trial and Error},
  author  = {Whitehead, Steven D. and Ballard, Dana H.},
  journal = {Machine Learning},
  volume  = {7},
  pages   = {45--83},
  year    = {1991},
  doi     = {10.1023/A:1022619109594}
}

@inproceedings{chrisman1992perceptual,
  title     = {Reinforcement Learning with Perceptual Aliasing:
               The Perceptual Distinctions Approach},
  author    = {Chrisman, Lonnie},
  booktitle = AAAI,
  pages     = {183--188},
  year      = {1992}
}

@inproceedings{littman2001predictive,
  title     = {Predictive Representations of State},
  author    = {Littman, Michael L. and Sutton, Richard S. and Singh, Satinder},
  booktitle = NIPS,
  volume    = {14},
  year      = {2001}
}

@inproceedings{singh2004predictive,
  title     = {Predictive State Representations:
               A New Theory for Modeling Dynamical Systems},
  author    = {Singh, Satinder and James, Michael R. and Rudary, Matthew R.},
  booktitle = {Proceedings of the Twentieth Conference on Uncertainty in Artificial Intelligence},
  pages     = {512--518},
  year      = {2004}
}

@article{shalizi2001computational,
  title   = {Computational Mechanics:
             Pattern and Prediction, Structure and Simplicity},
  author  = {Shalizi, Cosma Rohilla and Crutchfield, James P.},
  journal = {Journal of Statistical Physics},
  volume  = {104},
  pages   = {817--879},
  year    = {2001},
  doi     = {10.1023/A:1010388907793}
}

@article{subramanian2022approximate,
  title   = {Approximate Information State for Approximate Planning
             and Reinforcement Learning in Partially Observed Systems},
  author  = {Subramanian, Jayakumar and Sinha, Amit and Seraj, Raihan and Mahajan, Aditya},
  journal = {Journal of Machine Learning Research},
  volume  = {23},
  number  = {12},
  pages   = {1--83},
  year    = {2022},
  url     = {https://jmlr.org/papers/v23/20-1165.html}
}

@article{oquab2024dinov2,
  title   = {{DINOv2}: Learning Robust Visual Features without Supervision},
  author  = {Oquab, Maxime and Darcet, Timoth{\'e}e
             and Moutakanni, Th{\'e}o and Vo, Huy V.
             and Szafraniec, Marc and Khalidov, Vasil
             and Fernandez, Pierre and Haziza, Daniel
             and Massa, Francisco and El-Nouby, Alaaeldin
             and Assran, Mahmoud and Ballas, Nicolas
             and Galuba, Wojciech and Howes, Russell
             and Huang, Po-Yao and Li, Shang-Wen
             and Misra, Ishan and Rabbat, Michael
             and Sharma, Vasu and Synnaeve, Gabriel
             and Xu, Hu and J{\'e}gou, Herv{\'e}
             and Mairal, Julien and Labatut, Patrick
             and Joulin, Armand and Bojanowski, Piotr},
  journal = {Transactions on Machine Learning Research},
  year    = {2024},
  url     = {https://openreview.net/forum?id=a68SUt6zFt}
}

@inproceedings{zhai2023siglip,
  title     = {Sigmoid Loss for Language Image Pre-Training},
  author    = {Zhai, Xiaohua and Mustafa, Basil
               and Kolesnikov, Alexander and Beyer, Lucas},
  booktitle = ICCV,
  pages     = {11975--11986},
  year      = {2023}
}

@inproceedings{luo2025vimo,
  title     = {{ViMo}: A Generative Visual {GUI} World Model for App Agents},
  author    = {Luo, Dezhao and Tang, Bohan and Li, Kang
               and Papoudakis, Georgios and Song, Jifei
               and Gong, Shaogang and Hao, Jianye
               and Wang, Jun and Shao, Kun},
  booktitle = ICLR,
  year      = {2026}
}

@inproceedings{mei2026rwom,
  title     = {{R-WoM}: Retrieval-Augmented World Model for Computer-Use Agents},
  author    = {Mei, Kai and Guo, Jiang and Chang, Shuaichen
               and Dong, Mingwen and Lee, Dongkyu and Niu, Xing
               and Jiang, Jiarong},
  booktitle = ICLR,
  year      = {2026}
}

@inproceedings{koh2026gworld,
  title     = {Generative Visual Code Mobile World Models},
  author    = {Koh, Woosung and Han, Sungjun and Lee, Segyu
               and Yun, Se-Young and Shin, Jamin},
  booktitle = ICML,
  year      = {2026}
}

@article{li2025mobileworldbench,
  title   = {{MobileWorldBench}: Towards Semantic World Modeling for Mobile Agents},
  author  = {Li, Shufan and Kallidromitis, Konstantinos and Gokul, Akash
             and Kato, Yusuke and Kozuka, Kazuki and Grover, Aditya},
  journal = {arXiv preprint arXiv:2512.14014},
  year    = {2025}
}

@article{xiang2025uisim,
  title   = {{UISim}: An Interactive Image-Based {UI} Simulator
             for Dynamic Mobile Environments},
  author  = {Xiang, Jiannan and Zhu, Yun and Shu, Lei
             and Wang, Maria and Yu, Lijun and Barcik, Gabriel
             and Lyon, James and Sunkara, Srinivas and Chen, Jindong},
  journal = {arXiv preprint arXiv:2509.21733},
  year    = {2025}
}

@article{cao2026mobiledreamer,
  title   = {{MobileDreamer}: Generative Sketch World Model for {GUI} Agent},
  author  = {Yilin Cao and Yufeng Zhong and Zhixiong Zeng and Siran Dai and Liming Zheng and Jing Huang and Haibo Qiu and Peng Shi and Wenji Mao},
  journal = {arXiv preprint arXiv:2601.04035},
  year    = {2026}
}

@article{zheng2026code2world,
  title   = {{Code2World}: A {GUI} World Model via Renderable Code Generation},
  author  = {Zheng, Yuhao and Zhong, Li'an and Wang, Yi
             and Dai, Rui and Liu, Kaikui and Chu, Xiangxiang
             and Lv, Linyuan and Torr, Philip and Lin, Kevin Qinghong},
  journal = {arXiv preprint arXiv:2602.09856},
  year    = {2026}
}

@article{xu2026howmobileworldmodel,
  title   = {How Mobile World Model Guides {GUI} Agents?},
  author  = {Xu, Weikai and Huang, Kun and Feng, Yunren
             and Li, Jiaxing and Chen, Yuhan and Liu, Yuxuan
             and Jiang, Zhizheng and Qu, Heng and Gao, Pengzhi
             and Liu, Wei and Luan, Jian and Hu, Xiaolin and An, Bo},
  journal = {arXiv preprint arXiv:2605.10347},
  year    = {2026}
}

@article{xu2026appdeltaworld,
  title   = {{AppDeltaWorld}: Transition-Grounded Delta Code World Model
             for Mobile {GUI} Agents},
  author  = {Xu, Weikai and Feng, Yunren and Lei, Haoxiang
             and Huang, Kun and Liu, Yuxuan and Zhao, Kang
             and Hu, Xiaolin and Shang, Shuo and An, Bo},
  journal = {arXiv preprint arXiv:2608.05891},
  year    = {2026}
}

@inproceedings{liu2025guirise,
  title     = {{GUI-Rise}: Structured Reasoning and History Summarization
               for {GUI} Navigation},
  author    = {Liu, Tao and Wang, Chongyu and Li, Rongjie
               and Yu, Yingchen and He, Xuming and Bai, Song},
  booktitle = {Advances in Neural Information Processing Systems},
  volume    = {38},
  year      = {2025},
  doi       = {10.52202/085713-5505}
}

@article{liu2025palui,
  title   = {{PAL-UI}: Planning with Active Look-back for Vision-Based {GUI} Agents},
  author  = {Liu, Zikang and Li, Junyi and Zhao, Wayne Xin
             and Gao, Dawei and Li, Yaliang and Wen, Ji-Rong},
  journal = {arXiv preprint arXiv:2510.00413},
  year    = {2025},
  url     = {https://arxiv.org/abs/2510.00413}
}

@article{gao2025chainofmemory,
  title   = {Chain-of-Memory: Enhancing {GUI} Agents
             for Cross-Application Navigation},
  author  = {Gao, Xinzge and Hu, Chuanrui and Chen, Bin and Li, Teng},
  journal = {arXiv preprint arXiv:2506.18158},
  year    = {2025},
  url     = {https://arxiv.org/abs/2506.18158}
}

@inproceedings{osworld,
  title={Osworld: Benchmarking multimodal agents for open-ended tasks in real computer environments},
  author={Xie, Tianbao and Zhang, Danyang and Chen, Jixuan and Li, Xiaochuan and Zhao, Siheng and Cao, Ruisheng and Hua, Toh J and Cheng, Zhoujun and Shin, Dongchan and Lei, Fangyu and others},
  booktitle=NIPS,
  volume={37},
  pages={52040--52094},
  year={2024}
}

@article{osworld-2,
  title={OSWorld2.0: Benchmarking Computer Use Agents on Long-Horizon Real-World Tasks},
  author={Yuan, Mengqi and Zhou, Zilong and Xiong, Xinzhuang and Wu, Weiming and Sun, Jiayang and Song, Jiamin and Cui, Kaiqian and Wang, Bowen and Wu, Haoyuan and Li, Yitong and others},
  journal={arXiv preprint arXiv:2606.29537},
  year={2026}
}

@inproceedings{androidworld,
  title={Androidworld: A dynamic benchmarking environment for autonomous agents},
  author={Rawles, Christopher and Clinckemaillie, Sarah and Chang, Yifan and Waltz, Jonathan and Lau, Gabrielle and Fair, Marybeth and Li, Alice and Bishop, William and Li, Wei and Campbell-Ajala, Folawiyo and others},
  booktitle=ICLR,
  year={2025}
}

@inproceedings{mobileworld,
  title={Mobileworld: Benchmarking autonomous mobile agents in agent-user interactive and mcp-augmented environments},
  author={Kong, Quyu and Zhang, Xu and Yang, Zhenyu and Gao, Nolan and Liu, Chen and Tong, Panrong and Cai, Chenglin and Zhou, Hanzhang and Zhang, Jianan and Chen, Liangyu and others},
  booktitle=ACL,
  pages={6142--6167},
  year={2026}
}

@inproceedings{os-atlas,
  title={Os-atlas: A foundation action model for generalist gui agents},
  author={Wu, Zhiyong and Wu, Zhenyu and Xu, Fangzhi and Wang, Yian and Sun, Qiushi and Jia, Chengyou and Cheng, Kanzhi and Ding, Zichen and Chen, Liheng and Liang, Paul Pu and others},
  booktitle=ICLR,
  year={2025}
}

@inproceedings{webarena,
  title={Webarena: A realistic web environment for building autonomous agents},
  author={Zhou, Shuyan and Xu, Frank F and Zhu, Hao and Zhou, Xuhui and Lo, Robert and Sridhar, Abishek and Cheng, Xianyi and Ou, Tianyue and Bisk, Yonatan and Fried, Daniel and others},
  booktitle=ICLR,
  volume={2024},
  pages={15585--15606},
  year={2024}
}

@inproceedings{mind2web,
  title={Mind2web: Towards a generalist agent for the web},
  author={Deng, Xiang and Gu, Yu and Zheng, Boyuan and Chen, Shijie and Stevens, Sam and Wang, Boshi and Sun, Huan and Su, Yu},
  booktitle=NIPS,
  volume={36},
  pages={28091--28114},
  year={2023}
}

@inproceedings{visualwebarena,
  title={Visualwebarena: Evaluating multimodal agents on realistic visual web tasks},
  author={Koh, Jing Yu and Lo, Robert and Jang, Lawrence and Duvvur, Vikram and Lim, Ming and Huang, Po-Yu and Neubig, Graham and Zhou, Shuyan and Salakhutdinov, Russ and Fried, Daniel},
  booktitle=ACL,
  pages={881--905},
  year={2024}
}

@article{uitars,
  title={UI-TARS: Pioneering automated gui interaction with native agents},
  author={Qin, Yujia and Ye, Yining and Fang, Junjie and Wang, Haoming and Liang, Shihao and Tian, Shizuo and Zhang, Junda and Li, Jiahao and Li, Yunxin and Huang, Shijue and others},
  journal={arXiv preprint arXiv:2501.12326},
  year={2025}
}

@inproceedings{opencua,
  title={OpenCUA: Open Foundations for Computer-Use Agents},
  author={Wang, Xinyuan and Wang, Bowen and Lu, Dunjie and Yang, Junlin and Xie, Tianbao and Wang, Junli and Deng, Jiaqi and Guo, Xiaole and Xu, Yiheng and Wu, Chen Henry and others},
  booktitle=NIPS,
  year={2025}
}

@article{guiowl,
  title={Mobile-Agent-V3: Fundamental Agents for GUI Automation},
  author={Ye, Jiabo and Zhang, Xi and Xu, Haiyang and Liu, Haowei and Wang, Junyang and Zhu, Zhaoqing and Zheng, Ziwei and Gao, Feiyu and Cao, Junjie and Lu, Zhengxi and others},
  journal={arXiv preprint arXiv:2508.15144},
  year={2025}
}

@inproceedings{aguvis,
  title={Aguvis: Unified pure vision agents for autonomous gui interaction},
  author={Xu, Yiheng and Wang, Zekun and Wang, Junli and Lu, Dunjie and Xie, Tianbao and Saha, Amrita and Sahoo, Doyen and Yu, Tao and Xiong, Caiming},
  booktitle = ICML,
  year={2025}
}
\bibliographystyle{iclr2027_conference}

\appendix

\section{Related Work}
\label{sec:related_work}

\paragraph{GUI world models.}
GUI world models predict the consequences of interface actions and have increasingly been used to support planning, simulation, and policy improvement for computer-use agents. Existing approaches represent future GUI states in different forms. MobileWorldBench studies semantic transition modeling for mobile interfaces \citep{li2025mobileworldbench}, while ViMo, UISim, and MobileDreamer explore visual or compact sketch-based prediction of future interfaces \citep{luo2025vimo,xiang2025uisim,cao2026mobiledreamer}. More recent approaches exploit structured representations: gWorld and Code2World generate renderable code for future GUI states, and AppDeltaWorld predicts transition-grounded code updates \citep{koh2026gworld,zheng2026code2world,xu2026appdeltaworld}. R-WoM instead grounds computer-use world-model rollouts with retrieved external knowledge to improve long-horizon simulation \citep{mei2026rwom}, while complementary work studies how different world-model modalities affect GUI-agent training and inference \citep{xu2026howmobileworldmodel}.

These works primarily investigate how accurately or usefully a world model can predict future interfaces under the conditioning information it receives. Our work studies a different failure mode: whether that conditioning information is \emph{sufficient to determine the transition in the first place}. This distinction is important because stronger generation, retrieval, or representation alone cannot resolve two histories that appear identical to the world model but imply different valid next states. Indeed, gWorld explicitly identifies its single-frame Markov assumption as a limitation and motivates incorporating working memory for long-range GUI dependencies \citep{koh2026gworld}. We therefore do not claim that history dependence in GUI dynamics is itself new. Instead, we formalize the resulting information deficiency as state aliasing, construct strict pairs that control the complete model-visible observation and action while varying transition-relevant environment state, and test whether recovering this missing predictive state can repair otherwise frozen GUI world models.

\paragraph{Partial observability, aliasing, and predictive state.}
The distinction between an observation and the underlying state relevant to future behavior is classical in partially observable control \citep{pomdp}. Early reinforcement-learning work studies \emph{perceptual aliasing}, where distinct environment states induce indistinguishable immediate observations despite differing in behaviorally relevant ways \citep{whitehead1991learning,chrisman1992perceptual}. Predictive state representations characterize dynamical state through action-conditioned predictions of future observations rather than solely through latent environment variables \citep{littman2001predictive,singh2004predictive}. Computational mechanics similarly groups histories according to their predictive distributions and establishes sufficiency and minimality properties of the resulting causal states \citep{shalizi2001computational}; related information-state formulations study compact sufficient representations for partially observed systems \citep{subramanian2022approximate}.

Our formulation builds on these classical ideas rather than introducing partial observability, aliasing, or predictive state as new general concepts. We specialize them to GUI world modeling by asking what \emph{residual predictive information is missing given the current GUI}. In particular, we consider one-step controlled next-observation prediction over a specified action scope and use semantically grounded structured states as operational approximations to the corresponding residual predictive state. This viewpoint turns a general partial-observability issue into an auditable world-model diagnostic: strict pairs identify when the current GUI fails to constitute a sufficient predictive state, while our risk decomposition separates missing predictive information from the residual error of a world model after that information is supplied.

\paragraph{History and memory in GUI agents.}
History and memory have also been studied extensively on the \emph{agent side} of GUI interaction. GUI-Rise learns compact history summaries together with structured reasoning to improve subsequent action prediction \citep{liu2025guirise}. PAL-UI combines hierarchical summarization with active retrieval of earlier visual observations when historical evidence is needed for long-horizon planning \citep{liu2025palui}, while Chain-of-Memory explicitly maintains short- and long-term memory to preserve task-relevant information across cross-application interactions \citep{gao2025chainofmemory}. These methods address an important agent-side problem: how to retain and retrieve useful information from interaction history so that the policy can choose better future actions.

Our problem is different in both the \emph{target model} and the \emph{prediction objective}. Agent-memory methods can be viewed as enriching an action policy with a history-derived memory, e.g., $\pi(A_t \mid O_t,M_t)$, where $M_t$ helps the agent decide \emph{what action to take}. We instead study the world model after a candidate action has already been specified: given $(O_t,A_t)$, does the world model possess the transition-relevant state required to predict \emph{what will happen next}? Our recovered state therefore conditions the transition model, $q_\theta(O_{t+1}\mid O_t,A_t,\widehat Z_t)$, rather than directly serving as memory for action selection. StateAliasBench is correspondingly designed to evaluate predictive sufficiency of the world-model input, not long-horizon agent memory.

The distinction is also conceptual. A general-purpose memory system may preserve task goals, previous screenshots, reasoning traces, successful or failed actions, and other information useful to an agent, whereas the predictive state considered here is deliberately narrower: it retains only history-dependent environment information relevant to the controlled next-state transition over the evaluated action scope. Consequently, our results should not be interpreted as showing that structured predictive state is a replacement for agent memory or history retrieval. The two mechanisms address different interfaces in the agent--world-model pipeline and can in principle be complementary: agent memory supports decision making, while predictive-state recovery addresses information missing from the world model used to simulate the consequences of those decisions.

\section{Proofs for Predictive State Recovery}
\label{app:predictive_state_proofs}

This appendix provides proofs for the theoretical results in
Section~\ref{sec:predictive_state}. Throughout, $O_t$ is a deterministic
function of the interaction history $H_t$. For the risk decomposition,
all information-theoretic quantities are taken under the fixed evaluation
distribution, and we assume the relevant conditional entropies and KL
divergences are well defined and finite.

\subsection{Proof of Predictive Sufficiency and Minimality}
\label{app:proof_predictive_sufficiency}

Recall the observation-preserving predictive equivalence relation
\begin{equation}
h\sim_{+}h'
\iff
O_t(h)=O_t(h')
\;\land\;
P_h^a=P_{h'}^a,
\qquad
\forall a\in\mathcal A_c,
\end{equation}
where
\begin{equation}
P_h^a(\cdot)
=
p\!\left(
O_{t+1}\in\cdot
\mid
H_t=h,\operatorname{do}(a)
\right).
\end{equation}
The canonical predictive representation is
\begin{equation}
X_t^\star=[H_t]_{+}
\cong
(O_t,Z_t^\star).
\end{equation}

\begin{proof}[Proof of Theorem~\ref{thm:predictive_sufficiency}]
We first establish predictive sufficiency. By construction,
$X_t^\star$ is the equivalence class of $H_t$ under $\sim_{+}$.
Hence, for any two histories $h$ and $h'$ satisfying
\[
[ h ]_{+}=[ h' ]_{+},
\]
the definition of $\sim_{+}$ implies
\[
P_h^a=P_{h'}^a,
\qquad
\forall a\in\mathcal A_c.
\]
Therefore, for every fixed action $a\in\mathcal A_c$, the conditional
next-observation law is constant within each equivalence class. It
follows that there exists a class-indexed conditional distribution
$\kappa_a$ such that
\begin{equation}
P_h^a
=
\kappa_a([h]_{+})
\qquad
\text{for every reachable }h.
\label{eq:class_kernel}
\end{equation}
Since $X_t^\star=[H_t]_{+}$ is a deterministic function of $H_t$,
Eq.~\ref{eq:class_kernel} gives
\begin{equation}
p(O_{t+1}\mid H_t,\operatorname{do}(a))
=
p(O_{t+1}\mid X_t^\star,\operatorname{do}(a)),
\end{equation}
which proves Eq.~\ref{eq:predictive_sufficiency}.

We next establish minimality. Let $R_t=r(H_t)$ be any deterministic
representation such that $(O_t,R_t)$ is predictively sufficient over
the same action scope. Consider any two reachable histories $h,h'$ for
which
\begin{equation}
O_t(h)=O_t(h'),
\qquad
r(h)=r(h').
\label{eq:same_or}
\end{equation}
Predictive sufficiency of $(O_t,R_t)$ implies, for every
$a\in\mathcal A_c$,
\begin{align}
P_h^a
&=
p\!\left(
O_{t+1}\mid
O_t=O_t(h),R_t=r(h),\operatorname{do}(a)
\right)
\\
&=
p\!\left(
O_{t+1}\mid
O_t=O_t(h'),R_t=r(h'),\operatorname{do}(a)
\right)
\\
&=
P_{h'}^a.
\end{align}
Together with Eq.~\ref{eq:same_or}, this gives
\[
h\sim_{+}h'.
\]
Thus, histories that share the same value of $(O_t,R_t)$ must belong
to the same $\sim_{+}$ equivalence class. Consequently, the equivalence
class is determined by $(O_t,R_t)$: there exists a deterministic map
$f$ such that
\begin{equation}
X_t^\star=f(O_t,R_t),
\end{equation}
which proves Eq.~\ref{eq:predictive_minimal_map}.

Finally, because
\[
X_t^\star\cong(O_t,Z_t^\star),
\]
the residual coordinate $Z_t^\star$ is determined by
$(O_t,X_t^\star)$ and hence, from the result above, by $(O_t,R_t)$.
Therefore
\begin{equation}
H(Z_t^\star\mid O_t,R_t)=0.
\end{equation}
Using conditional mutual information,
\begin{align}
H(Z_t^\star\mid O_t)
&=
I(Z_t^\star;R_t\mid O_t)
+
H(Z_t^\star\mid O_t,R_t)
\\
&=
I(Z_t^\star;R_t\mid O_t)
\\
&\le
H(R_t\mid O_t).
\end{align}
This proves Eq.~\ref{eq:predictive_minimality}.
\end{proof}

The theorem therefore formalizes the sense in which
$X_t^\star$ is the coarsest observation-preserving representation that
retains all one-step controlled transition information over
$\mathcal A_c$. Any other deterministic sufficient representation may
retain additional history information, but must be rich enough to
determine the same predictive equivalence class once $O_t$ is known.

\subsection{Predictive Information Gap}
\label{app:predictive_information_gap}

The predictive information gap used in the main text follows directly
from the definition of conditional mutual information:
\begin{align}
\mathcal G
&=
I(H_t;O_{t+1}\mid O_t,A_t)
\\
&=
H(O_{t+1}\mid O_t,A_t)
-
H(O_{t+1}\mid H_t,O_t,A_t).
\end{align}
Because $O_t$ is determined by $H_t$,
\[
H(O_{t+1}\mid H_t,O_t,A_t)
=
H(O_{t+1}\mid H_t,A_t),
\]
and hence
\begin{equation}
\mathcal G
=
H(O_{t+1}\mid O_t,A_t)
-
H(O_{t+1}\mid H_t,A_t).
\label{eq:appendix_gap_identity}
\end{equation}
Thus, under log-loss, $\mathcal G$ is exactly the difference between
the Bayes-optimal observation-only risk and the Bayes-optimal
full-history risk.

\subsection{Proof of the Predictive-Risk Decomposition}
\label{app:proof_risk_decomposition}

\begin{proof}[Proof of Theorem~\ref{thm:risk_decomposition}]
Let
\[
Y\triangleq O_{t+1},
\qquad
V_t\triangleq(O_t,A_t,U_t),
\]
where $U_t=u(H_t,A_t)$ is deterministic. The expected world-model
log-loss can be decomposed into conditional entropy and conditional
KL divergence:
\begin{align}
\mathcal L(\theta,U)
&=
\mathbb E
\left[
-\log q_\theta(Y\mid V_t)
\right]
\\
&=
H(Y\mid V_t)
+
\mathbb E_{V_t}
D_{\mathrm{KL}}
\!\left(
p(Y\mid V_t)
\Vert
q_\theta(Y\mid V_t)
\right).
\label{eq:cross_entropy_decomposition}
\end{align}

Subtracting the full-history Bayes risk
\[
\mathcal L_H^\star
=
H(Y\mid H_t,A_t)
\]
from Eq.~\ref{eq:cross_entropy_decomposition} gives
\begin{align}
\mathcal L(\theta,U)-\mathcal L_H^\star
&=
H(Y\mid O_t,A_t,U_t)
-
H(Y\mid H_t,A_t)
\nonumber\\
&\quad+
\mathbb E
D_{\mathrm{KL}}
\!\left(
p(Y\mid O_t,A_t,U_t)
\Vert
q_\theta(Y\mid O_t,A_t,U_t)
\right).
\label{eq:risk_before_mi}
\end{align}

Since $O_t$ is a deterministic function of $H_t$ and
$U_t=u(H_t,A_t)$ is deterministic,
conditioning additionally on $(O_t,U_t)$ does not change the
conditional distribution once $(H_t,A_t)$ is known:
\begin{equation}
H(Y\mid H_t,O_t,U_t,A_t)
=
H(Y\mid H_t,A_t).
\end{equation}
By the definition of conditional mutual information,
\begin{align}
I(H_t;Y\mid O_t,U_t,A_t)
&=
H(Y\mid O_t,U_t,A_t)
\nonumber\\
&\quad-
H(Y\mid H_t,O_t,U_t,A_t)
\\
&=
H(Y\mid O_t,U_t,A_t)
-
H(Y\mid H_t,A_t).
\end{align}
Substituting this identity into
Eq.~\ref{eq:risk_before_mi} yields
\begin{align}
\mathcal L(\theta,U)-\mathcal L_H^\star
&=
\underbrace{
I(H_t;O_{t+1}\mid O_t,U_t,A_t)
}_{\delta_{\mathrm{rep}}(U)}
\nonumber\\[-1mm]
&\quad+
\underbrace{
\mathbb E
D_{\mathrm{KL}}
\!\left(
p(O_{t+1}\mid O_t,U_t,A_t)
\Vert
q_\theta(O_{t+1}\mid O_t,U_t,A_t)
\right)
}_{\epsilon_{\mathrm{WM}}(U)}.
\end{align}
This is Eq.~\ref{eq:risk_decomposition}.
\end{proof}

The two terms have distinct sources. The representation term
$\delta_{\mathrm{rep}}(U)$ vanishes exactly when the supplied
conditioning is sufficient for the next-observation law relative to
the full history under the evaluation distribution. The model term
$\epsilon_{\mathrm{WM}}(U)$ vanishes when the frozen world model
matches the true conditional transition law given that conditioning.

The limiting cases in Eq.~\ref{eq:rep_error_extremes} follow
immediately. With no additional conditioning,
\begin{align}
\delta_{\mathrm{rep}}(\varnothing)
&=
I(H_t;O_{t+1}\mid O_t,A_t)
\\
&=
\mathcal G.
\end{align}
With the complete history available,
\begin{equation}
\delta_{\mathrm{rep}}(H_t)
=
I(H_t;O_{t+1}\mid O_t,H_t,A_t)
=
0.
\end{equation}
Likewise, predictive sufficiency of the canonical state gives
\begin{equation}
\delta_{\mathrm{rep}}(Z_t^\star)=0,
\end{equation}
where, as in the main text, the current observation $O_t$ is supplied
separately and $Z_t^\star$ denotes only the residual predictive
coordinate.

\subsection{Proof of the Improvement Corollary}
\label{app:proof_improvement_corollary}

\begin{proof}[Proof of Corollary~\ref{cor:strict_improvement}]
By Theorem~\ref{thm:risk_decomposition},
\begin{equation}
\mathcal L(\theta,U)
=
\mathcal L_H^\star
+
\delta_{\mathrm{rep}}(U)
+
\epsilon_{\mathrm{WM}}(U).
\label{eq:corollary_start}
\end{equation}
By Eq.~\ref{eq:appendix_gap_identity},
\begin{equation}
H(O_{t+1}\mid O_t,A_t)
=
\mathcal L_H^\star+\mathcal G.
\label{eq:obs_bayes_risk}
\end{equation}
If
\begin{equation}
\delta_{\mathrm{rep}}(U)
+
\epsilon_{\mathrm{WM}}(U)
<
\mathcal G,
\end{equation}
then combining
Eqs.~\ref{eq:corollary_start} and
\ref{eq:obs_bayes_risk} gives
\begin{align}
\mathcal L(\theta,U)
&<
\mathcal L_H^\star+\mathcal G
\\
&=
H(O_{t+1}\mid O_t,A_t).
\end{align}
Therefore,
\begin{equation}
\mathcal L(\theta,U)
<
H(O_{t+1}\mid O_t,A_t),
\end{equation}
which is Eq.~\ref{eq:strict_improvement}.
\end{proof}
\section{StateAliasBench Construction and Validation}
\label{app:statealiasbench}

This appendix provides the complete construction, validation, and
information-access details for StateAliasBench. We first describe the
benchmark composition, causal templates, and strict-pair acceptance protocol,
then specify the provenance of the structured state and family-specific
semantic validators. We further audit equality at the actual world-model
request boundary, characterize the information available to the
predictive-state estimator, detail the Temporal Deadline execution protocol,
and evaluate repeated-execution stability of the controlled transitions.

\subsection{Benchmark Composition and Causal Templates}
\label{app:benchmark_composition}

StateAliasBench contains four transition-relevant state families and
25 causal templates: 9 Persistent Artifact templates, 5 Clipboard
templates, 5 Ordered Collection templates, and 6 Temporal Deadline
templates. Each family contributes 200 verified strict pairs, yielding
800 pairs and 1,600 executed branches in total.

We additionally audit the released strict benchmark independently of model
predictions. All 25 declared templates are present in the released records,
all four family-specific validators return \texttt{PASS}, and all 800
released pairs satisfy their corresponding release-level validation criteria.

\subsubsection{Template Definitions and Released Parameter Support}
\label{app:template_definitions}

Each causal template specifies
(i) how the two branches establish different transition-relevant states,
(ii) how they subsequently converge to a common model-visible prediction
boundary, and
(iii) a shared probe action whose semantic consequence reveals the hidden-state
distinction. Table~\ref{tab:template_definitions} summarizes all 25 templates.

For non-temporal families, the parameterization reported below denotes the
complete support observed in the released strict split rather than an
unobserved generator-domain range. For Temporal Deadline, the retained
generator additionally specifies the candidate construction domain and the
deterministic branch-assignment rule. No template definition is inferred from
its name alone; the definitions are grounded in the released interaction
traces, recorded actions, state evidence, and post-action semantic probes.

\begin{table*}[t]
\centering
\scriptsize
\setlength{\tabcolsep}{4pt}
\renewcommand{\arraystretch}{1.12}

\caption{
Causal templates in StateAliasBench. ``State contrast'' summarizes how
the two histories establish different structured states before returning
to a common prediction boundary. Parameterization reports the support
observed in the released strict split.
}
\label{tab:template_definitions}

\begin{tabularx}{\textwidth}{
>{\raggedright\arraybackslash}p{0.19\textwidth}
>{\raggedright\arraybackslash}X
>{\raggedright\arraybackslash}p{0.21\textwidth}
>{\raggedright\arraybackslash}p{0.18\textwidth}
}
\toprule
Template
& State contrast and convergence
& Shared probe / semantic outcome
& Released support \\
\midrule

\multicolumn{4}{l}{\textbf{Persistent Artifact}} \\
\addlinespace[2pt]

\path{copy_direct}
& A confirms COPY while B cancels; both return to the same focused
destination search.
& Search the target; present vs.\ absent.
& 20 pairs; 20 targets; 5 file extensions. \\

\path{move_direct}
& A confirms MOVE while B cancels; both return to the same focused
destination search.
& Search the target; present vs.\ absent.
& 20 pairs; 20 targets; 5 extensions. \\

\path{delete_direct}
& A confirms DELETE while B cancels; both return to the same focused
source search.
& Search the target; absent vs.\ present.
& 20 pairs; 20 targets; 5 extensions. \\

\path{copy_interference}
& Branches differ in target COPY while an independent decoy COPY is
executed oppositely; both converge to the same neutral folder.
& Open target destination; target present vs.\ absent.
& 34 pairs; one decoy operation per branch. \\

\path{move_interference}
& Target MOVE differs while an independent decoy MOVE is inverted;
both converge to the same neutral folder.
& Open target destination; target present vs.\ absent.
& 33 pairs; one decoy operation per branch. \\

\path{delete_interference}
& Target DELETE differs while a path-isolated decoy DELETE is inverted;
both converge to the same neutral folder.
& Open target source; target absent vs.\ present.
& 33 pairs; one decoy operation per branch. \\

\path{copy_delete_source}
& Both COPY; only A subsequently deletes the source copy.
& Open source; absent vs.\ present.
& 14 pairs; two target transitions. \\

\path{copy_delete_destination}
& Both COPY; only A subsequently deletes the destination copy.
& Open destination; absent vs.\ present.
& 13 pairs; two target transitions. \\

\path{move_delete_destination}
& Both MOVE; only A subsequently deletes the moved destination copy.
& Open destination; absent vs.\ present.
& 13 pairs; two target transitions. \\

\midrule
\multicolumn{4}{l}{\textbf{Clipboard}} \\
\addlinespace[2pt]

\path{original_strict}
& A/B copy different payloads, including single-write and overwrite
histories, then converge to the same blank editor.
& Paste; editor equals branch-specific payload.
& 40 pairs; 80 unique payloads; 7--9 chars. \\

\path{copy_cancel_control}
& Each copies a payload, then selects another source but cancels without
copying; both converge to the same blank editor.
& Paste; earlier copied payload is preserved.
& 40 pairs; 80 unique payloads. \\

\path{same_page_partial_selection}
& A/B copy different visible lines from the same source page.
& Paste; branch-specific selected line appears.
& 40 pairs; two selection locations. \\

\path{copy_then_mutate_source}
& A/B copy different payloads, then source text is changed without
another copy.
& Paste; pre-mutation payload appears.
& 40 pairs; 80 payloads; 40 neutral mutations. \\

\path{paste_non_consuming}
& A/B copy different payloads and paste/delete them in a scratch note
before converging.
& Paste again; original payload persists.
& 40 pairs; 80 unique payloads. \\

\midrule
\multicolumn{4}{l}{\textbf{Ordered Collection}} \\
\addlinespace[2pt]

\path{playing_queue}
& Same songs are appended in different orders; both return to the same
Now Playing anchor.
& Open queue; branch-specific ordering is revealed.
& 40 pairs; queue size 5. \\

\path{queue_mixed_insertion}
& A/B perform different head/middle/tail drag insertions on the same
six-item queue.
& Open queue; branch-specific ordering is revealed.
& 40 pairs; queue size 6; three drags/branch. \\

\path{identical_prefix_hidden_suffix}
& Same 10-item playlist and identical first eight items, with the last
two inserted in opposite order.
& Open playlist; hidden suffix ordering is revealed.
& 40 pairs; playlist size 10. \\

\path{reset_boundary}
& A/B remove all items to an empty boundary and rebuild the same item
set in different orders.
& Open playlist; post-reset ordering is revealed.
& 40 pairs; playlist size 4--5. \\

\path{multi_playlist_interleaved}
& Additions to target and distractor playlists are interleaved while
the target ordering differs.
& Open target playlist; target order only is scored.
& 40 pairs; target size 3--4. \\

\midrule
\multicolumn{4}{l}{\textbf{Temporal Deadline}} \\
\addlinespace[2pt]

\path{start_time_shift}
& Same-duration timers are started at different visible times and
converge to the same observation time.
& Shared wait + notification probe; fired vs.\ not fired.
& 34 pairs; start offset 1--6 min; wait 3--29 min. \\

\path{duration_shift}
& Same start time but short vs.\ long timer duration.
& Shared wait + notification probe.
& 34 pairs; duration 2--26 min; wait 3--19 min. \\

\path{run_pause_contrast}
& Same timer and elapsed interval; one remains running and the other
is paused.
& Shared wait + notification probe.
& 33 pairs; pre-control elapsed 1--5 min. \\

\path{pause_reset_contrast}
& Both pause; one resumes the remainder while the other resets and
restarts at full duration.
& Shared wait + notification probe.
& 33 pairs; elapsed 1--5 min; wait 3--29 min. \\

\path{pause_resume_control}
& Both pause; one remains paused while the other resumes.
& Shared wait + notification probe.
& 33 pairs; elapsed 1--5 min. \\

\path{pause_delete_restart}
& Both pause/delete, then restart using short vs.\ long durations.
& Shared wait + notification probe.
& 33 pairs; restart duration 2--26 min. \\

\bottomrule
\end{tabularx}
\end{table*}

\subsubsection{Strict-Pair Construction and Acceptance Criteria}
\label{app:strict_pair_acceptance}

Each causal template is instantiated with different objects, values,
applications, interaction parameters, and execution traces to produce two
branches with distinct structured states $\widetilde Z_t^{f,A}$ and
$\widetilde Z_t^{f,B}$. The branches are then brought to a common prediction
boundary, where the complete model-visible current observation and shared next
action are matched. The shared action is executed in the real environment,
and the resulting family-specific semantic outcome is validated independently
against the corresponding environment-side state evidence.

A candidate strict pair is retained only when all of the following conditions
hold:
\begin{enumerate}
    \item the two branches reach valid and stable prediction-boundary
    environment states;
    \item the complete model-visible current observation and shared action
    match at the evaluated WM input boundary;
    \item the independently recorded structured states differ,
    $\widetilde Z_t^{f,A}\neq\widetilde Z_t^{f,B}$;
    \item the shared action produces distinct family-specific semantic outcomes,
    $Y_{t+1}^{f,A}\neq Y_{t+1}^{f,B}$; and
    \item both branches pass the corresponding family-specific semantic
    validator.
\end{enumerate}

Executions with failed actions, invalid environment states, unstable captures,
missing targets, or semantic inconsistencies are rejected or retried rather
than admitted as strict pairs. Consequently, incidental screenshot variation
or execution failure is never used as evidence of state aliasing.

Observation equality is evaluated after the exact model-specific input
transformation rather than only on raw device screenshots; the complete audit
is given in Appendix~\ref{app:model_visible_matching}. Hidden-state
annotations and semantic targets are obtained from read-only environment-side
evidence as described in Appendices~\ref{app:ground_truth_state}
and~\ref{app:semantic_validation}. Temporal templates additionally use the
controlled execution protocol in Appendix~\ref{app:temporal_protocol}.

The strict benchmark intentionally contains multiple realizations of the same
underlying state distinction and includes compositions of state-update
primitives in addition to simpler transitions. Predictive-state training data
and its separation from the strict evaluation set are described in
Appendix~\ref{app:train_eval_separation} and Appendix~\ref{app:state_training_data}.

\subsection{Train--Evaluation Split Integrity}
\label{app:train_eval_separation}

We audit the separation between the exact four-family source pool used for
predictive-state training and the frozen StateAliasBench evaluation split.
Only examples used for gradient updates are counted on the training side;
validation and test records are excluded.

The purpose of this audit is to detect duplicated examples or trajectories
across training and evaluation, rather than to require disjointness of
low-level GUI primitives. StateAliasBench intentionally reuses common
interface states---for example neutral Settings pages, blank editors, or
shared application views---and common interaction primitives such as taps,
navigation actions, and waits. Such reuse is expected in a compositional GUI
benchmark and does not by itself constitute data leakage.

We therefore evaluate split integrity at five leakage-relevant levels:
(i) exact record identity,
(ii) complete trajectory identity,
(iii) semantic target identity,
(iv) canonical content fingerprint, and
(v) complete normalized history-trajectory signature.
The canonical content fingerprint hashes the full model-visible
training/evaluation record, including the ordered message structure and
preprocessed visual content, while the trajectory signature hashes the
complete ordered sequence of normalized historical actions.

As summarized in Table~\ref{tab:train_eval_overlap}, all five criteria have
zero train--evaluation intersection. Thus, no evaluation example duplicates a
training record, target identity, complete model-visible record, or full
interaction trajectory.

\begin{table}[t]
\centering
\small
\caption{
Train--evaluation split-integrity audit for predictive-state recovery.
All reported criteria test duplicate examples or complete trajectories,
rather than reuse of lower-level GUI frames or action primitives.
}
\label{tab:train_eval_overlap}
\begin{tabular}{lrrr}
\toprule
Audit criterion & Train & Eval & Intersection \\
\midrule
Record identifier
& 2,538 & 1,600 & 0 \\
Trajectory identifier
& 1,658 & 800 & 0 \\
Semantic target identity
& 825 & 611 & 0 \\
Canonical content fingerprint
& 2,516 & 1,600 & 0 \\
Whole-trajectory signature
& 1,741 & 854 & 0 \\
\bottomrule
\end{tabular}
\end{table}

Lower-level visual states and action primitives are intentionally reusable
across splits and are therefore not treated as disjointness criteria; the audit
instead tests whether their complete composition forms a duplicated training
or evaluation example.

\subsection{Ground-Truth Structured State and Provenance}
\label{app:ground_truth_state}

StateAliasBench associates each branch with a structured transition-relevant
state $\widetilde Z_t^f$. Importantly, this ground-truth state is obtained from
environment-side evidence that is independent of the model-visible GUI,
rather than inferred from the screenshots used by either the evaluated world
model or the predictive-state estimator.

Where direct application or system state is available, we record the realized
state through read-only environment-side evidence. For Persistent Artifact,
the annotation records whether the target artifact exists at the specified
source and destination locations at the prediction boundary. For Clipboard,
the annotation records the current clipboard payload and payload type using an
independently recorded clipboard-state label. For Ordered Collection, the
annotation records the collection type and realized complete ordering of the
target queue or playlist together with environment-side order evidence.

\paragraph{Temporal Deadline state.}
Temporal Deadline requires a different treatment because the relevant state
evolves continuously with time. We represent its structured state as
\begin{equation}
\widetilde Z_t^{\mathrm{temp}}
=
(\ell_t,\rho_t),
\qquad
\rho_t=
\begin{cases}
\tau_t^{\mathrm{end}}-\tau_t,
& \ell_t=\mathrm{running},\\
r_t^{\mathrm{pause}},
& \ell_t=\mathrm{paused},\\
\bot,
& \text{otherwise},
\end{cases}
\label{eq:temporal_state}
\end{equation}
where $\ell_t$ denotes the timer lifecycle state and $\rho_t$ its effective
remaining-time state. For a running timer, the effective remaining time is the
difference between the timer deadline $\tau_t^{\mathrm{end}}$ and the current
logical time $\tau_t$; for a paused timer, it is the retained paused remainder
$r_t^{\mathrm{pause}}$. This representation distinguishes timers that can be
visually indistinguishable at the prediction boundary but evolve differently
under the same subsequent wait or control action.

The canonical timer annotation is constructed under the benchmark's
controlled logical-time protocol. It records the timer lifecycle and effective
remaining time at the prediction boundary and serves as the Temporal Deadline
supervision target. Simulator-side logical or virtual timestamps, timer
backend diagnostics, and time-warp implementation variables are used only for
construction and auditing and are never provided to the predictive-state
estimator.

Across all families, environment-side state records are used only for
benchmark construction, supervision, and validation. They are never included
in the observation-only world-model input. At predictive-state inference time,
the estimator receives only the family/schema instruction and model-visible
interaction history; benchmark-internal state records, backend/ADB evidence,
logical-time variables, and the ground-truth $\widetilde Z_t^f$ are excluded.

For every accepted strict pair,
\[
\widetilde Z_t^{f,A}
\neq
\widetilde Z_t^{f,B}.
\]
The release audit re-reads the corresponding family-specific environment-side
evidence and independently verifies this state contrast before the pair is
included. Thus, $\widetilde Z_t^f$ is an external annotation of the realized
environment state, rather than a quantity derived from the prediction target
or from the evaluated model itself.

\subsection{Family-Specific Semantic Validation}
\label{app:semantic_validation}

Strict-pair validity is defined through family-specific semantic outcomes
rather than arbitrary screenshot inequality. For each family $f$, let
\[
Y_{t+1}^f
=
\psi_f
\!\left(
\operatorname{Parse}(O_{t+1})
\right)
\]
denote the semantic consequence of the shared probe action. The semantic
variable is deliberately narrower than the full post-action GUI: it retains
only the outcome whose dependence on the transition-relevant state is being
tested.

At benchmark-construction time, $Y_{t+1}^f$ is obtained from independently
recorded post-action environment evidence rather than from the evaluated world
model. For Persistent Artifact, the outcome records whether the tracked
artifact is present on the queried Files surface. For Clipboard, it is the
payload inserted into the target editor by the shared PASTE action. For
Ordered Collection, it is the effective ordered sequence exposed by opening
the target playlist or playing queue. For Temporal Deadline, it is the timer
lifecycle outcome revealed by the controlled wait-and-notification probe.

A released strict pair must satisfy
\[
O_t^A = O_t^B,\qquad
A_t^A = A_t^B,\qquad
\widetilde Z_t^{f,A}
\neq
\widetilde Z_t^{f,B},
\]
together with
\[
Y_{t+1}^{f,A}
\neq
Y_{t+1}^{f,B}.
\]
The semantic contrast is therefore tied to the diagnosed state variable rather
than to incidental visual variation.

\paragraph{Generated-prediction readout.}
For a generated next observation $\widehat O_{t+1}$, we evaluate the same
family-level semantic variable,
\begin{equation}
\widehat Y_{t+1}^f
=
\psi_f
\!\left(
\operatorname{Parse}_{\mathrm{vis}}
(\widehat O_{t+1})
\right),
\label{eq:strict_generated_semantics}
\end{equation}
but instantiate $\operatorname{Parse}_{\mathrm{vis}}$ on the rendered
prediction rather than on raw HTML source. Generated HTML is rendered offline
in Chromium with external network requests disabled. The parser extracts text,
editable-control values, and limited control/icon evidence only from the
initial rendered viewport.

A text or control region contributes semantic evidence only if
(i) its element is connected and is not hidden by
\texttt{display:none}, \texttt{visibility:hidden}, or near-zero opacity;
(ii) it has positive rendered geometry;
(iii) at least $98\%$ of its rendered area lies within the initial viewport;
and
(iv) at least $80\%$ of sampled points are not materially occluded by another
element. Thus, hidden DOM text, below-the-fold content, materially clipped
content, and substantially occluded content cannot satisfy a strict semantic
target. Visible values of \texttt{input}, \texttt{textarea}, and editable
controls are included. The same visibility rule is used for both evaluated
WMs, under their respective rendering viewports.

Before family-specific matching, visible textual evidence is normalized by
HTML-entity decoding, case folding, and whitespace collapsing. Scoring is
deterministic and uses neither OCR nor an LLM-based judge.

\paragraph{Persistent Artifact.}
Persistent Artifact uses two probe types.

For a filename-search probe, the semantic variable is whether the target
artifact appears as a result in the benchmark's Downloads-scoped search
surface. An explicit visible ``No matches'' state resolves the prediction as
absent. Otherwise, when the typed target appears in the visible search
control, presence requires an additional occurrence in a visible result text
node; the query echo itself is therefore not sufficient evidence of a
positive result. The generated surface must also visibly identify the
Downloads scope and contain a plausible multi-row software-keyboard layout.

For an open-folder probe, presence is determined from the viewport-visible
folder listing. The readout additionally checks that the primary folder scope
and its parent scope agree with the queried path, rejects a contradictory
visible software keyboard, and treats duplicated filename/folder rows as an
invalid listing. Artifact identifiers are compared after text normalization
using string containment together with these scope and structural checks,
rather than byte-exact HTML matching. The frozen strict evaluation was audited
to contain no target/anchor substring collisions that would make this matching
ambiguous.

The resulting semantic outcome is therefore
\[
Y_{t+1}^{\mathrm{pers}}
\in
\{\mathrm{present},\mathrm{absent}\}.
\]

\paragraph{Clipboard.}
Let $v_b$ denote the branch-specific expected clipboard payload. A prediction
is semantically correct only if $v_b$ occurs in visible editor-body evidence
rather than merely in top-level application chrome or hidden metadata. We use
a scale-relative boundary at $6.5\%$ of viewport height to exclude the
top application/title region. The sibling branch's clipboard payload must
simultaneously be absent.

Thus,
\[
\widehat Y_{t+1}^{\mathrm{clip}}
=
v_b
\]
only when the expected payload is visibly grounded in the editor body and no
counterfactual sibling payload is present. Clipboard payloads in the strict
split are nonempty and branch specific.

\paragraph{Ordered Collection.}
For Ordered Collection, the semantic target is the complete ordered sequence
\[
Y_{t+1}^{\mathrm{ord}}
=
(x_1,\ldots,x_n).
\]
The generated-output parser identifies candidate primary rows using their
rendered position, horizontal alignment, and typography, then orders the
resulting row labels from top to bottom. A prediction is correct only when
every expected item occurs exactly once as a primary visible row, no foreign
benchmark item appears in that row set, and the extracted sequence equals the
target sequence exactly. Partial sequences, duplicate items, extra
counterfactual items, and the sibling branch's ordering are therefore scored
incorrect.

This scoring rule is intentionally order sensitive: merely containing the
correct item set does not satisfy the semantic target.

\paragraph{Temporal Deadline.}
Temporal Deadline distinguishes three post-action outcomes,
\[
Y_{t+1}^{\mathrm{temp}}
\in
\{\mathrm{none},\mathrm{running},\mathrm{fired}\}.
\]

A \emph{fired} prediction requires visible fired-state evidence
(\emph{Time's up}, or an equivalent zero timer state) together with a visible
\emph{Stop} control. It is rejected if the same rendered surface contains
contradictory running-timer evidence, a positive remaining duration, a
\emph{Pause} control, alarm-state content, duplicated state-bearing controls,
or other internally inconsistent timer structure.

A \emph{running} prediction requires a visible nonzero remaining duration and
a visible \emph{Pause} control, with no fired-state or alarm contradiction.
If the environment-side target has remaining time $r$ at the end of the
controlled wait, the rendered countdown is accepted only in
\[
[r-6,r]
\]
seconds, clipped below at zero. This narrow tolerance accounts for ordinary
capture delay after the deterministic wait rather than permitting an
arbitrary nonzero timer value.

A \emph{none} prediction requires absence of timer-state evidence while still
being grounded on a recognizable notification-shade surface; merely omitting
timer text on an unrelated GUI is insufficient. For all Temporal outcomes,
the required notification-shade footer must remain visibly complete.
Timer-like outputs that contain mutually contradictory state evidence are
classified as invalid rather than assigned to either branch.

\paragraph{Correctness aggregation.}
For branch $b\in\{A,B\}$, semantic correctness is
\[
\mathrm{BC}_i^b
=
\mathbf 1
\!\left[
\widehat Y_{i,t+1}^{f,b}
=
Y_{i,t+1}^{f,b}
\right].
\]
The strict-pair metric is
\[
\mathrm{PairCorrect}_i
=
\mathrm{BC}_i^A
\mathrm{BC}_i^B,
\]
so a pair is correct only when both branch-specific futures are recovered.
This prevents a model from receiving credit for producing one plausible
future while failing to distinguish the two aliased states.

\paragraph{Failure handling.}
Semantic scoring is sample preserving: unsuccessful or malformed predictions
are not silently removed. A failed model request is retained and scored
incorrect after the configured inference retries are exhausted. Missing,
partial, duplicated, contradictory, or otherwise invalid family-specific
outcomes fail the corresponding semantic rule.

HTML is not repaired before scoring. Chromium may nevertheless render a
syntactically incomplete document, so absence of a final closing HTML tag is
recorded as a diagnostic rather than used as an automatic semantic-failure
criterion. Structural dependence on unavailable external resources is
rejected by an offline self-containedness check. If rendering itself fails,
the evaluator retains the sample and falls back to text extraction rather
than dropping it; such fallback text does not have the viewport-visibility
guarantees above.

For the staged interface evaluation in
Table~\ref{tab:state_interface_progression}, all 6,400 generated predictions
have corresponding semantic scores: there are no inference failures, renderer
failures, missing scores, or samples removed from aggregation. Consequently,
the reported differences across interface conditions are not produced by
condition-specific filtering or failed-example exclusion.

For model predictions, these semantic rules deliberately do not require
pixel-level agreement with the recorded future GUI. They evaluate whether the
prediction realizes the state-dependent consequence specified by the strict
pair, while general next-observation fidelity is evaluated separately in
Appendix~\ref{app:visual_fidelity}.

\subsection{Complete Model-Visible Pair Matching}
\label{app:model_visible_matching}

For the observation-only condition, pair equality is verified at the complete
serialized request boundary used by each evaluated world model, rather than
only at the raw screenshot level.

For Code2World, the request contains its fixed system prompt, the current GUI
after app-region cropping and model-specific action visualization, a
normalized action description, the family-specific transition contract, and
any pair-shared entity context required by the task. For gWorld, the request
contains the corresponding model-native prompt, the cropped current GUI
without the Code2World visual hint, the normalized action, the family-specific
transition contract, and the same applicable pair-shared entity context.

Benchmark bookkeeping fields---including pair identifier, branch identifier,
template identifier, target observation, scoring metadata, and ground-truth
state---are not included in the observation-only request. Timestamp fields are
removed from the current action representation, and benchmark-side state
evidence is never supplied to this condition.

We canonicalize the complete client-side request payload and compute a
SHA-256 hash for each branch. Across all 800 strict pairs, branches A and B
have identical canonical request hashes for Code2World (800/800 pairs) and
for gWorld (800/800 pairs). Thus, under the observation-only condition, both
branches provide identical serialized conditioning to the evaluated world
model while requiring different validated semantic futures.

This equality requirement applies specifically to the observation-only
condition. Raw-history and state-conditioned variants intentionally supply
branch-dependent historical or state information and are therefore not
expected to have identical A/B requests.

\subsection{Predictive-State Information Access}
\label{app:state_information_access}

The predictive-state estimator receives an ordered interaction history
together with a family-specific task/schema instruction $q_f$. At inference
time, its effective input consists of the system/task instruction, the ordered
historical GUI observations and actions, and the current neutral observation:
\[
q_f,\quad
O_0,A_0,O_1,\ldots,A_{t-1},O_t.
\]
The prediction-boundary shared action $A_t$ is not included in the
state-estimator input.

The four family-specific schemas specify the structure of the state to be
recovered: clipboard payload, persistent-artifact existence,
ordered-collection contents, or temporal timer state. StateAliasBench does
not require the unified estimator to infer the family itself; the benchmark
family determines which schema instruction and output parser are used. The
same unified estimator parameters are used across all families.

Importantly, benchmark-internal identifiers and label-side evidence are not
provided to the estimator. Template, pair, branch, and sample identifiers are
used only for bookkeeping. Backend/ADB state, timer backend snapshots,
top-level supervision fields, and ground-truth $\widetilde Z_t^f$ are excluded
from inference input. During training, the canonical structured state appears
only as the autoregressive assistant target used for supervision.

\begin{table}[t]
\centering
\small
\caption{
Information-access audit for the predictive-state estimator on
StateAliasBench.
}
\label{tab:state_information_access}
\begin{tabular}{lccc}
\toprule
Information & Train input & Test input & Role \\
\midrule
Family/schema instruction $q_f$ & Yes & Yes & Task specification \\
Ordered GUI history & Yes & Yes & Model-visible \\
Historical actions & Yes & Yes & Model-visible \\
Current GUI observation & Yes & Yes & Model-visible \\
Relative duration (Temporal) & Yes & Yes & Model-visible \\
Wall-clock information in pixels & Possible & Possible & Model-visible \\
Future/shared action $A_t$ & No & No & Evaluation only \\
Template/pair/branch ID & No & No & Bookkeeping \\
Backend/ADB state & No & No & Label provenance \\
Logical/virtual simulator time & No & No & Label/audit only \\
Time-warp implementation label & No & No & Audit only \\
Ground-truth $\widetilde Z_t^f$ & Target only & No & Supervision \\
\bottomrule
\end{tabular}
\end{table}

\subsection{Temporal Execution and Time Information}
\label{app:temporal_protocol}

Temporal Deadline requires temporal evidence to remain observable from the
interaction history while preventing access to simulator-side timing state.
The predictive-state estimator may therefore use evidence available to a user
observing the interaction, including wall-clock information displayed in GUI
screenshots and relative durations associated with historical actions.

In contrast, structured simulator timing information is excluded from the
estimator input. This includes virtual execution timestamps, logical/virtual
epochs, timer backend snapshots, scoring time bases, and the implementation
label used by the benchmark's controlled time-warp mechanism. The estimator
therefore observes interaction-visible temporal evidence but not the canonical
timer state $\widetilde Z_t^{\mathrm{temp}}$ or the logical-time variables from
which that state is constructed.

For benchmark execution, long waits are implemented using the audited
logical-time/time-warp protocol rather than literal long wall-clock sleeps.
The same protocol is applied to both branches of a strict pair. The benchmark
advances the controlled logical time, executes the shared observable probe
action, and determines the semantic outcome from independently recorded
timer-fire and lifecycle evidence.

Temporal strict pairs are accepted only when the two branches reach the same
model-visible prediction boundary and shared action while retaining different
canonical timer states, and when the controlled transition produces the
expected branch-specific semantic contrast. Transient capture-time variation
is not used as the semantic label. The logical-time mechanism is used only to
execute, reconstruct, and validate the environment transition; it is never
exposed as structured timing input to the predictive-state estimator.

\subsection{Repeated-Execution Stability}
\label{app:replay_stability}

A single realized difference in semantic outcome does not, in a stochastic
environment, by itself establish a difference between the underlying
controlled transition laws. We therefore perform an additional
repeated-execution audit covering all 25 causal templates.

We deterministically select one verified strict pair from each template and
replay both branches three times, yielding 50 replayed branches and 150 fresh
controlled executions. No world model or predictive-state estimator is
involved in this audit. Temporal templates use the same audited logical-time
advancement protocol as benchmark construction.

Across all 50 branches, the family-specific semantic outcome is identical
across the three repeated executions. We observe no failed action, reset
failure, capture failure, or within-branch semantic disagreement. Under
faithful reconstruction of the recorded benchmark trajectory, all 150
executions reproduce their expected semantic outcomes.

\subsection{Representative Strict-Pair Trajectories}
\label{app:representative_pairs}

Figures~\ref{fig:traj_persistent}--\ref{fig:traj_temporal} provide one
representative strict-pair trajectory from each state family. Rather than
showing every intermediate interaction frame, we retain the history steps that
are sufficient to reveal how the branch-specific hidden state is established;
omitted intermediate steps are indicated by ellipses. The displayed history
consists only of model-visible GUI observations and recorded actions.
Ground-truth structured states $\widetilde Z_t^{f,A}$ and
$\widetilde Z_t^{f,B}$ are shown separately as audit-only evidence and are
not part of the world-model input.

Each example exhibits the same diagnostic structure:
\[
H_t^A \neq H_t^B,\qquad
O_t^A = O_t^B,\qquad
\widetilde Z_t^{f,A} \neq \widetilde Z_t^{f,B},
\]
followed by the same shared action
\[
A_t^A = A_t^B,
\]
but different validated semantic outcomes
\[
Y_{t+1}^{f,A} \neq Y_{t+1}^{f,B}.
\]
The examples therefore illustrate why the current GUI alone is insufficient:
the transition-relevant distinction is recoverable from the interaction
history but absent from the common prediction-boundary observation.

\paragraph{Persistent Artifact.}
Figure~\ref{fig:traj_persistent} shows a file-move example. The two branches
differ in whether the target artifact has actually been moved, but subsequent
navigation returns both branches to the same empty search interface. At this
shared prediction boundary, the visible GUI is identical while the realized
file-system state differs. Typing the same artifact name therefore reveals
different futures: one branch returns the moved file, while the other does not.

\begin{figure*}[t]
    \centering
    \includegraphics[width=\textwidth]
    {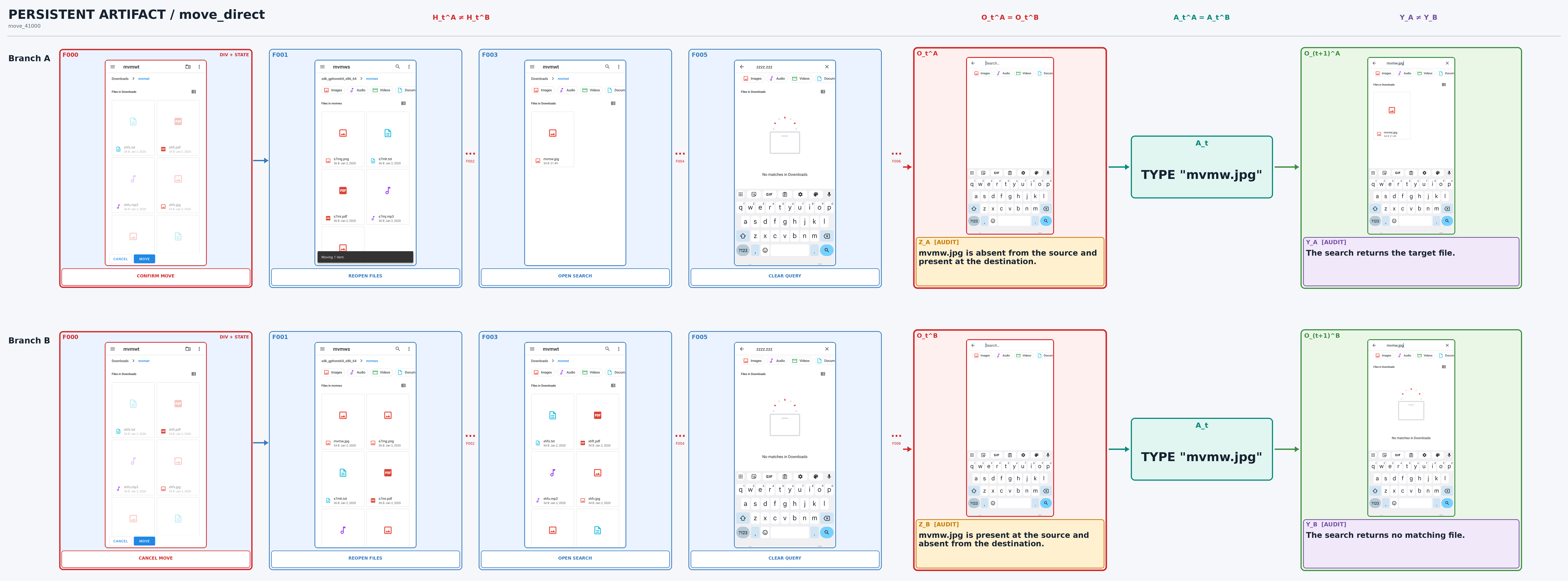}
    \caption{
    \textbf{Persistent Artifact: \texttt{move\_direct}.}
    Representative strict pair \texttt{move\_41000}. The histories differ
    in whether the file move is completed, but both branches return to the
    same prediction-boundary GUI $O_t$. Audit-only state evidence shows
    different source/destination existence states. Executing the same
    search action $A_t$ produces different next observations and semantic
    artifact-presence outcomes.
    }
    \label{fig:traj_persistent}
\end{figure*}

\paragraph{Clipboard.}
Figure~\ref{fig:traj_clipboard} shows two histories that place different text
values into the system clipboard. Both branches subsequently open the same
blank editor, making their current GUI observations indistinguishable. The
same paste action nevertheless inserts the branch-specific clipboard payload,
directly exposing the hidden clipboard state.

\begin{figure*}[t]
    \centering
    \includegraphics[width=\textwidth]
    {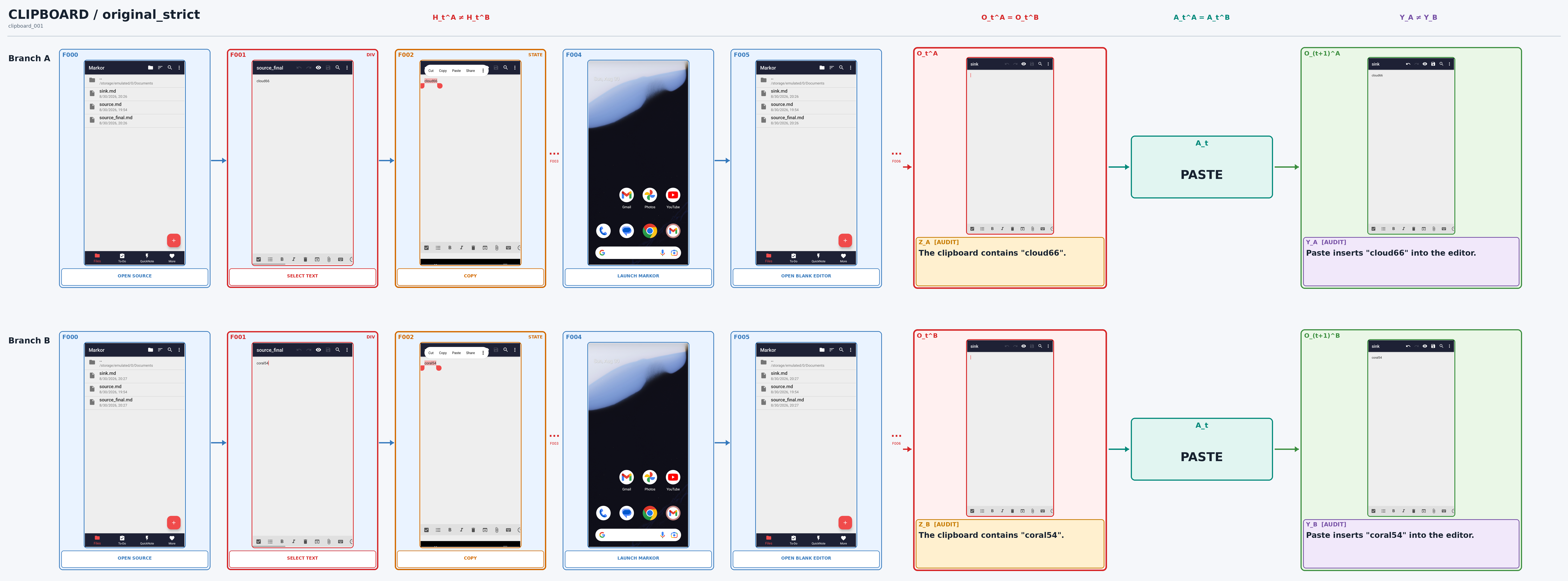}
    \caption{
    \textbf{Clipboard: \texttt{original\_strict}.}
    Representative strict pair \texttt{clipboard\_001}. The two histories
    copy different payloads before converging to the same blank editor
    $O_t$. The current clipboard contents are not visible in that
    observation. Applying the same paste action $A_t$ yields different
    editor contents, revealing the branch-specific clipboard state.
    }
    \label{fig:traj_clipboard}
\end{figure*}

\paragraph{Ordered Collection.}
Figure~\ref{fig:traj_ordered} illustrates an order-sensitive hidden state.
The two branches construct different effective playing-queue orders through
different interaction histories, yet both return to the same Now Playing
interface before prediction. Opening the queue with the same action exposes
the previously hidden ordering difference.

\begin{figure*}[t]
    \centering
    \includegraphics[width=\textwidth]
    {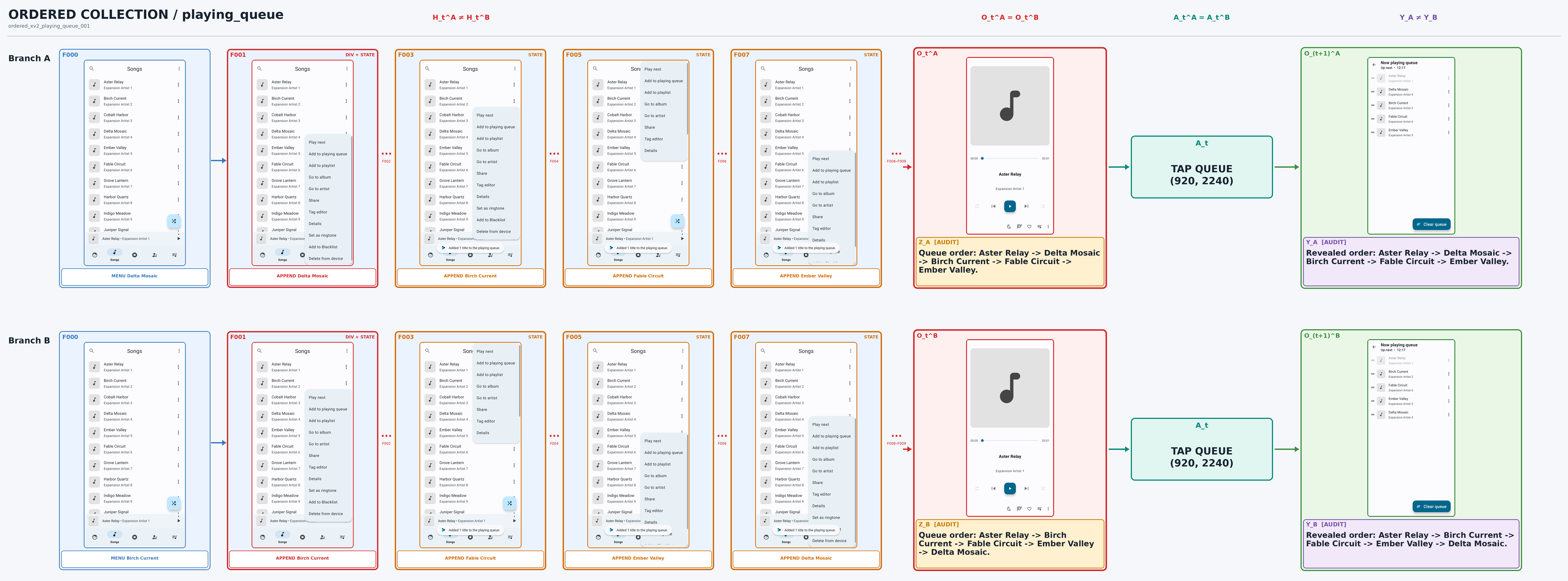}
    \caption{
    \textbf{Ordered Collection: \texttt{playing\_queue}.}
    Representative strict pair
    \texttt{ordered\_xv2\_playing\_queue\_001}. Branch-specific histories
    construct different effective queue orders while converging to the same
    Now Playing observation $O_t$. The shared queue-opening action $A_t$
    reveals different ordered sequences in the next observation.
    }
    \label{fig:traj_ordered}
\end{figure*}

\paragraph{Temporal Deadline.}
Figure~\ref{fig:traj_temporal} shows a temporal example in which the two
histories establish different effective timer deadlines. The history contains
user-visible temporal evidence, including displayed clock anchors and relative
action durations, but both branches are eventually brought to the same Date
\& Time interface. The same controlled wait and notification-shade action
then produces different timer-fire outcomes because the effective
remaining-time state differs.

\begin{figure*}[t]
    \centering
    \includegraphics[width=\textwidth]
    {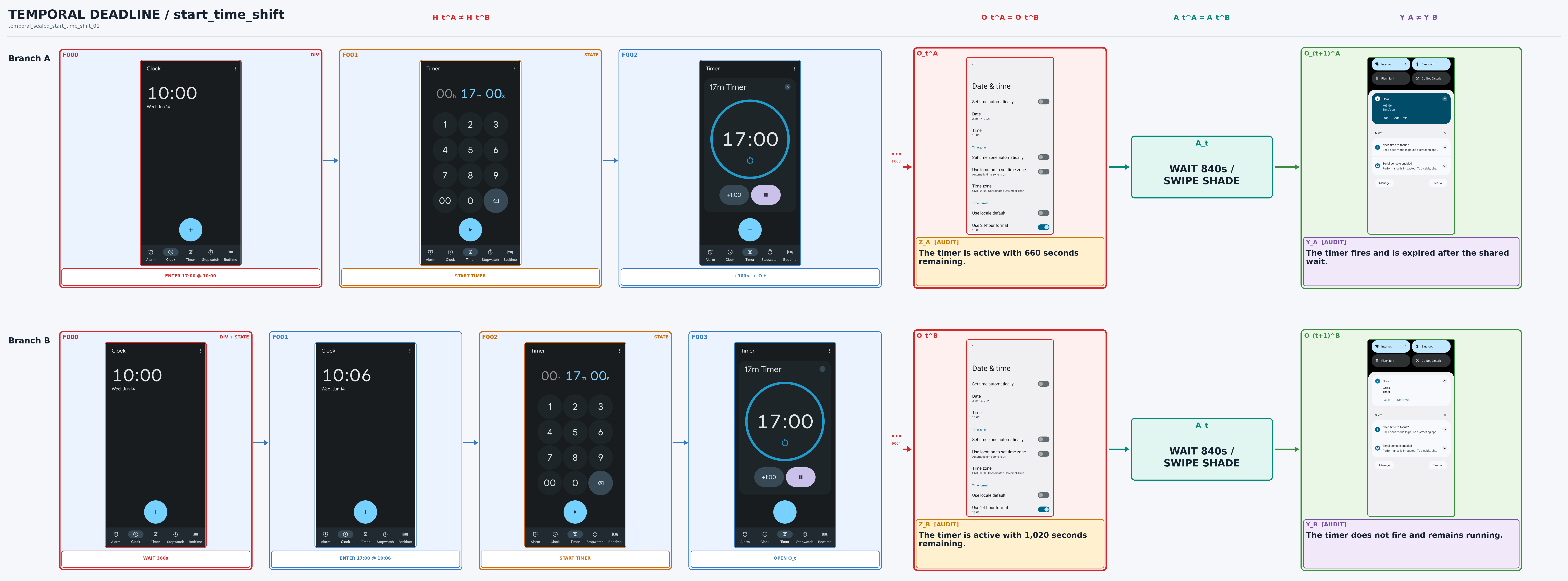}
    \caption{
    \textbf{Temporal Deadline: \texttt{start\_time\_shift}.}
    Representative strict pair
    \texttt{temporal\_sealed\_start\_time\_shift\_01}. The histories
    establish different effective remaining times while eventually producing
    the same prediction-boundary GUI $O_t$. The same controlled wait-and-probe
    action $A_t$ causes the timer to fire in one branch but not the other.
    Simulator-side logical-time state is shown only through audit annotations
    and is not supplied to the predictive-state estimator.
    }
    \label{fig:traj_temporal}
\end{figure*}

\FloatBarrier
\section{Predictive-State Recovery: Training and Implementation Details}
\label{app:state_recovery}

This appendix provides the complete training and implementation details for
predictive-state recovery. We describe the family-specific training data,
state schemas and multimodal serialization, specialist teachers, unified
multi-teacher distillation, direct joint-training controls, and inference
cost. All specialist, Joint CE, and distilled unified estimators are trained
from the same underlying family-specific training pools where applicable;
their differences therefore arise from model specialization and supervision
rather than access to additional interaction trajectories.

\subsection{Training Data and Split Construction}
\label{app:state_training_data}

We train the predictive-state estimator on family-specific interaction
histories paired with canonical structured-state targets. Across all families,
each example follows the common prediction interface
\[
(H_t,q_f)
\longmapsto
\widetilde Z_t^f,
\]
where $H_t$ is the ordered GUI interaction history, $q_f$ specifies the state
schema for family $f$, and $\widetilde Z_t^f$ is the corresponding
structured-state supervision.

The released training set contains 2,538 examples in total, comprising
898 Persistent Artifact, 160 Clipboard, 1,080 Ordered Collection, and
400 Temporal Deadline examples. The corresponding validation sets contain
110, 20, 96, and 48 examples, respectively.
Table~\ref{tab:state_training_data} summarizes the construction of each
family-specific dataset.

\begin{table*}[t]
\centering
\small
\caption{
Family-specific training data for predictive-state recovery. The specialist
teachers and unified estimators are trained from the same underlying
family-specific datasets.
}
\label{tab:state_training_data}

\begin{tabular}{
p{0.16\textwidth}
r
r
p{0.55\textwidth}}
\toprule
Family & Train & Val. & Construction \\
\midrule

Persistent Artifact
& 898
& 110
& Interaction histories centered on a single tracked artifact. Each example
contains at most one state transition affecting the tracked artifact, while
operations involving unrelated files or paths may appear as distractors.
The target state specifies whether the artifact exists at its source and
destination locations.
\\

Clipboard
& 160
& 20
& Atomic clipboard histories containing a single effective copy event.
The examples isolate persistence of clipboard content across subsequent GUI
navigation and provide the current clipboard payload as the structured-state
target.
\\

Ordered Collection
& 1,080
& 96
& Order-sensitive interaction histories covering playlist and playing-queue
construction, insertion, repositioning, reset, and interleaved collection
operations. The target state is the complete effective item ordering at the
final observation.
\\

Temporal Deadline
& 400
& 48
& Independently collected timer histories spanning eight atomic temporal
transition types. The trajectories vary timer creation, elapsed time,
pause/resume behavior, reset, deletion, and restart operations. The target
state specifies whether the timer is active and its effective remaining time
at the final observation.
\\

\midrule
Total
& 2,538
& 274
& --- \\
\bottomrule
\end{tabular}
\end{table*}

\paragraph{Atomic supervision and compositional evaluation.}
The training data are designed primarily to teach the state update semantics
needed to recover each family-specific structured state. Persistent Artifact
examples isolate the state of one tracked artifact, Clipboard examples contain
a single effective copy event, and Temporal Deadline training spans atomic
timer-transition types. Ordered Collection requires longer order-sensitive
constructions, but supervision is still defined by the final canonical
collection state.

The strict StateAliasBench evaluation is deliberately not restricted to these
simplest training patterns. Its causal templates include compositions of
previously observed update primitives, interfering operations, hidden suffixes,
resets, and branch-specific histories that subsequently converge to a common
prediction boundary; see Appendix~\ref{app:template_definitions}. Thus,
evaluation tests whether the learned update semantics can be composed to
recover the current state from complete histories rather than merely
recognizing duplicated training trajectories.

\paragraph{Common training data across estimators.}
The four family-specific specialists are trained on their corresponding
datasets in Table~\ref{tab:state_training_data}. The unified Joint CE and
multi-teacher-distillation estimators are trained on the union of the same
four training sets. Consequently, comparisons among the specialists, direct
joint training, and multi-teacher distillation control for the underlying
training examples; the distilled estimator differs in its supervision
objective rather than in access to additional interaction trajectories.

\paragraph{Family-balanced optimization.}
The four datasets differ substantially in size. To prevent the larger
families from dominating unified training, we retain every training example
and balance families through loss weighting rather than resampling. Let $N_f$
denote the number of training examples from family $f$, and let
\[
N=\sum_f N_f=2538.
\]
The loss associated with an example from family $f$ is weighted by
\[
w_f=\frac{N}{4N_f}.
\]
This yields
\[
\begin{aligned}
w_{\mathrm{persistent}} &\approx 0.7066, &
w_{\mathrm{clipboard}} &= 3.9656,\\
w_{\mathrm{ordered}} &= 0.5875, &
w_{\mathrm{temporal}} &= 1.5863.
\end{aligned}
\]
Since
\[
N_f w_f=\frac{N}{4}
\]
for every family, each family contributes equal aggregate weight to the
training objective despite unequal dataset sizes. The same family-balancing
scheme is used for the direct Joint CE and distilled unified estimators.

\paragraph{Separation from StateAliasBench evaluation.}
The predictive-state training data are distinct from the strict-pair
evaluation examples in StateAliasBench. As detailed in
Appendix~\ref{app:train_eval_separation}, the two splits have zero intersection
in exact record identities, trajectory identities, semantic target identities,
canonical content fingerprints, and complete normalized history-trajectory
signatures. Low-level GUI states, actions, and state-update primitives may
recur compositionally across datasets, as expected for an interactive GUI
environment, while complete training and evaluation examples remain
disjoint.

\subsection{Input Representation and State Schemas}
\label{app:state_input_representation}

The predictive-state estimator takes an ordered multimodal interaction history
together with a family-specific schema instruction. Consistent with
Section~\ref{sec:learning_predictive_state}, for family $f$ it implements
\[
\widehat Z_t^f
=
\operatorname{Decode}\,
G_{\phi}(H_t,q_f),
\]
where
\[
H_t=(O_0,A_0,O_1,\ldots,A_{t-1},O_t)
\]
contains all model-visible observations and historical actions up to the
prediction boundary. The future shared action $A_t$ is excluded from the
state-estimator input.

Across all families, the interaction history is serialized in temporal order
by interleaving GUI observations and textual action representations. The final
observation $O_t$ is included without a subsequent action. During training,
the canonical structured state $\widetilde Z_t^f$ appears only as the
assistant-side autoregressive target; during inference, this target is removed
and the estimator generates the state autoregressively.

Table~\ref{tab:state_schemas} summarizes the family-specific state interfaces.

\begin{table*}[t]
\centering
\small
\caption{
Family-specific predictive-state schemas. The same unified estimator is
conditioned by $q_f$ to emit the corresponding canonical structured state.
}
\label{tab:state_schemas}

\begin{tabular}{
p{0.17\textwidth}
p{0.37\textwidth}
p{0.36\textwidth}}
\toprule
Family & Schema instruction & Canonical state target \\
\midrule

Persistent Artifact
&
Predict the current persistent state of the specified artifact from the
ordered Files interaction history while ignoring unrelated artifacts and
paths.
&
\texttt{\{"source\_exists": bool,}
\newline
\texttt{"destination\_exists": bool\}}
\\

Clipboard
&
Infer the current hidden clipboard state from the complete ordered GUI
history and actions.
&
\texttt{\{"clipboard\_type": "text",}
\newline
\texttt{"clipboard\_text": string\}}
\\

Ordered Collection
&
Infer the current ordered contents of the specified playlist or playing
queue while preserving the exact effective item order.
&
\texttt{\{"collection\_type": string,}
\newline
\texttt{"ordered\_items": [string, ...]\}}
\\

Temporal Deadline
&
Infer the current timer state from the ordered visible history, including
interaction-visible temporal evidence.
&
\texttt{\{"timer\_active": bool,}
\newline
\texttt{"remaining\_seconds": int\}}
\\

\bottomrule
\end{tabular}
\end{table*}

\paragraph{Multimodal history serialization.}
Persistent Artifact and Temporal Deadline serialize the history as an initial
task specification followed by alternating GUI images and textual action
records. Clipboard and Ordered Collection additionally use explicit frame and
action indices to make temporal ordering unambiguous. In all cases, the final
GUI observation is the prediction boundary from which the structured state
must be recovered.

Actions are represented as textual JSON-like records containing the
model-visible execution arguments relevant to the corresponding interaction,
such as coordinates, key codes, entered text, or elapsed duration. They are
provided as part of the language context rather than through a separate action
encoder.

\paragraph{Visual preprocessing.}
Each family uses the image preprocessing associated with its released training
data. Persistent Artifact crops the application region from the native Files
screenshot and resizes it to $256\times512$. Clipboard and Temporal Deadline
use $256\times512$ RGB inputs, while Ordered Collection uses
$224\times352$ RGB inputs. The same preprocessing is used consistently within
each family's training and evaluation pipeline.

\paragraph{Hard supervision and loss masking.}
The ground-truth canonical state $\widetilde Z_t^f$ provides the
\emph{hard supervision target}. The training objective is applied only to its
assistant-side serialized JSON tokens. Let $\mathcal T_i$ denote the
supervised token positions for example $i$. Prompt, image, history-action,
and padding tokens are masked from the language-modeling loss:
\[
\mathcal L_{\mathrm{CE}}^{(i)}
=
-\frac{1}{|\mathcal T_i|}
\sum_{t\in\mathcal T_i}
\log
p_{\phi}
\!\left(
y_{i,t}
\mid
H_i,q_{f_i},y_{i,<t}
\right).
\]
At inference time, the assistant target is completely removed and a generation
prompt is appended after the model-visible history. The family specialists
introduced below provide an additional \emph{soft} teacher distribution for
the unified estimator; they do not replace the hard structured-state target.

\subsection{Family-Specific Specialist Teachers}
\label{app:specialist_teachers}

We first train one predictive-state specialist $T_f$ for each state family.
All specialists are parameter-efficient adaptations of the same 2B Qwen3.5
base model and optimize the canonical state-generation objective defined in
Appendix~\ref{app:state_input_representation}. The base model is kept fixed
and only low-rank attention adapters are optimized.

Each specialist uses LoRA rank $r=16$, scaling factor $\alpha=32$, and
dropout $0.05$. LoRA modules are applied to the attention projections of the
language backbone, including the $q$, $k$, $v$, and output projections of
standard self-attention and the corresponding projections of
linear-attention blocks. Vision and MLP parameters remain frozen.

We optimize all specialists with AdamW using
\[
\eta=2\times10^{-4},
\qquad
(\beta_1,\beta_2)=(0.9,0.95),
\qquad
\epsilon=10^{-8},
\]
weight decay $0.01$, cosine learning-rate decay, and gradient clipping at
norm $1.0$. Training uses bfloat16 arithmetic and an effective batch size of
eight examples.

Table~\ref{tab:specialist_training} reports the family-specific training
schedule.

\begin{table}[t]
\centering
\small
\caption{
Training schedules of the four family-specific specialist teachers.
All models use rank-16 LoRA and an effective batch size of eight.
}
\label{tab:specialist_training}

\begin{tabular}{lrrr}
\toprule
Family & Train & Epochs & Optim. steps \\
\midrule
Persistent Artifact & 898  & 4 & 452 \\
Clipboard            & 160  & 4 & 80  \\
Ordered Collection   & 1080 & 2 & 270 \\
Temporal Deadline    & 400  & 4 & 200 \\
\bottomrule
\end{tabular}
\end{table}

Temporal Deadline additionally balances its eight atomic transition types
within each effective batch, ensuring that no temporal transition type
dominates specialist training. The resulting four specialists serve both as
independent predictive-state baselines and as teachers for the unified
estimator described next.

\subsection{Unified Multi-Teacher Distillation}
\label{app:mtd_training}

Deploying one independently fine-tuned estimator per state family would
require family-specific model management. We instead distill the four
specialists into a single unified estimator that shares one parameter set
across all state schemas.

The unified student is initialized from the same base model with a fresh LoRA
adapter of rank $r=32$, scaling factor $\alpha=64$, and dropout $0.05$. For a
training example $i$ from family $f_i$, the student receives two distinct
forms of supervision:

\begin{enumerate}
    \item the ground-truth canonical state $\widetilde Z_t^{f_i}$ provides the
    hard autoregressive CE target; and
    \item the corresponding specialist $T_{f_i}$ provides the soft token-level
    KD target.
\end{enumerate}

No teacher replaces or modifies the hard ground-truth target.

\paragraph{Hard-label objective.}
Let $\mathcal T_i$ denote the supervised assistant-token positions and let
$x_i=(H_i,q_{f_i})$. The hard-label objective is
\[
\mathcal L_{\mathrm{CE}}^{(i)}
=
-\frac{1}{|\mathcal T_i|}
\sum_{t\in\mathcal T_i}
\log
p_S(y_{i,t}\mid x_i,y_{i,<t}),
\]
where $p_S$ denotes the unified student's token distribution.

\paragraph{Family-specific soft target.}
For an example from family $f_i$, let
$z_{T_{f_i},t}(v)$ denote the corresponding specialist's logit for token $v$
at supervised position $t$ under teacher forcing. The temperature-scaled
teacher distribution is
\[
p_{T_{f_i},t}^{(\tau)}(v)
=
\frac{
\exp\!\left(z_{T_{f_i},t}(v)/\tau\right)
}{
\sum_{u\in\mathcal V}
\exp\!\left(z_{T_{f_i},t}(u)/\tau\right)
},
\qquad
\tau=2.
\]

We retain the teacher's top-$K$ support
\[
\mathcal K_{i,t}
=
\operatorname{TopK}
\!\left(
p_{T_{f_i},t}^{(\tau)},K
\right),
\qquad
K=64,
\]
and renormalize the retained probabilities:
\[
\widetilde p_{T_{f_i},t}^{\tau,K}(v)
=
\frac{
p_{T_{f_i},t}^{(\tau)}(v)
}{
\sum_{u\in\mathcal K_{i,t}}
p_{T_{f_i},t}^{(\tau)}(u)
},
\qquad
v\in\mathcal K_{i,t}.
\]

The corresponding distillation loss is
\[
\mathcal L_{\mathrm{KD}}^{(i)}
=
-\frac{\tau^2}{|\mathcal T_i|}
\sum_{t\in\mathcal T_i}
\sum_{v\in\mathcal K_{i,t}}
\widetilde p_{T_{f_i},t}^{\tau,K}(v)
\log
p_{S,t}^{(\tau)}(v).
\]

\paragraph{Combined objective.}
Combining hard supervision, family-specific teacher supervision, and the
family-balancing weight from
Appendix~\ref{app:state_training_data}, the released per-example objective is
\[
\boxed{
\mathcal L_{\mathrm{MTD}}^{(i)}
=
w_{f_i}
\left[
(1-\lambda)\mathcal L_{\mathrm{CE}}^{(i)}
+
\lambda\mathcal L_{\mathrm{KD}}^{(i)}
\right]
}
\]
with
\[
\lambda=0.25,
\qquad
\tau=2,
\qquad
K=64.
\]
Thus,
\[
\mathcal L_{\mathrm{MTD}}^{(i)}
=
w_{f_i}
\left[
0.75\mathcal L_{\mathrm{CE}}^{(i)}
+
0.25\mathcal L_{\mathrm{KD}}^{(i)}
\right].
\]

\paragraph{Family-specific teacher routing.}
Each training example is distilled only from the specialist associated with
its own state family:
\[
T_i=T_{f_i}.
\]
There is no cross-family averaging or teacher ensemble. The family/schema
instruction $q_f$ specifies the desired output state representation, while the
routed specialist provides family-specific token-level soft supervision.

\paragraph{Offline teacher targets.}
Teacher distributions are computed once under teacher forcing and cached
before student training. At each supervised position, the cache stores the
top-64 token identifiers and their temperature-scaled probabilities. During
student optimization, the retained teacher probabilities are renormalized over
this support. Consequently, the four teacher models are not loaded during
student optimization and are not required at inference time.

\paragraph{Unified optimization.}
The student is trained for four epochs using AdamW with learning rate
$2\times10^{-4}$, cosine decay, $5\%$ warmup, weight decay $0.01$, gradient
clipping at $1.0$, bfloat16 arithmetic, and gradient checkpointing. Training
uses all 2,538 examples once per epoch with the family-balanced objective
above. The final deployed predictive-state estimator consists of a single
shared base model and one rank-32 LoRA adapter supporting all four state
schemas.

\subsection{Comparison with Direct Joint Training}
\label{app:mtd_ablation}

We compare multi-teacher distillation against two natural alternatives:
independently deployed family specialists and direct joint training of a
single unified estimator. To isolate the effect of specialist teacher
supervision, the direct Joint CE baseline uses the same four training datasets
and the same family-balancing weights as MTD, but optimizes only the hard
ground-truth objective,
\[
\mathcal L_{\mathrm{Joint}}^{(i)}
=
w_{f_i}\mathcal L_{\mathrm{CE}}^{(i)}.
\]
Thus, Joint CE and MTD differ in teacher supervision rather than access to
training examples.

\paragraph{Structured-state recovery.}
We first compare the estimators directly at the structured-state level.
Table~\ref{tab:mtd_ablation} reports BranchExact and PairCorrect under the
strict evaluation protocol. Here, branch correctness requires exact canonical
recovery of the branch-specific $\widetilde Z_t^f$, and PairCorrect requires
both members of an A/B pair to be recovered correctly.

\begin{table*}[t]
\centering
\small
\caption{
Predictive-state recovery under family-specific specialists, direct
family-balanced joint training, and multi-teacher distillation.
Each family cell reports BranchExact / PairCorrect (\%). Avg. and Worst are
computed from PairCorrect across the four state families. Best results are in
\textbf{bold}; second-best results are underlined.
}
\label{tab:mtd_ablation}

\begin{tabular}{lcccccc}
\toprule
Method
& Persistent
& Clipboard
& Ordered
& Temporal
& Avg.
& Worst \\
\midrule

Specialists
& \underline{91.75} / 85.5
& \textbf{94.75} / \textbf{89.5}
& \textbf{92.75} / \underline{88.0}
& \textbf{92.50} / \textbf{88.5}
& \textbf{87.88}
& \textbf{85.5}
\\

Family-balanced Joint CE
& \textbf{96.00} / \textbf{92.0}
& 74.75 / 69.5
& \textbf{92.75} / \textbf{89.0}
& 85.00 / 84.0
& 83.63
& 69.5
\\

MTD (Ours)
& 89.75 / \underline{86.0}
& \underline{89.00} / \underline{85.0}
& 87.25 / 82.0
& \underline{88.50} / \underline{86.5}
& \underline{84.88}
& \underline{82.0}
\\

\bottomrule
\end{tabular}
\end{table*}

Direct joint training performs strongly on several families but exhibits a
substantial cross-family imbalance: Clipboard PairCorrect falls to $69.5\%$,
even though Persistent Artifact reaches $92.0\%$. Multi-teacher distillation
raises the worst-family PairCorrect from $69.5\%$ to $82.0\%$, while its
macro average increases from $83.63\%$ to $84.88\%$. Independently trained
specialists remain stronger on average at $87.88\%$, reflecting the expected
trade-off between family-specific specialization and a single shared model.

\paragraph{Matched end-to-end world-model evaluation.}
Structured-state recovery alone does not establish whether differences between
Joint CE and MTD propagate to the final world-model prediction. We therefore
perform an end-to-end matched comparison using frozen Code2World-8B. Both
estimators are evaluated on the same StateAliasBench strict pairs and share
the same current observations, candidate actions, prediction-time context,
complete deterministic state-conditioning interface, world-model checkpoint,
prompt protocol, renderer, scorer, and generation configuration. The only
upstream experimental variable is the recovered-state estimator.

Table~\ref{tab:mtd_final_wm} reports the resulting Code2World-8B predictions.
MTD increases macro PairCorrect from $76.75\%$ to $79.38\%$ and
worst-family PairCorrect from $69.50\%$ to $71.50\%$. The improvement is not
uniform across families: MTD substantially improves Clipboard and modestly
improves Persistent Artifact and Temporal Deadline, while Joint CE remains
stronger on Ordered Collection.

\begin{table*}[t]
\centering
\small
\caption{
Matched end-to-end Code2World-8B evaluation of the two unified
predictive-state estimators. Each family cell reports
BranchExact / PairCorrect (\%). Avg. and Worst are computed from final-WM
PairCorrect. Both methods use the same complete state-conditioning interface
and frozen world model. Best results are in \textbf{bold}; second-best
results are underlined.
}
\label{tab:mtd_final_wm}

\begin{tabular}{lcccccc}
\toprule
Method
& Persistent
& Clipboard
& Ordered
& Temporal
& Avg.
& Worst \\
\midrule

Family-balanced Joint CE
& \textbf{89.50} / \underline{79.50}
& \underline{74.75} / \underline{69.50}
& \textbf{87.25} / \textbf{79.00}
& \underline{82.50} / \underline{79.00}
& \underline{76.75}
& \underline{69.50}
\\

MTD (Ours)
& \underline{86.75} / \textbf{80.00}
& \textbf{89.00} / \textbf{85.00}
& \underline{81.50} / \underline{71.50}
& \textbf{85.75} / \textbf{81.00}
& \textbf{79.38}
& \textbf{71.50}
\\

\bottomrule
\end{tabular}
\end{table*}

\paragraph{State-to-world-model attribution.}
The matched evaluation further shows that differences in final-WM performance
are closely tied to structured-state recovery. Among the 621 pairs for which
both estimators recover the state correctly, Joint CE and MTD obtain identical
final PairCorrect of $91.63\%$. On the 48 pairs recovered correctly only by
Joint CE, final PairCorrect is $93.75\%$ for Joint CE and $22.92\%$ for MTD.
Conversely, on the 58 pairs recovered correctly only by MTD, final PairCorrect
is $0\%$ for Joint CE and $94.83\%$ for MTD. Both methods obtain $0\%$ on
the 73 pairs for which both recovered states are incorrect.

These results show that the effect of the recovery objective propagates
through the shared state-conditioning interface to final world-model
prediction. MTD improves the aggregate and worst-family performance of the
unified Code2World pipeline, but does not uniformly dominate direct joint
training on every state family.

\paragraph{Specialization versus unified deployment.}
The family-specific specialists provide the strongest average structured-state
recovery and serve as a family-specific capacity reference. They are not,
however, required by the unified MTD estimator at inference time. MTD instead
consolidates their family-specific supervision into a single estimator while
reducing the cross-family degradation observed under direct joint
optimization.

The two evaluations above therefore address complementary questions:
specialist performance measures what can be achieved with independently
adapted family-specific estimators, whereas Joint CE versus MTD isolates the
effect of specialist soft supervision within a single shared model. The
matched Code2World-8B evaluation further confirms that these recovery
differences propagate to final world-model behavior rather than remaining
confined to the intermediate structured-state metric.

\subsection{Computational Cost}
\label{app:state_efficiency}

We additionally measure the inference overhead introduced by predictive-state
recovery in the Code2World pipeline. The profiling set contains 100 balanced
StateAliasBench transitions, with 25 examples from each state family.

\paragraph{World-model prompt overhead.}
Conditioning Code2World on the recovered state increases the served
world-model prompt length from an average of $3243.33$ to $3427.26$ prompt
tokens, corresponding to an average marginal increase of $183.93$ tokens:
\[
\frac{183.93}{3243.33}\times100
=
5.67\%.
\]
This quantity measures only the marginal prompt-token cost of inserting the
structured-state block into the Code2World world-model request. It does not
include the separate multimodal input consumed by the predictive-state
estimator itself.

\paragraph{State-recovery latency.}
On an NVIDIA RTX PRO 5000 72GB GPU with batch size one, the unified
predictive-state estimator requires
\[
0.768~\mathrm{s}
\]
mean generation-and-decoding time per transition. The median is
$0.589$\,s and the $95$th percentile is $1.815$\,s. The measured interval
covers autoregressive generation, CUDA completion, and output decoding after
the estimator input has been prepared. Image loading, image preprocessing,
prompt construction/tokenization, host-to-device transfer, JSON parsing, and
model initialization are excluded.

\paragraph{Deployment footprint.}
The final unified estimator uses one rank-32 LoRA adapter containing
approximately $12.39$M trainable adapter parameters. In contrast, four
family-specific rank-16 specialists contain approximately
$4\times6.19$M adapter parameters. More importantly, the unified estimator
requires only one active predictive-state model at inference time and directly
supports all four state schemas without an ensemble of specialist teachers.
\section{State-Conditioned World-Model Interface}
\label{app:state_wm_interface}

The main text represents the state-conditioning interface abstractly as
\[
C_t^f=\Gamma_f(\widehat Z_t^f,A_t),
\]
while suppressing prediction-time task context and WM-specific serialization.
The interface is agnostic to which learned estimator supplies
$\widehat Z_t^f$: the same deterministic construction consumes either a
family-specific specialist prediction or the unified MTD prediction.

We distinguish the recovered current state from the deterministic consequence
of applying the candidate action to that state. Let
\begin{equation}
D_t^f
=
g_f(\widehat Z_t^f,A_t,c_t),
\label{eq:resolved_consequence}
\end{equation}
where $c_t$ denotes prediction-time task/entity context available
independently of the branch-specific hidden-state value. The complete
model-compatible conditioning is
\begin{equation}
C_t^{f,m}
=
\Gamma_{f,m}
\!\left(
\widehat Z_t^f,A_t,c_t
\right),
\label{eq:appendix_state_interface}
\end{equation}
where $m$ identifies the target WM. The mapping $\Gamma_{f,m}$ combines the
recovered state, its action-resolved consequence, a family-specific target
surface, prediction-time context projection, and WM-native serialization.

Neither $g_f$ nor $\Gamma_{f,m}$ invokes an additional learned model or
accesses the future observation, semantic target, ground-truth hidden state,
backend/ADB state, or evaluation metadata. This appendix specifies these
deterministic mappings, their information provenance, their model-native
serialization, and the staged experiments used to separate state recovery
from state-to-WM realization. We additionally provide the exact request
structure used by the decisive interface experiments and decompose their
reported scores into semantic and surface-validity components.

\subsection{Deterministic State Conditioning}
\label{app:deterministic_state_conditioning}

The evaluated world models are frozen and were not trained to directly consume
the structured state schemas introduced in
Section~\ref{sec:predictive_state}. We therefore compile recovered state into
conditioning compatible with each WM's native prediction interface.

The auxiliary context $c_t$ contains only prediction-time information needed
to interpret the recovered state and candidate action. For Persistent
Artifact, for example, it specifies the tracked artifact and the path, folder,
or search scope inspected by the candidate action. Such context is defined
independently of the branch-specific state value.

The complete deterministic interface is decomposed into five stages:
\begin{align}
\bar A_t
&=
\operatorname{NormalizeAction}_m(A_t),
\\
D_t^f
&=
g_f(\widehat Z_t^f,\bar A_t,c_t),
\\
S_t^{f,m}
&=
\operatorname{SurfaceSpec}_{f,m}
\!\left(
\widehat Z_t^f,D_t^f,\bar A_t,c_t
\right),
\\
\bar c_t^{f,m}
&=
\operatorname{ProjectContext}_{f,m}
\!\left(
c_t,\widehat Z_t^f,D_t^f
\right),
\\
C_t^{f,m}
&=
\operatorname{Serialize}_{f,m}
\!\left(
O_t,
\bar A_t,
\widehat Z_t^f,
D_t^f,
S_t^{f,m},
\bar c_t^{f,m}
\right).
\label{eq:full_interface_compiler}
\end{align}

$\operatorname{NormalizeAction}_m$ removes non-predictive bookkeeping fields,
including timestamps and execution indices, and converts action names and
coordinates to the representation expected by WM $m$.
The family rule $g_f$ computes only the deterministic semantic consequence
implied by the recovered state and candidate action.
$\operatorname{SurfaceSpec}_{f,m}$ specifies the GUI surface on which that
consequence must be realized.
$\operatorname{ProjectContext}_{f,m}$ retains only prediction-time context
relevant to that resolved surface.
Finally, $\operatorname{Serialize}_{f,m}$ expresses the resulting information
through the native multimodal request structure of the frozen WM.

Table~\ref{tab:state_interface_families} summarizes the family-level
semantics.

\begin{table*}[t]
\centering
\scriptsize
\setlength{\tabcolsep}{4pt}
\renewcommand{\arraystretch}{1.04}
\caption{
Family-specific deterministic realization of recovered predictive state.
$D_t^f$ is the action-resolved consequence; the final column describes the
complete model-compatible realization.
}
\label{tab:state_interface_families}

\begin{tabular}{
p{0.15\textwidth}
p{0.20\textwidth}
p{0.27\textwidth}
p{0.30\textwidth}}
\toprule
Family
& Recovered state
& Action-resolved consequence $D_t^f$
& Complete model-compatible realization \\
\midrule

Persistent Artifact
&
Source and destination existence of the tracked artifact.
&
Resolve whether the tracked artifact must be present or absent on the Files
surface inspected by the candidate folder or filename-search action.
&
Compile the result into a Downloads-scoped search surface or the requested
folder listing, project context onto that inspected surface, and impose the
corresponding visible-row or empty-state constraints.
\\

Clipboard
&
Current clipboard type and payload.
&
Resolve the payload inserted by the candidate PASTE action.
&
Construct a post-action editor surface containing the recovered payload as
visible editor content while preserving recognizable editor chrome.
\\

Ordered Collection
&
Collection type and complete ordered item sequence.
&
Resolve the playlist or playing-queue state exposed by the candidate action,
preserving the supplied sequence without sorting or deduplication.
&
Construct the corresponding collection surface with the required number and
order of visible primary rows and serialize the complete sequence through the
WM-compatible representation.
\\

Temporal Deadline
&
Timer activity and effective remaining time.
&
Combine timer activity and remaining time with the elapsed duration in the
candidate action to resolve \emph{none}, \emph{running}, or \emph{fired},
together with the post-action remainder when applicable.
&
Compile the resolved timer state into a full-screen notification-shade
realization with the corresponding visible timer text, countdown, and control
structure.
\\

\bottomrule
\end{tabular}
\end{table*}

\paragraph{Persistent Artifact.}
Let
\[
\widehat Z_t^{\mathrm{pers}}
=
(e_t^{\mathrm{src}},e_t^{\mathrm{dst}})
\]
contain Boolean source- and destination-existence variables for the tracked
artifact.

For a filename-search probe, the interface considers the tracked source and
destination paths whose path components place them inside the benchmark's
Downloads scope. The artifact is resolved as present when at least one
corresponding in-scope state variable is true. The required post-action
surface is a Downloads-scoped Files search. A present state requires a visible
result row containing the tracked artifact, whereas an absent state requires
the explicit empty state ``No matches in Downloads'' and no target result row.

For a folder-opening probe, the action and task context determine the exact
artifact path being inspected. The corresponding existence variable determines
whether the requested folder contains the tracked artifact. A present state
requires one target row; an absent state requires the same requested folder
surface without that row.

The model-visible entity context is then projected onto the resolved surface.
If the artifact must be visible, its identity and relevant scoped path are
retained. If it must be absent, the projected context retains the inspected
path and surface type but omits the target identity. This prevents an
absent-state request from reintroducing the filename through auxiliary
context.

\paragraph{Clipboard.}
Let $v_t$ denote the recovered clipboard payload. For the shared PASTE action,
\[
D_t^{\mathrm{clip}}
=
\text{``insert }v_t\text{ into the focused editor.''}
\]
The interface does not infer, paraphrase, or otherwise modify $v_t$. The
target surface is a recognizable note/editor GUI whose main editor body
contains the recovered payload as visible text.

\paragraph{Ordered Collection.}
Let
\[
\widehat Z_t^{\mathrm{ord}}
=
(\tau_t,[x_1,\ldots,x_n]),
\]
where $\tau_t$ specifies playlist versus playing queue. The resolved state is
\[
D_t^{\mathrm{ord}}
=
(\tau_t,[x_1,\ldots,x_n]).
\]
The deterministic interface performs no sorting, deduplication, completion, or
inference of additional items. The target collection surface contains exactly
$n$ primary rows in the supplied order. A playing queue is realized as a
compact queue view, whereas a playlist is realized as a playlist-detail view.

\paragraph{Temporal Deadline.}
Let
\[
\widehat Z_t^{\mathrm{temp}}
=
(a_t,r_t),
\]
where $a_t\in\{0,1\}$ denotes whether the timer is active and $r_t$ is its
effective remaining time in seconds. Let $\Delta_t$ denote the wait duration
specified by the candidate action. The deterministic consequence is
\[
D_t^{\mathrm{temp}}
=
\begin{cases}
(\mathrm{none},\bot),
& a_t=0,
\\
(\mathrm{fired},0),
& a_t=1\ \wedge\ \Delta_t\ge r_t,
\\
(\mathrm{running},r_t-\Delta_t),
& a_t=1\ \wedge\ \Delta_t<r_t.
\end{cases}
\]

A fired state is realized as one Clock notification containing
\emph{Time's up} and a \emph{Stop} control. A running state contains the
updated remaining time and a \emph{Pause} control. A none state contains no
timer notification; a neutral non-timer card is used in the controlled
realization so that all branches remain grounded on the same requested
notification-shade surface.

\subsection{Model-Specific Serialization}
\label{app:model_specific_serialization}

Code2World-8B and gWorld receive the same recovered state and deterministic
family-level consequence but use different native request conventions.
Table~\ref{tab:wm_serialization} summarizes these differences.

\begin{table*}[t]
\centering
\scriptsize
\setlength{\tabcolsep}{4pt}
\renewcommand{\arraystretch}{1.04}
\caption{
WM-specific realization of the same deterministic predictive-state
information.
}
\label{tab:wm_serialization}

\begin{tabular}{
p{0.20\textwidth}
p{0.36\textwidth}
p{0.36\textwidth}}
\toprule
Component
& Code2World-8B
& gWorld-8B \\
\midrule

Current observation
&
Application crop with the model-native action visual hint.
&
Same application crop without the Code2World visual hint.
\\

Action representation
&
Lowercase model-native action names; point actions use scalar coordinate
fields.
&
Uppercase model-native action names; point actions use coordinate arrays.
\\

Message structure
&
Fixed system instruction followed by a multimodal user request.
&
User-only multimodal request.
\\

Rendering convention
&
Fixed $1080\times2400$ target canvas.
&
$360\times800$ CSS viewport, captured at device scale factor $3$.
\\

Persistent Artifact
&
Resolved Files blueprint and projected path/entity context are inserted into
the Code2World-native transition request.
&
The same resolved Files semantics are expressed through the gWorld-native
mobile-UI request.
\\

Clipboard
&
The resolved payload is represented once as authoritative visible editor
content.
&
The same payload is inserted into the gWorld-native editor request.
\\

Ordered Collection
&
The canonical ordered list is the sole authoritative list representation,
avoiding a redundant natural-language copy of the item sequence.
&
The canonical list is combined with the native playlist/queue surface
constraint.
\\

Temporal Deadline
&
The resolved timer outcome is inserted directly into the User Intent and final
visible-result check of the Code2World request.
&
A dedicated short notification-shade request is used instead of the generic
gWorld request so that the resolved timer state appears only once.
\\

\bottomrule
\end{tabular}
\end{table*}

For Temporal Deadline, the two WMs therefore receive the same deterministic
outcome but not the same textual realization. A fired timer is represented by
the visible state-bearing sequence
\[
\text{Clock}
\rightarrow
\text{Time's up}
\rightarrow
\text{Stop},
\]
whereas a running timer uses
\[
\text{Clock}
\rightarrow
\text{Timer}
\rightarrow
\texttt{MM:SS}
\rightarrow
\text{Pause}.
\]
The gWorld realization additionally fixes a compact
$360\times800$ notification-shade layout and explicitly excludes the
pre-action Settings surface, duplicated cards and controls, and unrelated
system content. Code2World realizes the same semantic outcome under its
fixed-canvas generation protocol.

\subsection{Prediction-Time Information Provenance}
\label{app:state_interface_provenance}

We audit the information available to the learned-state condition at the
world-model prediction boundary. Table~\ref{tab:state_interface_provenance}
separates prediction-time inputs and deterministic interface products from
supervision, future observations, and evaluation-side information.

\begin{table*}[t]
\centering
\scriptsize
\setlength{\tabcolsep}{4pt}
\caption{
Information provenance for the learned state-conditioned WM.
}
\label{tab:state_interface_provenance}

\begin{tabular}{p{0.25\textwidth}p{0.34\textwidth}cc}
\toprule
Quantity & Source & Available at prediction time & Used \\
\midrule

Current observation $O_t$
& Current GUI observation
& Yes & Yes \\

Candidate action $A_t$
& Agent / benchmark action
& Yes & Yes \\

Family/task specification
& Fixed task protocol
& Yes & Yes \\

Task/entity context $c_t$
& Prediction-time task specification, when applicable
& Yes & Yes \\

Recovered state $\widehat Z_t^f$
& Learned state estimator
& Yes & Yes \\

Action-resolved consequence $D_t^f$
& Fixed deterministic state/action rules
& Yes & Yes \\

Surface specification $S_t^{f,m}$
& Fixed family/WM rules
& Yes & Yes \\

Projected context $\bar c_t^{f,m}$
& Deterministic projection of prediction-time context
& Yes & Yes \\

Complete conditioning $C_t^{f,m}$
& Fixed deterministic interface
& Yes & Yes \\

Ground-truth state $\widetilde Z_t^f$
& Benchmark supervision
& No & No \\

Future observation $O_{t+1}$
& Environment target
& No & No \\

Semantic outcome $Y_{t+1}$
& Evaluation annotation
& No & No \\

Backend / ADB state
& Environment-side validation
& No & No \\

Scoring metadata
& Evaluation pipeline
& No & No \\

\bottomrule
\end{tabular}
\end{table*}

All branch-dependent conditioning under the learned-state condition is
therefore deterministically derived from
\[
\widehat Z_t^f,\qquad A_t,\qquad c_t,
\]
together with fixed family- and WM-specific rules. In particular, the
interface cannot inspect the target future GUI, semantic label, or
ground-truth structured state when constructing a world-model request.

\subsection{Progressive Realization of the State-Conditioning Interface}
\label{app:state_interface_control}

We next isolate how the representation supplied to a frozen WM affects its
ability to use a recovered state. All conditions in this subsection use the
same family-specific specialist predictions, so the recovered-state source is
held fixed throughout. We evaluate progressively richer realizations of these
fixed predictions for Persistent Artifact and Temporal Deadline.

\paragraph{Canonical.}
Canonical supplies the recovered structured state together with the generic
family transition contract and the same prediction-time context used by the
matched controls. It contains no precomputed action-resolved consequence.

\paragraph{+Consequence.}
This condition adds only the deterministic action-resolved consequence
$D_t^f$ in Eq.~\ref{eq:resolved_consequence}. For Persistent Artifact this
specifies the presence or absence of the tracked artifact on the inspected
surface. For Temporal Deadline it specifies the post-wait timer lifecycle and,
when applicable, the remaining duration.

\paragraph{+Surface.}
This condition additionally realizes the consequence through one
family-specific surface component. Persistent Artifact receives a
state-dependent Files surface blueprint. Temporal Deadline receives a
branch-independent notification-shade rendering scaffold; the timer outcome
continues to be determined by $D_t^f$.

\paragraph{Full interface.}
Full applies the production $\Gamma_{f,m}$ construction to the same
specialist-recovered state. Unlike the preceding additive controls, Full uses
the complete WM-native compilation path, including context projection and
native placement of the state-bearing directive. These results are the
\textbf{Specialist State} entries in
Table~\ref{tab:conditioning_full}.

\begin{table}[t]
\centering
\scriptsize
\setlength{\tabcolsep}{3.5pt}
\renewcommand{\arraystretch}{1.0}
\caption{
Progressive realization of a fixed family-specialist recovered state,
measured by PairCorrect (\%). Canonical $\rightarrow$ +Consequence and
+Consequence $\rightarrow$ +Surface are strictly request-matched comparisons.
Full uses the production WM-native interface.
}
\label{tab:state_interface_progression}

\begin{tabular}{llrrrr}
\toprule
Family & WM
& Canon.
& +Cons.
& +Surf.
& Full \\
\midrule

Persistent
& Code2World-8B
& 1.0
& 4.0
& 26.0
& \textbf{78.5} \\

Persistent
& gWorld-8B
& 12.5
& 14.5
& 40.5
& \textbf{85.5} \\

Temporal
& Code2World-8B
& 0.0
& 0.0
& 0.5
& \textbf{84.0} \\

Temporal
& gWorld-8B
& 0.0
& 0.0
& 22.5
& \textbf{88.5} \\

\bottomrule
\end{tabular}
\end{table}

The Canonical $\rightarrow$ +Consequence comparison is strictly matched:
removing the added consequence span from the richer request recovers the
Canonical request byte-for-byte. The same property holds for
+Consequence $\rightarrow$ +Surface after removing the surface component.
These request-matching audits pass for all $400/400$ branches in each
evaluated model--family combination.

The largest increase occurs between +Surface and Full. For Persistent
Artifact, PairCorrect increases from $26.0\%$ to $78.5\%$ on Code2World-8B
and from $40.5\%$ to $85.5\%$ on gWorld-8B. For Temporal Deadline, it
increases from $0.5\%$ to $84.0\%$ and from $22.5\%$ to $88.5\%$,
respectively.

This final transition is not a single-component ablation. Full replaces the
additive representation with the production WM-native compilation path. For
Persistent Artifact this jointly changes surface integration,
state-dependent context projection, and placement of the resolved Files
directive. For Temporal Deadline the same deterministic timer consequence is
compiled directly into the model-specific final notification request rather
than appended to a generic transition request. We therefore use the staged
experiment to characterize the level of compilation required for a fixed
recovered state to become usable by the frozen WM, rather than to assign the
Full--Surface difference to a single internal component.

\subsubsection{Semantic and Surface Decomposition of the Interface Gain}
\label{app:interface_score_decomposition}

The official strict scorer evaluates both the family-specific semantic outcome
and a small set of surface-validity conditions required to ensure that the
outcome is realized on the requested GUI surface. To separate these two
effects, we decompose the already archived scores without rerunning the world
models, renderer, or scorer.

For each branch, define
\[
\operatorname{SemCorrect}
=
\mathbf 1
\!\left[
\widehat Y_{t+1}^f=Y_{t+1}^f
\right].
\]
For Persistent Artifact, this is only equality between the parsed
present/absent state and the expected state; Downloads scope, folder scope,
keyboard structure, duplicate-listing checks, and offline validity are
excluded from this semantic indicator.

For Temporal Deadline,
$\operatorname{SemCorrect}$ is equality between the stored parser outcome
$\{\mathrm{none},\mathrm{running},\mathrm{fired},
\mathrm{invalid}\}$ and the expected timer outcome. This definition uses the
archived parser result rather than reparsing outputs under a modified rule.

We separately define $\operatorname{SurfaceValid}$ from the non-outcome gates
used by the official scorer. For Persistent search results these include the
Downloads-scoped search surface, plausible keyboard geometry, and offline
self-containedness; for folder probes they include the required location
grounding, unique listing structure, absence of a contradictory software
keyboard, and offline self-containedness. For Temporal Deadline they include
notification-shade grounding, footer completeness, and offline
self-containedness.

Under these definitions, all $6{,}400$ archived branch scores in the staged
experiment satisfy
\[
\operatorname{FullCorrect}
=
\operatorname{SemCorrect}
\wedge
\operatorname{SurfaceValid}.
\]

Table~\ref{tab:interface_score_components} reports the resulting decomposition.
``Sem. B'' and ``Surf. B'' denote branch-level semantic correctness and
surface validity, respectively; ``Full B'' is the unchanged official
BranchExact result. ``Sem. P'' requires semantic correctness on both branches
of a strict pair, whereas ``Full P'' is the unchanged official PairCorrect.

\begin{table*}[t]
\centering
\scriptsize
\setlength{\tabcolsep}{4pt}
\renewcommand{\arraystretch}{1.02}
\caption{
Decomposition of the archived +Surface and Full results into semantic
correctness and surface validity. All values are percentages. No prediction
is regenerated and no scoring rule is modified.
}
\label{tab:interface_score_components}

\begin{tabular}{llrrrrrr}
\toprule
Family & WM & Condition
& Sem. B
& Surf. B
& Full B
& Sem. P
& Full P \\
\midrule

Persistent
& Code2World-8B
& +Surface
& 60.5 & 59.0 & 42.8 & 27.5 & 26.0 \\

Persistent
& Code2World-8B
& Full
& \textbf{94.0} & \textbf{92.8} & \textbf{88.2}
& \textbf{89.5} & \textbf{78.5} \\

\addlinespace[2pt]

Persistent
& gWorld-8B
& +Surface
& 70.2 & 92.0 & 64.8 & 41.5 & 40.5 \\

Persistent
& gWorld-8B
& Full
& \textbf{93.0} & \textbf{98.8} & \textbf{92.0}
& \textbf{87.5} & \textbf{85.5} \\

\addlinespace[2pt]

Temporal
& Code2World-8B
& +Surface
& 12.5 & 58.0 & 10.5 & 1.0 & 0.5 \\

Temporal
& Code2World-8B
& Full
& \textbf{91.0} & \textbf{98.2} & \textbf{90.0}
& \textbf{86.0} & \textbf{84.0} \\

\addlinespace[2pt]

Temporal
& gWorld-8B
& +Surface
& 46.5 & 94.5 & 44.8 & 25.0 & 22.5 \\

Temporal
& gWorld-8B
& Full
& \textbf{92.5} & \textbf{100.0} & \textbf{92.5}
& \textbf{88.5} & \textbf{88.5} \\

\bottomrule
\end{tabular}
\end{table*}

To further characterize the +Surface $\rightarrow$ Full transition, we join
the two conditions by branch identifier. A \emph{FullCorrect gain} is a branch
that is incorrect under +Surface and correct under Full. We partition such
gains into three mutually exclusive cases:
\begin{enumerate}
    \item \textbf{semantic-only}: semantic correctness changes from false to
    true while surface validity was already true;
    \item \textbf{surface-only}: semantic correctness was already true and
    surface validity changes from false to true; and
    \item \textbf{both}: both semantic correctness and surface validity change
    from false to true.
\end{enumerate}

\begin{table*}[t]
\centering
\scriptsize
\setlength{\tabcolsep}{4pt}
\renewcommand{\arraystretch}{1.02}
\caption{
Branch-level attribution of the +Surface $\rightarrow$ Full transition.
``Gains'' counts branches changing from official incorrect to correct;
``Losses'' counts the reverse. The final three columns partition the gains.
Each model--family condition contains 400 branches.
}
\label{tab:interface_gain_attribution}

\begin{tabular}{llrrrrrr}
\toprule
Family & WM
& Gains
& Losses
& Net
& Semantic-only
& Surface-only
& Both \\
\midrule

Persistent
& Code2World-8B
& 200 & 18 & +182
& 60 & 65 & 75 \\

Persistent
& gWorld-8B
& 117 & 8 & +109
& 88 & 20 & 9 \\

Temporal
& Code2World-8B
& 319 & 1 & +318
& 167 & 8 & 144 \\

Temporal
& gWorld-8B
& 193 & 2 & +191
& 173 & 7 & 13 \\

\bottomrule
\end{tabular}
\end{table*}

The decomposition shows that the Full-interface improvement is not reducible
to satisfying surface-validity gates. For Temporal Deadline,
$311/319$ Code2World gains and $186/193$ gWorld gains involve a correction of
the parsed timer outcome; only $8$ and $7$ gains, respectively, are
surface-only. Persistent Artifact shows the same pattern on gWorld, where
$97/117$ gains involve semantic correction. Code2World Persistent is more
mixed: $135/200$ gains involve semantic correction and $65/200$ are
surface-only. Thus, model-native compilation improves both the semantic use of
the supplied recovered state and the reliability with which the corresponding
GUI surface is rendered.

This decomposition follows the existing scorer exactly and is not a
counterfactual rescoring experiment. In particular, a small number of
duplicate or internally contradictory timer-layout patterns participate
upstream in the existing timer-state parser; we retain the archived
$\widehat Y_{t+1}^{\mathrm{temp}}$ rather than redefining the timer parser
post hoc. Detailed semantic-readout rules are given in
Appendix~\ref{app:semantic_validation}.

The staged analysis addresses a specific question: given the same recovered
state, how must that state be realized for a frozen GUI WM to use it
reliably? It is therefore complementary to the Observation and Raw History
conditions in the main experiment, rather than an attempt to attribute the
entire end-to-end gain to one isolated prompt component.

\subsection{Full-Interface Construction and Reproducibility}
\label{app:full_interface_reproducibility}

We now specify the Full request at the model-facing boundary. For family $f$
and WM $m$, write
\[
\mathcal R_{f,m}
=
\left[
O_t,\,
\bar A_t,\,
Q_f,\,
\bar c_t^{f,m},\,
\mathcal D_{f,m}
(\widehat Z_t^f,D_t^f,S_t^{f,m})
\right],
\]
where $Q_f$ denotes the fixed family transition contract and
$\mathcal D_{f,m}$ denotes the model-native realization of the recovered
state, resolved consequence, and target surface.

Table~\ref{tab:full_interface_rules} gives the deterministic family-level
compilation rules.

\begin{table*}[t]
\centering
\scriptsize
\setlength{\tabcolsep}{4pt}
\renewcommand{\arraystretch}{1.04}
\caption{
Deterministic rules used by the Full state-conditioning interface.
}
\label{tab:full_interface_rules}

\begin{tabular}{
p{0.14\textwidth}
p{0.25\textwidth}
p{0.28\textwidth}
p{0.27\textwidth}}
\toprule
Family
& State/action resolution
& Required visible surface
& Context/serialization rule \\
\midrule

Persistent Artifact
&
For a Downloads search, resolve presence from tracked in-scope source and
destination paths. For a folder probe, resolve presence at the exact inspected
artifact path.
&
Present search: Downloads-scoped heading and one target result row.
Absent search: explicit ``No matches in Downloads'' and no result row.
Present folder: requested folder and one target row.
Absent folder: requested folder with no tracked-artifact row.
&
Visible branches retain artifact identity and the relevant scoped path.
Absent branches retain the inspected path and surface mode but omit target
identity. The resulting Files blueprint is inserted through the WM-native
transition representation.
\\

Clipboard
&
PASTE deterministically inserts the recovered clipboard payload.
&
A recognizable editor containing the recovered payload as visible editor-body
text.
&
The payload is serialized once as authoritative post-action content; no
additional state is inferred from the screenshot.
\\

Ordered Collection
&
Preserve the recovered collection type and ordered item sequence exactly.
&
A playlist-detail or playing-queue surface containing exactly the supplied
number of primary rows in the supplied order.
&
The canonical list remains the authoritative sequence. WM-specific
serialization avoids redundant list representations that could duplicate item
titles.
\\

Temporal Deadline
&
Inactive $\rightarrow$ none;
active with $\Delta_t\ge r_t$ $\rightarrow$ fired;
active with $\Delta_t<r_t$ $\rightarrow$ running with
$r_t-\Delta_t$ seconds remaining.
&
None: notification shade without timer state.
Fired: one Clock card with \emph{Time's up} and \emph{Stop}.
Running: one Clock/Timer card with the updated countdown and \emph{Pause}.
&
The resolved outcome is compiled directly into the native notification
request. Code2World uses its fixed-canvas format; gWorld uses a dedicated
compact $360\times800$ notification-shade request.
\\

\bottomrule
\end{tabular}
\end{table*}

\subsubsection{Exact Model-Native Request Structure}
\label{app:exact_wm_request_templates}

The preceding rules determine the semantic contents of the interface. We now
specify the literal model-visible request structure. Angle-bracketed fields in
the templates below are the only metavariables; each is deterministically
instantiated by the rules above. They are not free-form prompts chosen on a
per-example basis.

The serialized inference payload has the common outer form
\begin{Verbatim}[fontsize=\scriptsize,breaklines=true]
{
  "model": <MODEL_ID>,
  "messages": <MESSAGES>,
  "max_tokens": 8192,
  "temperature": 0.0,
  "seed": 20260912,
  "repetition_penalty": 1.0
}
\end{Verbatim}
with no top-$p$ or top-$k$ override. The model identifier is
\texttt{Code2World} for Code2World-8B and \texttt{gWorld-8B} for gWorld.
The observation image is serialized as an in-message image content part.
The image bytes themselves are benchmark inputs and therefore are not
reproduced as base64 text below.

\paragraph{Code2World fixed system message.}
Every Code2World request uses the following fixed system message.

\begin{Verbatim}[fontsize=\scriptsize,breaklines=true]
You are an expert **UI State Transition Simulator** and **Frontend Developer**.
Your task is to predict the **NEXT UI STATE** based on a screenshot of the current state and a user interaction.

### 1. IMAGE INTERPRETATION RULES
The input image contains visual cues denoting the user's action. You must interpret them as follows:
*   **Red Circle**: Indicates a **Click** or **Long Press** target at that location.
*   **Red Arrow**: Indicates a **Scroll** or **Swipe**.
    *   The arrow points in the direction of finger movement.
    *   *Example*: An arrow pointing UP means the finger slides up, pushing content up (Scrolling Down).
*   **Note**: These cues exist ONLY to show the action. **DO NOT render these red circles or arrows in your output HTML.**

### 2. CRITICAL STRUCTURAL RULES (MUST FOLLOW)
*   **Format**: Output ONLY raw HTML. Start with `<!DOCTYPE html>` and end with `</html>`.
*   **Root Element**: All visible content MUST be wrapped in:
    `<div id="render-target"> ... </div>`
*   **Container Style**: `#render-target` must have:
    `width: 1080px; height: 2400px; position: relative; overflow: hidden;`
    (Apply background colors and shadows here, NOT on the body).
*   **Body Style**: The `<body>` tag must have `margin: 0; padding: 0; background: transparent;`.
*   **Layout**: Do NOT center the body. Let `#render-target` sit at (0,0).

### 3. CONTENT GENERATION LOGIC
*   **Transition**: Analyze the action. If the user clicks a button, show the *result* (e.g., a menu opens, a checkbox checks, page navigates).
*   **Images**: Use semantic text placeholders. DO NOT use real URLs.
    *   Format: `<div style="...">[IMG: description]</div>`
*   **Icons**: Use simple inline SVG paths or Unicode.

### 4. OUTPUT REQUIREMENT
*   Do NOT generate Markdown blocks (```html).
*   Do NOT provide explanations or conversational text.
*   Output the code directly.
\end{Verbatim}

\paragraph{Persistent Artifact state/result serialization.}
For a Full Persistent request, the user message contains, in order,
(i) the current image content part,
(ii) one authoritative state/result text part, and
(iii) the WM-native transition text.

The authoritative text part has the following form:
\begin{Verbatim}[fontsize=\scriptsize,breaklines=true]
<authoritative_nonvisual_state_and_result_constraints>
<PERSISTENT_RESOLVED_JSON>
</authoritative_nonvisual_state_and_result_constraints>
These constraints are deterministic consequences of the supplied current state and action. They are authoritative and non-negotiable: make the resulting visible UI satisfy every must_show, must_not_show, exact value, outcome, count, and order constraint before adding styling. Never display the JSON, field names, reasoning, or debugging annotations.
\end{Verbatim}

For example, an absent Downloads-search branch is instantiated as
\begin{Verbatim}[fontsize=\scriptsize,breaklines=true]
<authoritative_nonvisual_state_and_result_constraints>
{"current_state":{"inspected_artifact_exists":false},"family":"persistent_artifact","required_post_action_consequence":{"inspected_path":"Downloads","must_show_exact_text":["No matches in Downloads"],"required_surface":"search_result","tracked_artifact_visible_after_action":false,"visible_scope_label":"Files in Downloads"}}
</authoritative_nonvisual_state_and_result_constraints>
These constraints are deterministic consequences of the supplied current state and action. They are authoritative and non-negotiable: make the resulting visible UI satisfy every must_show, must_not_show, exact value, outcome, count, and order constraint before adding styling. Never display the JSON, field names, reasoning, or debugging annotations.
\end{Verbatim}

In this absent branch, the model-visible entity context is projected to
\begin{Verbatim}[fontsize=\scriptsize,breaklines=true]
{"inspected_path":"Downloads","reveal_mode":"search_result"}
\end{Verbatim}
rather than retaining the target artifact identity or its source and
destination paths.

\paragraph{Persistent Artifact: Code2World user template.}
After the image content part and authoritative state/result block, Code2World
receives the following text. The fixed Files blueprint is selected
deterministically from the resolved present/absent state and probe type.

\begin{Verbatim}[fontsize=\scriptsize,breaklines=true]
<image>
### INPUT CONTEXT
1.  **User Intent**: "Predict the immediate Android UI after the supplied interaction."
2.  **Interaction Details**:
    *   **Description**: <CODE2WORLD_ACTION_DESCRIPTION>
    *   **Action Data**: <CODE2WORLD_ACTION_JSON>
Tracked entity specification (identity only, not current state): <PROJECTED_ENTITY_CONTEXT>
3.  **State-dependent transition contract**: Filesystem contract: entity paths identify one tracked artifact. For an open-folder action, render the requested folder and include the tracked artifact if and only if the supplied state says that exact path exists. For a filename search from the benchmark Files screen, keep the search scoped to Downloads: show the artifact if a tracked path under Downloads exists; otherwise show an explicit 'No matches in Downloads' empty result. Every visible search scope or result-location label must say Downloads. Do not copy the artifact from the screenshot when its supplied current-state value is absent.
4.  **Family-specific visible-result constraint**: Location is part of correctness. For open-folder, the breadcrumb and heading must end at the exact folder requested by the action, without appending the artifact name or a generic Files component; it must not show a software keyboard, IME, focused search field, or search-results surface. For filename search, visibly label the result surface 'Files in Downloads' and keep every result row in that scope. Keep the required keyboard complete with three conventional QWERTY letter rows; implement it compactly rather than adding decorative keyboard toolbar/icons. Keep HTML under 5000 tokens and close </html>.
5.  **Final mandatory visible-result check**: <FILES_BLUEPRINT>

### COMMAND
Based on the visual cues in the image, the interaction data, the transition contract, and any supplied nonvisual current-state block, generate the **HTML for the RESULTING UI STATE** (what the screen looks like *after* this action).
\end{Verbatim}

For the absent Downloads-search example above,
\texttt{<FILES\_BLUEPRINT>} is
\begin{Verbatim}[fontsize=\scriptsize,breaklines=true]
FINAL FILES BLUEPRINT, visible top-to-bottom: [typed search bar] [filter chips] [Heading: Files in Downloads] [empty state: No matches in Downloads] [complete three-row QWERTY keyboard]. Render every bracketed element as visible GUI, not comments. Show no result row and never return a blank body.
\end{Verbatim}

\paragraph{Persistent Artifact: gWorld user template.}
gWorld has no separate system message. Its user message contains the current
image content part, the same authoritative state/result block, and the
following model-native text:

\begin{Verbatim}[fontsize=\scriptsize,breaklines=true]
You are an expert mobile UI World Model that can accurately predict the next state given an action.
Critical state application: <FILES_BLUEPRINT>
Given a screenshot of a mobile interface and an action, you must generate clean, responsive HTML code that represents the state of the interface AFTER the action is performed.
First generate reasoning about what the next state should look like based on the action.
Afterwards, generate the HTML code representing the next state that logically follows the action.
You will render this HTML in a mobile viewport to see how similar it looks and acts like the mobile screenshot.

Requirements:
1. Provide reasoning about what the next state should look like based on the action
2. Generate complete, valid HTML5 code
3. Make all structure and layout self-contained. Put layout CSS in a local `<style>` block; do not use CDN scripts, linked stylesheets, Tailwind/Bootstrap/MUI utility classes, or remote images. Use a system sans-serif font fallback rather than importing a web font.
4. Render responsive mobile HTML for a 360px by 800px CSS viewport. Fill `100vw` by at least `100vh`; do not constrain the whole interface into a smaller inset mockup. The renderer captures this logical viewport at device-pixel-ratio 3 to produce a 1080px by 2400px image.
5. For images, use inline SVG placeholders with explicit width and height attributes that match the approximate dimensions from the screenshot. Matching the approximate color is also good.
6. Use modern web standards and best practices
7. Return ONLY the HTML code, no explanations or markdown formatting
8. The generated HTML should render properly without any network access.
9. Generated HTML should look like the screen that logically follows the current screen and the action.
10. State-dependent transition contract: Filesystem contract: entity paths identify one tracked artifact. For an open-folder action, render the requested folder and include the tracked artifact if and only if the supplied state says that exact path exists. For a filename search from the benchmark Files screen, keep the search scoped to Downloads: show the artifact if a tracked path under Downloads exists; otherwise show an explicit 'No matches in Downloads' empty result. Every visible search scope or result-location label must say Downloads. Do not copy the artifact from the screenshot when its supplied current-state value is absent.
11. Treat any supplied nonvisual current-state block as authoritative, apply it to the action, and never render its field names.
Tracked entity specification (identity only, not current state): <PROJECTED_ENTITY_CONTEXT>
12. Authoritative state-conditioned result directive: <FILES_BLUEPRINT>
13. Keep the local CSS concise and prioritize a complete state-bearing `<body>`; finish `</html>` well within the token budget. Do not repeat CSS declarations.
14. Before finishing, verify that the visible body satisfies the Critical state application above. <GWORLD_FILES_SURFACE_CONSTRAINT>

Action:
<GWORLD_ACTION_JSON>

Output format:
# Next State Reasoning: <your reasoning about what the next state should look like>
# HTML: <valid_html_code>

Generate the next state reasoning and the next state in html:
\end{Verbatim}

For the filename-search surface, the final family-specific constraint is
\begin{Verbatim}[fontsize=\scriptsize,breaklines=true]
Render exactly one compact Android Files search screen: app bar, typed search field, filter chips, the visible heading 'Files in Downloads', exactly the result/empty area specified by the authoritative directive, and one compact three-row QWERTY keyboard. The typed query is not evidence that a result exists. If the directive says no matches, render 'No matches in Downloads' and no filename/result row; if it specifies one result, put exactly that one row immediately below the Downloads heading. Implement the keyboard as 26 short single-letter spans sharing one class; use no SVGs, file thumbnails, comments, or decorative panels. Use at most 14 CSS rules and finish complete HTML under 3500 tokens.
\end{Verbatim}

The wording above is the archived wording used by the reported gWorld
Persistent evaluation.

\paragraph{Temporal Deadline: Code2World user template.}
Full Temporal requests do not append a separate canonical-state JSON block.
The deterministic temporal rule first resolves the post-action outcome, and
the result is inserted directly into the model-native request. After the fixed
Code2World system message and current-image content part, the user text is

\begin{Verbatim}[fontsize=\scriptsize,breaklines=true]
<image>
### INPUT CONTEXT
1.  **User Intent**: "<TEMPORAL_RESULT_SPEC>"
2.  **Interaction Details**:
    *   **Description**: <CODE2WORLD_TEMPORAL_ACTION_DESCRIPTION>
    *   **Action Data**: <CODE2WORLD_TEMPORAL_ACTION_JSON>
3.  **State-dependent transition contract**: The post-action notification state has already been resolved from the supplied state and wait action into the User Intent. Treat that resolved description as complete and authoritative; do not recompute, expand, or supplement its state-bearing card.
4.  **Family-specific visible-result constraint**: Use one compact notification-shade shell and exactly the single card specified by User Intent. Keep HTML under 2200 tokens with at most 12 visible elements and 8 short CSS rules. Make each specified card line one visible text element in the stated order, then finish the complete </html> after the footer.
5.  **Final mandatory visible-result check**: <TEMPORAL_RESULT_SPEC>

### COMMAND
Based on the visual cues in the image, the interaction data, the transition contract, and any supplied nonvisual current-state block, generate the **HTML for the RESULTING UI STATE** (what the screen looks like *after* this action).
\end{Verbatim}

For a recovered active timer with $r_t=300$ seconds followed by a
$\Delta_t=360$ second wait, the deterministic rule selects
\begin{Verbatim}[fontsize=\scriptsize,breaklines=true]
FINAL POST-ACTION SCREEN SPECIFICATION. Render a recognizable Android notification shade containing one compact Clock notification card. The card's entire visible text, in top-to-bottom order, is exactly this three-line sequence: Clock → Time's up → Stop. The middle line is the sole state title and the last line is its primary button. Outside the card, use an icon-only compact quick-settings header plus the footer labels Manage and Clear all. Those three lines are the complete card; finish the page there.
\end{Verbatim}

and the action fields are
\begin{Verbatim}[fontsize=\scriptsize,breaklines=true]
{"action_type":"wait_then_open_notification_shade",
 "duration_seconds":360,
 "semantic":"Wait for the specified interval, then open the notification shade"}
\end{Verbatim}

For a running timer the same template substitutes the updated
\texttt{MM:SS} remainder and \emph{Pause}; for an inactive timer it substitutes
the no-timer notification-shade specification.

\paragraph{Temporal Deadline: gWorld user template.}
Temporal Full uses a dedicated short gWorld request rather than the generic
14-item gWorld request. The user message consists of the current-image content
part followed by

\begin{Verbatim}[fontsize=\scriptsize,breaklines=true]
Predict the mobile UI after the supplied action, then generate its complete self-contained HTML.

The action opens the notification shade over the whole screen. Discard the pre-action app layout; it must not remain behind or below the shade.

AUTHORITATIVE POST-ACTION RESULT (use once, do not restate or duplicate):
<TEMPORAL_RESULT_SPEC>

Use one simple Android-style 360x800 notification-shade DOM: one full-screen `<main>`, one `Notifications` heading, one rounded notification `<section>`, and one footer directly below the card containing `Manage` and `Clear all`. Inside the card, use one text element per specified line and exactly one visible `<button>` whenever `Stop` or `Pause` is specified; never omit that button. The arrows in the result specification mean top-to-bottom order and must not themselves be printed. Use a muted gray/blue shade and a white card so this is a recognizable GUI, not an answer sheet.

Use this exact safe box model in `<style>` (you may add only font, color, border-radius, and text styling): `*{box-sizing:border-box} html,body{margin:0;width:100%;height:100%;overflow:hidden} main{width:360px;height:800px;padding:24px 16px} section{width:328px;max-height:260px;padding:20px;margin:0 0 16px} footer{width:328px;display:flex;justify-content:space-between}`. Every card edge and footer label must remain inside x=16..344 and y=0..760.

Hard exclusions: no pre-action Settings/Date & time content, status bar, wall clock, quick-settings row, SVG, image, icon, decorative glyph, duplicated text/control/card, overlay, pseudo-element content, animation, transform, absolute/fixed positioning, external resource, script, or comment. Use no more than 10 visible body elements and 8 concise CSS rules. The body must fill 100vw by 100vh and the HTML must close within 1800 tokens.

Action: <GWORLD_TEMPORAL_ACTION_JSON>

Output exactly in this model-native format:
# Next State Reasoning: <one short sentence>
# HTML: <!DOCTYPE html><html>...complete HTML...</html>
\end{Verbatim}

The same deterministic
$\mathrm{none}/\mathrm{running}/\mathrm{fired}$ rule selects
\texttt{<TEMPORAL\_RESULT\_SPEC>}; no target observation or evaluation label
is consulted.

Clipboard and Ordered Collection use the same outer Code2World and generic
gWorld request structures as Persistent Artifact, with their corresponding
state blocks, family contracts, and surface specifications defined in
Table~\ref{tab:state_interface_families} and
Table~\ref{tab:full_interface_rules}. Neither uses the dedicated Temporal
message topology.

\subsubsection{Request Reconstruction Audit}

We verify the model-facing specification above against the requests used for
the reported staged experiment. For representative Persistent Artifact and
Temporal Deadline examples, we reconstruct all four conditions
\[
\{
\text{Canonical},
\text{+Consequence},
\text{+Surface},
\text{Full}
\}
\]
under both Code2World-8B and gWorld-8B, yielding 16 request payloads.

After reinserting the recorded image content and deterministic per-example
fields, each payload is canonicalized using the same UTF-8 JSON
serialization used during inference and hashed with SHA-256. All
$16/16$ reconstructed payloads exactly reproduce the request hashes archived
with the original predictions.

The archive retains request hashes, outputs, model usage, and decoding
metadata rather than a second copy of each raw HTTP body. The hash audit
therefore verifies equality between the reconstructed specification and the
original model-facing request boundary. In addition, the two explicitly
matched transitions are verified exhaustively:
removing the added consequence component from +Consequence recovers Canonical
byte-for-byte for all $400/400$ branches per model--family combination, and
removing the added surface component from +Surface similarly recovers
+Consequence for all $400/400$ branches.

The Full transition is deliberately not described as a one-span comparison.
It uses the native request structures specified above and therefore changes
the placement and serialization of already available state-derived
information.

\subsection{Rendering, Scoring, and Evaluation Coverage}
\label{app:state_interface_scoring}

The detailed family-specific semantic parser is specified in
Appendix~\ref{app:semantic_validation}. Briefly, generated HTML is rendered
offline in Chromium, and strict semantic evidence is taken from the initial
rendered viewport rather than from unrestricted raw DOM text. Hidden,
materially clipped, substantially occluded, and below-the-fold content cannot
satisfy the strict semantic target.

The same renderer and scorer are used for every condition of a given WM.
Code2World-8B uses the $1080\times2400$ evaluation viewport. gWorld uses its
native $360\times800$ CSS viewport and is captured at device scale factor $3$.

The complete staged experiment contains
\[
4\ \text{conditions}
\times
2\ \text{families}
\times
2\ \text{WMs}
\times
400\ \text{branches}
=
6{,}400
\]
predictions. All $6{,}400$ predictions have corresponding score records.
There are zero inference failures, zero rendering failures, zero missing
scores, and zero failures of the offline self-containedness gate. No branch is
silently removed or replaced.

A missing final HTML close tag is retained as a diagnostic rather than treated
as an automatic semantic failure, because Chromium may still recover and
render a semantically valid document. Such outputs remain in the denominator.

\subsection{World-Model Configuration}

The staged interface experiment uses frozen released WM checkpoints and
deterministic decoding. Table~\ref{tab:wm_inference_config} reports the
configuration.

\begin{table}[t]
\centering
\scriptsize
\setlength{\tabcolsep}{5pt}
\caption{
World-model configuration used for the staged state-interface evaluation.
}
\label{tab:wm_inference_config}

\begin{tabular}{lcc}
\toprule
 & Code2World-8B & gWorld-8B \\
\midrule
Checkpoint
& released Code2World-8B
& released gWorld-8B \\
Maximum output tokens
& 8192
& 8192 \\
Temperature
& 0
& 0 \\
Generation seed
& 20260912
& 20260912 \\
Repetition penalty
& 1.0
& 1.0 \\
Top-$p$/Top-$k$ override
& None
& None \\
WM parameter updates
& None
& None \\
\bottomrule
\end{tabular}
\end{table}

\subsection{Interpretation and Scope of the Interface Study}

The staged experiment separates two roles that are otherwise easy to
conflate. The predictive-state estimator determines \emph{what}
transition-relevant state is available from interaction history, whereas the
deterministic interface determines \emph{how} that recovered state is
transformed into a model-compatible post-action specification for an existing
frozen GUI world model.

Canonical state serialization alone is poorly utilized by the evaluated WMs.
Adding an explicit action-resolved consequence and an intermediate surface
specification improves several settings, but the largest gain appears only
when the same recovered state is compiled through the complete WM-native
interface. Because Full jointly changes action-conditioned consequence
resolution, surface integration, context projection, and native placement of
the resulting constraints, the Full--Surface difference is interpreted as the
effect of the complete compilation path rather than of any single internal
component.

The score decomposition further shows that the Full gain is not explained
solely by surface-validity checks. Most +Surface $\rightarrow$ Full gains for
Temporal Deadline involve a correction of the parsed timer outcome itself, and
Persistent Artifact on gWorld shows the same qualitative pattern. Surface
realization also contributes, particularly for Persistent Artifact on
Code2World. The final metric therefore reflects both state-dependent semantic
correctness and faithful realization of that consequence on the requested GUI
surface.

The Full pipeline is intentionally hybrid. For families with known transition
semantics, $g_f$ deterministically maps the recovered current state and
candidate action to the corresponding action-resolved consequence before
WM-native serialization. The frozen WM is then responsible for realizing that
resolved consequence as a coherent next GUI under its native generation
interface. This design isolates state recovery from state realization: the
former is evaluated directly at the structured-state level
(Appendix~\ref{app:mtd_ablation}), while the latter is evaluated through the
fixed downstream interface.

Taken together, the experiments characterize two complementary properties:
whether transition-relevant state can be recovered from interaction history,
and whether that recovered state can be made usable by an otherwise frozen GUI
world model. The staged interface study addresses the second property while
holding the recovered state fixed, whereas the estimator ablations address the
first under a shared downstream interface.
\section{Additional Robustness and Fidelity Evaluation}
\label{app:additional_evaluation}
\label{app:robustness}

This appendix provides the complete protocols for the distribution-shift and
next-observation-fidelity evaluations reported in the main paper. We consider
three complementary robustness settings:
\textbf{cross-application transfer},
\textbf{long-history stress}, and
\textbf{semantic interference}.
All three preserve the family-specific structured-state semantics while
changing the interaction context from which the state must be recovered.
We additionally specify the frozen evaluation subset, rendering protocol, and
feature metrics used to measure next-observation fidelity.

\subsection{Evaluation Protocol and Metrics}
\label{app:robustness_protocol}

The primary strict evaluation and the three distribution-shift suites use
different evaluation units.

\paragraph{Primary strict evaluation.}
StateAliasBench Strict contains 200 matched A/B pairs per state family.
For each pair, both branches must recover their respective canonical
structured states correctly:
\[
\operatorname{PairCorrect}_i
=
\mathbf{1}
\left[
\widehat Z_{i,A}
=
\widetilde Z_{i,A}
\;\land\;
\widehat Z_{i,B}
=
\widetilde Z_{i,B}
\right].
\]
PairCorrect is appropriate here because the purpose of the strict evaluation
is to verify that the estimator distinguishes both members of an aliased pair,
rather than recovering only one branch correctly.

\paragraph{Distribution-shift evaluation.}
Cross-app, $d$-Stress, and Interference instead contain 200 independent
trajectories per state family and do not have an A/B pairing structure.
We therefore report canonical-state Exact:
\[
\operatorname{Exact}
=
\frac{1}{N}
\sum_{i=1}^{N}
\mathbf{1}
\left[
\widehat Z_i
=
\widetilde Z_i
\right].
\]
PairCorrect is not defined for these unpaired suites. Consequently, the
absolute values reported for Strict and for the three distribution-shift
suites are not directly comparable.

All robustness evaluations are inference-only. The trained predictive-state
estimators remain frozen, and no parameter update, prompt adaptation, or
evaluation-set fine-tuning is performed on any robustness trajectory.

Table~\ref{tab:robustness_summary} reproduces the complete family-level
results summarized in the main paper.

\begin{table*}[t]
\centering
\small
\caption{
Predictive-state recovery under the primary strict setting and three
distribution shifts. Strict reports PairCorrect (\%) over 200 A/B pairs per
family; Cross-app, $d$-Stress, and Interference report canonical-state Exact
(\%) over 200 independent trajectories per family. Because these settings use
different evaluation units, Strict percentages should not be directly compared
with the distribution-shift percentages.
}
\label{tab:robustness_summary}

\begin{tabular}{lcccccccc}
\toprule
&
\multicolumn{4}{c}{\textbf{Specialists}}
&
\multicolumn{4}{c}{\textbf{Predictive State (Ours)}} \\
\cmidrule(lr){2-5}
\cmidrule(lr){6-9}

\textbf{Family}
& \textbf{Strict}
& \textbf{Cross-app}
& \textbf{$d$-Stress}
& \textbf{Interf.}
& \textbf{Strict}
& \textbf{Cross-app}
& \textbf{$d$-Stress}
& \textbf{Interf.} \\
\midrule

Persistent Artifact
& 85.5 & 87.0 & 87.0 & 87.0
& 86.0 & 75.5 & 82.5 & 82.5 \\

Clipboard
& 89.5 & 85.5 & 89.5 & 89.5
& 85.0 & 88.5 & 85.0 & 88.5 \\

Ordered Collection
& 88.0 & 89.0 & 87.0 & 86.0
& 82.0 & 83.0 & 85.5 & 79.0 \\

Temporal Deadline
& 88.5 & 83.5 & 85.0 & 87.5
& 86.5 & 72.0 & 88.0 & 89.5 \\

\midrule

\textbf{Average}
& \textbf{87.88}
& \textbf{86.25}
& \textbf{87.13}
& \textbf{87.50}
& \textbf{84.88}
& \textbf{79.75}
& \textbf{85.25}
& \textbf{84.88} \\

\bottomrule
\end{tabular}
\end{table*}

\paragraph{Aggregate behavior.}
Among the three distribution-shift suites, which share the same
single-trajectory Exact metric, cross-application transfer is the most
challenging for the unified estimator, with 79.75\% average Exact, compared
with 85.25\% under long-history stress and 84.88\% under semantic
interference. Within the cross-application suite, the lowest Exact accuracies
occur for Persistent Artifact (75.5\%) and Temporal Deadline (72.0\%), whereas
Clipboard reaches 88.5\%. These results indicate that, among the distribution
shifts evaluated here, changing application context produces the largest
aggregate degradation for the unified estimator. The primary Strict result is
reported separately with PairCorrect and is therefore not used as a numerical
baseline for these comparisons.

\subsection{Cross-Application Generalization}
\label{app:cross_app}

Cross-application evaluation tests whether the estimator preserves the same
structured-state semantics when the surrounding application context changes.
Each family contains 200 independent trajectories and retains the canonical
state schema used in the primary evaluation. The estimator remains frozen.

The construction changes either the application in which the relevant state
transition is realized or the application shown at the final prediction
boundary, depending on the family. Thus, the target semantics remain fixed
while visual appearance, application identity, paths, entities, or terminal
context change.

\paragraph{Persistent Artifact.}
Persistent-artifact operations in the primary setting are realized in
Android Files. In the cross-application suite, Copy, Move, and Delete
histories are instead realized through Markor and terminate in a Markor query
context. The application appearance, file identities, paths, and interaction
realization therefore change, while the prediction target remains
\[
\widetilde Z_t^{\mathrm{persistent}}
=
\{
\texttt{source\_exists},
\texttt{destination\_exists}
\}.
\]
The suite contains 67 Copy, 67 Move, and 66 Delete trajectories.

\paragraph{Clipboard.}
Clipboard transfer varies both the source and sink applications. The
evaluation uses Simple SMS Messenger, Chrome, Tasks.org, Joplin,
Simple Calendar, and Broccoli across 12 directed source-to-sink routes,
with source and sink always distinct. The target remains the current
clipboard type and complete payload:
\[
\widetilde Z_t^{\mathrm{clipboard}}
=
\{
\texttt{clipboard\_type},
\texttt{clipboard\_text}
\}.
\]
The state semantics are therefore independent of the applications through
which the payload is copied and subsequently queried.

\paragraph{Ordered Collection.}
For Ordered Collection, Retro Music remains the state-producing application,
while the final observation boundary is moved to an unrelated application.
The 200 trajectories terminate equally across Tasks.org, Joplin,
Simple Calendar, and Broccoli. This setting tests whether the estimator
preserves the ordered collection state after leaving the application in which
that state was constructed. The suite contains 80 playing-queue and
120 playlist trajectories, and the target remains
\[
\widetilde Z_t^{\mathrm{ordered}}
=
\{
\texttt{collection\_type},
\texttt{ordered\_items}
\}.
\]

\paragraph{Temporal Deadline.}
Temporal Deadline similarly preserves Google Clock as the state-producing
application but moves the final neutral observation to Tasks.org, Joplin,
Simple Calendar, or Broccoli, with 50 trajectories per sink application.
The structured target remains
\[
\widetilde Z_t^{\mathrm{temporal}}
=
\{
\texttt{timer\_active},
\texttt{remaining\_seconds}
\}.
\]
The evaluation therefore changes the terminal application context without
changing the timer-state semantics being recovered.

\paragraph{Interpretation.}
Cross-application transfer changes contextual cues that can otherwise correlate
with a state family while keeping the target schema fixed. The unified
estimator's lower aggregate score in this setting therefore indicates that
application context remains a meaningful source of distribution shift,
particularly for Persistent Artifact and Temporal Deadline. The evaluation
does not change the definition of the target state itself.

\subsection{Long-History Stress}
\label{app:history_depth}

Long-history stress tests whether transition-relevant state remains recoverable
as the amount of model-visible interaction preceding the prediction boundary
increases. We evaluate
\[
d\in\{4,8,16,32\},
\]
with 50 independent trajectories at each value of $d$ for every state family,
yielding 200 trajectories per family.

The operational meaning of $d$ is defined by each family-specific trajectory
construction as a controlled count of model-visible actions. The goal is not
simply to append arbitrary tokens, but to increase the amount of interaction
history through which the relevant state evidence must be retained.

\paragraph{Persistent Artifact.}
The target artifact state is first established and then followed by $d$
state-preserving interference actions before the final neutral context.
These additional actions do not alter the canonical source/destination state
of the tracked artifact.

\paragraph{Clipboard.}
After the final effective Copy event establishes the target clipboard state,
the trajectory inserts $d$ additional model-visible actions that do not modify
the current clipboard. Recovery therefore requires preserving the effective
payload despite increasing intervening interaction.

\paragraph{Ordered Collection.}
The target playlist or playing queue is first constructed, after which
state-preserving actions increase the visible history length before the final
prediction boundary. The canonical ordered contents remain unchanged.

\paragraph{Temporal Deadline.}
Temporal Deadline requires a different construction because the state itself
evolves with time. Here, $d$ controls the length of the visible
timer-interaction history rather than simply inserting neutral actions after a
single fixed state transition. The final timer-state schema and prediction
boundary remain controlled across the evaluation.

For each family, the aggregate $d$-Stress result is computed over all
200 trajectories:
\[
\operatorname{Exact}_{d\text{-Stress}}
=
\frac{
\sum_{d\in\{4,8,16,32\}}
n_d^{\mathrm{correct}}
}{200}.
\]
Since every depth contributes 50 trajectories, each depth contributes equally
to the aggregate score.

Importantly, this evaluation tests whether evidence occurring farther back in
the observed interaction history can still be recovered into the current
structured state. It does not test, and should not be interpreted as testing,
multi-step predictive sufficiency of the canonical state
$Z_t^\star$ defined in Section~\ref{sec:predictive_information_state}.

\subsection{Semantic Interference}
\label{app:semantic_interference}

History length and semantic ambiguity are distinct sources of state-recovery
difficulty. We therefore construct an interference suite with the controlled
history depth fixed at $d=8$ while varying the semantic similarity of competing
history events.

Each family contains 200 independent trajectories split into three
interference levels:
70 low,
70 medium, and
60 high.
The precise mechanism is family-specific, but the progression is designed to
make distractor events increasingly similar to the target state updates while
holding the evaluation size and final state schema fixed.

\paragraph{Persistent Artifact.}
The progression ranges from unrelated navigation, to cancelled operations on
unrelated files, and finally to near-name, same-extension file distractors.
These events increase semantic similarity to the tracked artifact operations
without changing the target artifact state.

\paragraph{Clipboard.}
The trajectory contains increasingly relevant application interactions while
ensuring that no additional effective Copy event changes the target clipboard
payload after it has been established.

\paragraph{Ordered Collection.}
Interference progresses from generic unrelated text to song- and
collection-related distractors, including target-item or playlist names.
The effective ordered target collection remains fixed.

\paragraph{Temporal Deadline.}
The suite increases the amount and semantic relevance of competing timer
interactions while maintaining a controlled final timer-state stratum and
prediction boundary. Because timer state evolves during the history, control
is imposed on the final canonical state rather than by requiring every
intermediate event to be state-preserving.

For Persistent Artifact, Clipboard, and Ordered Collection, the interference
sequence is explicitly constructed not to alter the target structured state.
Temporal Deadline instead controls the final timer-state stratum because its
state evolves during interaction.

The aggregate Interference score is a micro-average over all 200 trajectories:
\[
\operatorname{Exact}_{\mathrm{Interf}}
=
\frac{
n_{\mathrm{low}}^{\mathrm{correct}}
+
n_{\mathrm{medium}}^{\mathrm{correct}}
+
n_{\mathrm{high}}^{\mathrm{correct}}
}{70+70+60}.
\]
Because the three levels contain different numbers of trajectories, this
quantity is not the unweighted mean of the three level-wise percentages.

The fixed history depth separates this evaluation from $d$-Stress: any change
across interference levels is associated with increasingly confusable
interaction content rather than with increasing history length alone.

\subsection{Next-Observation Fidelity}
\label{app:visual_fidelity}

StateAliasBench primarily evaluates whether a predicted future realizes the
correct state-dependent semantic outcome. We additionally measure image-level
feature similarity to verify that improvements in state-sensitive correctness
are not obtained by sacrificing general next-observation fidelity.

\paragraph{Evaluation subset.}
We use a frozen set of 100 benchmark branches for each world model, balanced
across the four state families with 25 branches per family. For every selected
branch, the Observation and Predictive State conditions are compared against
the same recorded next observation $O_{t+1}$. The same branch identifiers and
ground-truth targets are used for Code2World and gWorld.

This protocol fixes the evaluated branches before comparing conditioning
methods, so differences in fidelity cannot arise from selecting different
examples for the two conditions.

\paragraph{Rendering protocol.}
Generated HTML is rendered offline with headless Chromium while external
network requests are disabled. Code2World uses a
$1080\times2400$ viewport at device scale factor 1. gWorld follows its native
rendering convention with a $360\times800$ CSS viewport at device scale
factor 3.

\paragraph{DINOv2 similarity.}
We use a local checkpoint corresponding to
\texttt{facebook/dinov2-giant}. Images are converted to RGB, resized with
shortest edge 256, and center-cropped to $224\times224$ using the associated
image processor. We use the final-layer CLS representation and
$\ell_2$-normalize the resulting feature vector.

\paragraph{SigLIP similarity.}
We additionally use a local checkpoint corresponding to
\texttt{google/siglip-so400m-patch14-384}. RGB images are resized to
$384\times384$ by the associated processor, and the projected image
representation returned by the model is $\ell_2$-normalized.

For either feature encoder $m$, let $P_m$ denote its image preprocessing and
$f_m$ its image representation. We compute
\[
e_m(I)
=
\frac{
f_m(P_m(I))
}{
\|f_m(P_m(I))\|_2
},
\]
and define prediction fidelity as cosine similarity to the recorded next
observation:
\[
s_m(\widehat O_{t+1},O_{t+1})
=
e_m(\widehat O_{t+1})^\top
e_m(O_{t+1}).
\]
For world model $w$ and conditioning method $c$, the reported score is the
arithmetic mean over the 100 selected branches:
\[
F_{m,w,c}
=
\frac{1}{100}
\sum_{i=1}^{100}
s_m(
\widehat O^{\,w,c}_{i,t+1},
O_{i,t+1}
).
\]

\begin{table}[t]
\centering
\small
\caption{
Next-observation feature similarity on the frozen 100-branch fidelity subset.
Each generated prediction is compared with the corresponding recorded next
observation.
}
\label{tab:fidelity_appendix}

\begin{tabular}{llrrr}
\toprule
WM & Metric & Observation & Ours & $\Delta$ \\
\midrule

Code2World
& DINOv2 & 0.4718 & 0.6218 & +0.1499 \\
& SigLIP & 0.7234 & 0.8552 & +0.1318 \\

\midrule

gWorld
& DINOv2 & 0.5534 & 0.6685 & +0.1151 \\
& SigLIP & 0.8021 & 0.8285 & +0.0265 \\

\bottomrule
\end{tabular}
\end{table}

The scorer uses fail-fast validation rather than excluding unsuccessful
samples. All selected comparisons and feature embeddings are present in the
archived evaluation, so every reported value is computed over the same
100 selected branches for the corresponding world model. No failed sample is
removed in a way that would change the evaluated subset across conditions.
\section{Downstream Evaluation in AndroidWorld}
\label{app:androidworld}

We evaluate whether predictive-state-conditioned world modeling transfers to
downstream interactive performance in AndroidWorld. The evaluation uses the
full set of 116 AndroidWorld tasks and considers three conditions:
\textsc{Agent}, \textsc{Agent+WM}, and \textsc{Agent+WM+State}. We evaluate
GPT-5.4 and Qwen3-VL-8B-Instruct agents.

In both WM-assisted settings, future prediction is performed by the released
Code2World-8B checkpoint. GPT-5.4 or Qwen3-VL-8B-Instruct is used as the
corresponding agent and action selector; these policy models are separate from
the Code2World checkpoint. Code2World-8B predicts action-conditioned future
GUIs as renderable HTML, which is converted into future screenshots before
action selection.

\subsection{Experimental Protocol}
\label{app:androidworld_protocol}

All three conditions are evaluated on the same 116 task instances. The
interaction budget follows the AndroidWorld protocol, with a maximum of twice
the human demonstration length for each task. No additional episode-level
timeout is imposed.

Under \textsc{Agent}, the policy directly interacts with the environment
without world-model assistance.

Under \textsc{Agent+WM}, the agent proposes candidate actions from the current
observation and task context. When world-model-assisted selection is invoked,
three candidate actions are evaluated separately by Code2World-8B. For each
candidate, Code2World-8B predicts the next GUI as HTML, and the prediction is
rendered into a future GUI image. The action selector then receives the current
observation, the candidate actions and their descriptions, and the
corresponding predicted future observations, and selects the action to execute.
The GPT-5.4 pipeline additionally permits a valid single-direct proposal to be
executed without invoking world-model-based candidate selection.

\textsc{Agent+WM+State} retains the same agent, Code2World-8B checkpoint,
candidate-generation procedure, rendering pipeline, and action selector as
\textsc{Agent+WM}. Before world-model prediction, the predictive-state
estimator recovers transition-relevant state from the model-visible interaction
history. The recovered state is incorporated only into the Code2World request
through the deterministic state-conditioning interface described in
Appendix~\ref{app:state_wm_interface}. It is not provided directly to either
the candidate-generating agent or the action selector. Its downstream effect
is therefore mediated through the predicted future GUI.

State recovery queries the four state schemas considered in this work:
Persistent Artifact, Clipboard, Ordered Collection, and Temporal Deadline.
When no state family is applicable, the system falls back to the standard
world-model condition. In the evaluated AndroidWorld trajectories, no case
produced multiple simultaneously applicable non-null state predictions.

\subsection{Downstream Results}
\label{app:androidworld_results}

Table~\ref{tab:androidworld_results} reports task success over the full
116-task benchmark. Adding Code2World-8B increases success from 56.03\% to
58.62\% for GPT-5.4 and from 41.38\% to 44.83\% for Qwen3-VL-8B.
Predictive-state conditioning further increases success to 60.34\% and
47.41\%, corresponding to absolute improvements of $+1.72$ and $+2.58$
percentage points over \textsc{Agent+WM}, respectively.

Relative to the improvement obtained by adding the WM alone, the additional
gains from predictive-state conditioning correspond to $66.4\%$ for GPT-5.4
and $74.8\%$ for Qwen3-VL-8B.

\begin{table}[t]
\centering
\small
\caption{
AndroidWorld task success (\%) on the full 116-task benchmark.
Code2World-8B is used as the world model in both WM-assisted conditions.
The final column reports the additional gain from predictive-state
conditioning relative to the improvement obtained by adding the WM alone.
Best task-success results are in \textbf{bold}; second-best results are
underlined.
}
\label{tab:androidworld_results}

\begin{tabular}{lcccc}
\toprule
Backbone
& \textsc{Agent}
& \textsc{Agent+WM}
& \textsc{Agent+WM+State}
& Rel. Gain \\
\midrule

GPT-5.4
& 56.03
& \underline{58.62}
& \textbf{60.34}
& +66.4\% \\

Qwen3-VL-8B
& 41.38
& \underline{44.83}
& \textbf{47.41}
& +74.8\% \\

\bottomrule
\end{tabular}
\end{table}

For Qwen3-VL-8B, the paired task-level comparison between
\textsc{Agent+WM} and \textsc{Agent+WM+State} yields 52 tasks solved by both
conditions, 61 failed by both, three state-conditioned rescues, and no
regressions. The rescued tasks are \texttt{FilesDeleteFile},
\texttt{FilesMoveFile}, and \texttt{SaveCopyOfReceiptTaskEval}. The aggregate
improvement therefore corresponds to a small number of additional solved
tasks rather than an exchange between previously solved and newly solved
instances.

\subsection{Representative Rescue Trajectory}
\label{app:androidworld_rescue}

Figure~\ref{fig:androidworld_rescue} presents a paired trajectory for
\texttt{FilesDeleteFile} with Qwen3-VL-8B. The
\textsc{Agent+WM} and \textsc{Agent+WM+State} conditions share the same
interaction prefix through step~5 and the same current observation at
step~6. At this point, their downstream behavior diverges through the
world-model prediction and subsequent action selection.

Under \textsc{Agent+WM}, the selected predicted future contains the near-name
file \texttt{q2a8\_banana.mp3}, whereas the target artifact is
\texttt{q2a8\_fancy\_banana.mp3}. The selector subsequently chooses
\texttt{SWIPE UP}; the trajectory continues through additional navigation and
eventually reaches the step limit without completing the task.

Under \textsc{Agent+WM+State}, the predictive-state estimator recovers a
Persistent Artifact state indicating that the source artifact exists and the
destination is absent. This recovered state is supplied only to the
Code2World-8B world-model interface. The resulting state-conditioned future
avoids the misleading near-name match, and the selector instead chooses
\texttt{SEARCH}. The subsequent trajectory searches for the exact target,
locates and long-presses the file, confirms deletion, and reaches the
completion condition after the recovered source state changes to absent.

\begin{figure*}[t]
    \centering
    \includegraphics[width=\textwidth]
    {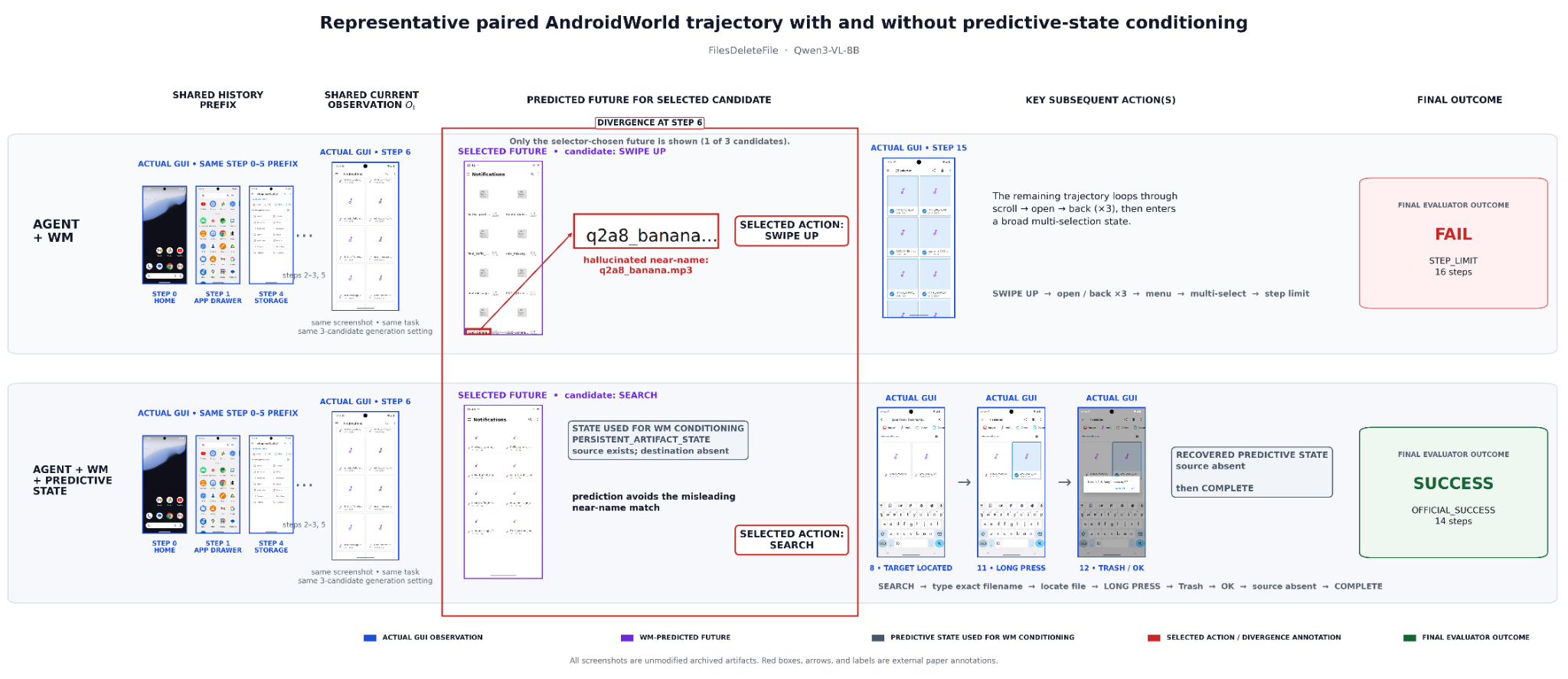}
    \caption{
    \textbf{Representative paired AndroidWorld rescue on
    \texttt{FilesDeleteFile}.}
    The \textsc{Agent+WM} and \textsc{Agent+WM+State} conditions share the
    same interaction prefix and current observation at the divergence point.
    Standard Code2World-8B prediction produces a misleading near-name file
    match and leads the selector to \texttt{SWIPE UP}. Predictive-state
    conditioning changes the simulated future and leads the selector to
    \texttt{SEARCH}, followed by successful localization and deletion of the
    target artifact. The recovered state is supplied only to the world model
    and is not directly exposed to the action selector.
    }
    \label{fig:androidworld_rescue}
\end{figure*}
\section{Limitations}
\label{app:limitations}

StateAliasBench instantiates state aliasing through four families:
Persistent Artifact, Clipboard, Ordered Collection, and Temporal Deadline.
These families cover distinct mechanisms by which transition-relevant
information can remain unobserved in the current GUI, but they represent only
a subset of the latent state variables encountered in general-purpose
interactive systems. The present empirical conclusions should therefore be
interpreted within this scope; broader coverage of application-specific and
cross-application state mechanisms remains necessary to characterize state
aliasing more comprehensively.

The structured state $\widetilde Z_t^f$ used in our implementation is a
family-specific approximation to the canonical predictive state
$Z_t^\star$. It is therefore not expected to recover all
transition-relevant information contained in the interaction history. In
general,
\[
I(H_t;O_{t+1}\mid O_t,\widetilde Z_t^f,A_t)
\]
may remain nonzero, leaving residual representation error even after state
recovery. In addition, supplying transition-relevant state does not eliminate
the approximation error of the frozen world model itself. Consequently, the
proposed state-conditioning mechanism can reduce both state-induced ambiguity
and downstream transition error, but does not imply complete recovery of the
underlying transition law.

\end{document}